\newrobustcmd{\B}{\boldmath}
\pgfplotsset{compat=1.11}
\newcommand{\citep}{\parencite}
\title{An Operator Splitting View of Federated Learning}
\author{Saber Malekmohammadi\textsuperscript{1,2}\thanks{These authors contributed equally to this work}
, Kiarash Shaloudegi\textsuperscript{2}\footnotemark[1], Zeou Hu \textsuperscript{1,2}, Yaoliang Yu\textsuperscript{1}
\\[2em]
University of Waterloo, Canada\textsuperscript{1} \\
\texttt{\{saber.malekmohammadi, zeou.hu, yaoliang.yu\}@uwaterloo.ca} \\[1em]
Huawei Noah's Ark Lab, Montreal, Canada\textsuperscript{2} \\
\texttt{kiarash.shaloudegi@huawei.com}
\date{}
}
\newcommand{\gray}[1]{\textcolor{gray}{#1}}
\newcommand{\FL}{\texttt{FL}\xspace}
\newcommand{\FA}{\texttt{FedAvg}\xspace}
\newcommand{\FP}{\texttt{FedProx}\xspace}
\newcommand{\FR}{\texttt{FedRP}\xspace}
\newcommand{\FS}{\texttt{FedSplit}\xspace}
\newcommand{\FPI}{\texttt{FedPi}\xspace}
\newcommand{\numuser}{m}
\newcommand{\numdim}{d}
\newcommand{\userind}{i}
\newcommand{\localstep}{j}
\newcommand{\numstep}{k}
\newcommand{\Fm}{\mathsf{F}}
\newcommand{\Tm}{\mathsf{T}}
\newcommand{\id}{\mathrm{id}}
\newcommand{\wbs}{\mathsf{w}}
\newcommand{\zbs}{\mathsf{z}}
\newcommand{\ubs}{\mathsf{u}}
\newcommand{\vbs}{\mathsf{v}}
\newcommand{\Ksf}{\mathsf{K}}
\newcommand{\cl}{\mathop{\mathrm{cl}}}
\newcommand{\dist}{\mathsf{dist}}
\newcommand{\dom}{\mathop{\mathrm{dom}}}
\newcommand{\ran}{\mathop{\mathrm{rge}}}
\newcommand{\dual}[2]{\langle #1; #2\rangle}
\newcommand{\ra}[2][]{\mathsf{R}_{#2}^{#1}}
\newcommand{\grad}[2][]{\mathsf{G}_{#2}^{#1}}
\newcommand{\hprox}[2][]{\mathsf{\hat{P}}_{#2}^{#1}}
\newcommand{\hgrad}[2][]{\mathsf{\hat{G}}_{#2}^{#1}}
\begin{document}

\maketitle

\begin{abstract}
 
 Over the past few years, the federated learning (\FL) community has witnessed a proliferation of new \FL algorithms. However,  our understating of the theory of \FL is still fragmented, and a thorough, formal comparison of these algorithms remains elusive. Motivated by this gap,  we show that many of the existing \FL algorithms can be understood from an operator splitting point of view. This unification allows us to compare different algorithms with ease, to refine previous convergence results and to uncover new algorithmic variants. In particular, our analysis reveals the vital role played by the step size in \FL algorithms. The unification also leads to a streamlined and economic way to accelerate \FL algorithms, without incurring any communication overhead. We perform numerical experiments on both convex and nonconvex models to validate our findings.  
\end{abstract}

\section{Introduction}
The accompanying (moderate) computational power of modern smart devices such as phones, watches, home appliances, cars, \etc, and the enormous data accumulated from their interconnected ecosystem have fostered new opportunities and challenges to train/tailor modern deep models. Federated learning (\FL), as a result, has recently emerged as a massively distributed framework that enables training a shared or personalized model without infringing user privacy. Tremendous progress has been made since the seminal work of \textcite{McMahanMRHA17}, including algorithmic innovations \cite{tian_2018_fedprox,Yurochkin19a,ReddiCZG20,pathak2020fedsplit,HuoYGCH20,WangYSPK20,LiKQR20}, convergence analysis \cite{Khaled2019FirstAO,xiang_convergence_2019,KhaledMR20,MalinovskiyKGCR20,GorbunovHR21}, personalization \cite{Mansour2020,DinhTN20,HeteroFL2021,PersonalizedFL2021}, privacy protection \cite{NasrSH19,Augenstein2020Generative}, model robustness \cite{bhagoji_adversariallens_2019,BagdasaryanVHES20,SunKSM19,ReisizadehFPJ20}, fairness \cite{Fedmgda,MohriSS19,LiSBS20}, standardization \cite{Caldas18,He20}, applications \cite{SmithCST17}, just to name a few. We refer to the excellent surveys \cite{Kairouz2019s,LiSTS19,YangLCT19} and the references therein for the current state of affairs in \FL.

The main goal of this work is to take a closer examination of some of the most popular \FL algorithms, including \FA \cite{McMahanMRHA17}, \FP \cite{tian_2018_fedprox}, and \FS \cite{pathak2020fedsplit}, by connecting them with the well-established theory of operator splitting in optimization. In particular, we show that \FA corresponds to   forward-backward splitting and we demonstrate a trade-off in its step size and number of local epochs, while a similar phenomenon has also been observed in \cite{xiang_convergence_2019,pathak2020fedsplit,MalinovskiyKGCR20,Charles21}. \FP, on the other hand, belongs to backward-backward splitting, or equivalently, forward-backward splitting on a related but regularized problem, which has been somewhat overlooked in the literature. Interestingly, our results reveal that the recent personalized model in \cite{DinhTN20} is exactly the problem that \FP aims to solve. Moreover, we show that when the step size in \FP diminishes (sublinearly fast), then it actually solves the original problem, which is contrary to the observation in \cite{pathak2020fedsplit} where the step size is fixed and confirms again the importance of step size in \FL.

For \FS, corresponding to Peaceman-Rachford splitting \cite{PeacemanRachford55,LionsMercier79}, we show that its convergence (in theory) heavily hinges on the strong convexity of the objective and hence might be less stable for nonconvex problems.

Inspecting \FL through the lens of operator splitting also allows us to immediately uncover new \FL algorithms, by adapting existing splitting algorithms. Indeed, we show that Douglas-Rachford splitting \cite{DouglasRachford56,LionsMercier79} (more precisely the method of partial inverse \cite{Spingarn85a}) yields a (slightly) slower but more stable variant of \FS, while at the same time shares a close connection to \FP: the latter basically freezes the update of a dual variable in the former. We also propose \FR by combining the reflector in \FS and the projection (averaing) in \FA and \FP, and extend an algorithm of \textcite{BauschkeKruk04}. We improve the convergence analysis of \FR and empirically demonstrate its competitiveness with other \FL algorithms. We believe these results are just the tip of the ice-berg and much more progress can now be expected: the marriage between \FL and operator splitting simply opens a range of new possibilities to be explored.

Our holistic view of \FL algorithms suggests a very natural unification, building on which we show that the aforementioned algorithms reduce to different parameter settings in a grand scheme. We believe this is an important step towards standardizing \FL from an algorithmic point of view, in addition to standard datasets,  models and evaluation protocols as already articulated in \cite{Caldas18,He20}. Our unification also allows one to compare and implement different \FL algorithms with ease, and more importantly to accelerate them in a streamlined and economic fashion. Indeed, for the first time we show that  Anderson-acceleration \cite{Anderson65,FuZB20}, originally proposed for nonlinear fixed-point iteration, can be adapted to accelerate existing \FL algorithms and we provide practical implementations that do not incur any communication overhead. We perform experiments on both convex and nonconvex models to validate  our findings, and compare the above-mentioned \FL algorithms on an equal footing.

We proceed in \S\ref{sec:bg} with some background introduction. Our main contributions are presented in \S\ref{sec:sp}, where we connect \FL with operator splitting, shed new insights on existing algorithms, suggest new algorithmic variants and refine convergence analysis, and in \S\ref{sec:acc}, where we present a unification of current algorithms and propose strategies for further acceleration. We conclude in \S\ref{sec:con} with some future directions. Due to space limit, all proofs are deferred to \Cref{sec:app-proof}.

\section{Background}
\label{sec:bg}
In this \namecref{sec:bg} we recall the federated learning (\FL) framework of \textcite{McMahanMRHA17}.
We consider $\numuser$ users (edge devices), where the $\userind$-th user aims at minimizing a function $f_{\userind}: \RR^{\numdim} \to \RR, \userind = 1, \ldots, \numuser$, defined on a shared model parameter  $\wv\in \RR^{\numdim}$. Typically, each user function $f_{\userind}$ depends on the respective  user's local (private) data $\Dc_{\userind}$. 
The main goal in \FL is to \emph{collectively and efficiently} optimize \emph{individual} objectives $\{f_{\userind}\}$ in a decentralized, privacy-preserving and low-communication way.

Following \textcite{McMahanMRHA17}, many existing \FL algorithms fall into the natural formulation that optimizes the (arithmetic) average performance:
\begin{align}
\label{eq:wFL}
\min_{\wv\in \RR^{\numdim}} ~ f(\wv), ~~\where~~ f(\wv) := \sum_{{\userind}=1}^{\numuser} \lambda_{\userind} f_{\userind}(\wv).
\end{align}
The weights $\lambda_{\userind}$ here are nonnegative and w.l.o.g. sum to 1. Below, we assume they are specified \emph{beforehand} and fixed throughout.

To facilitate our discussion, we recall the Moreau envelope and proximal map of a  function\footnote{We allow the function $f$ to take $\infty$ when the input is out of our domain of interest.} $f: \RR^d \to \RR \cup\{\infty\}$, defined respectively as:
\begin{align}
\label{eq:prox}
\env[\eta]{f}(\wv) = \min_{\xv} ~\tfrac{1}{2\eta}\|\xv - \wv\|_2^2 + f(\xv), \qquad \prox[\eta]{f}(\wv) = \argmin_{\xv} ~\tfrac{1}{2\eta}\|\xv - \wv\|_2^2 + f(\xv),
\end{align}
where $\eta > 0$ is a parameter acting similarly as the step size. We also define the reflector 
\begin{align}
\ra[\eta]{f}(\wv) &= 2 \prox[\eta]{f}(\wv) - \wv.
\end{align}
As shown in \cref{fig:proxprojection}, when $f = \iota_C$ is the indicator function of a (closed) set $C \subseteq \RR^d$, $\prox[\eta]{f}(\wv) \equiv \prox[]{C}(\wv)$ is the usual (Euclidean) projection onto the set $C$ while $\ra[\eta]{f}(\wv) \equiv \ra[]{C}(\wv)$ is the reflection of $\wv$ \wrt $C$.

\begin{figure}
\centering
\begin{tikzpicture}[
    roundnode/.style={circle, draw=black!100, fill=black!100, very thick,inner sep=0pt, minimum size=3pt},
    projroundnode/.style={circle, draw=black!100, fill=black!100, very thick,inner sep=0pt, minimum size=3pt},
    projroundnode2/.style={circle, draw=black!100, fill=black!100, very thick,inner sep=0pt, minimum size=3pt},
    squarednode/.style={rectangle, draw=black!60, fill=black!5, very thick, minimum size=5mm},
    ellipseset/.style={ellipse, draw=red!60, fill=red!5, thick, minimum size=30mm},
    ]
    \node[ellipse, draw = red!100, text = black, fill = blue!10, minimum width = 8cm, minimum height = 4cm, line width=0.3mm] (e) at (0,0) (cvxset) {$C$ ~~~~~~~~~~};
    \node[roundnode]     at (5.5, 1.9)      (node)        {};
    \node[projroundnode] at (3.75, 0.65)    (projnode)        {};
    
    \node[roundnode]     at (2.7, 2.6)      (node2)        {};
    \node[projroundnode2] at (2.0, 0.55)    (projnode2)        {};

    \draw[<-, line width=0.3mm] (projnode) -- (node) node[right] {$\mathbf{~w}$};
    \draw[<-, line width=0.3mm] (projnode) node[right] {$~P_{\iota_C}^\eta(\mathbf{w})$} -- (node);
    \draw[-, line width=0.4mm] (3.2,1.35) -- (1.55,1.95) ;
    
    \node[rectangle,
    draw = black,
    minimum width = 0.2cm, 
    minimum height = 0.2cm, rotate around={68:(0,0)}] (r) at (2.3, 1.82) {};
    
    \draw[<-, line width=0.3mm] (projnode2) -- (node2) node[right] {$\mathbf{~v}$};
    \draw[<-, line width=0.3mm] (projnode2) node[left] {$~R_{\iota_C}^\eta(\mathbf{v})~$}-- (node2) node[midway,left] {};
    
    \draw[-, line width=0.4mm] (3.42,1.18) -- (4.15,0.08) ;
    
    \node[rectangle,
    draw = black,
    minimum width = 0.2cm, 
    minimum height = 0.2cm, rotate around={33:(0,0)}] (r) at (3.80, 0.86) {};
\end{tikzpicture}
\caption{When $f = \iota_C$ is the indicator function of a (closed) set $C$, $\prox[\eta]{f}(\wv)$ is the Euclidean projection of $\wv$ onto the set $C$. Similarly, $\ra[\eta]{f}(\vv)$ is the Euclidean reflection of $\vv$ w.r.t. the set $C$.}
\label{fig:proxprojection}
\end{figure}
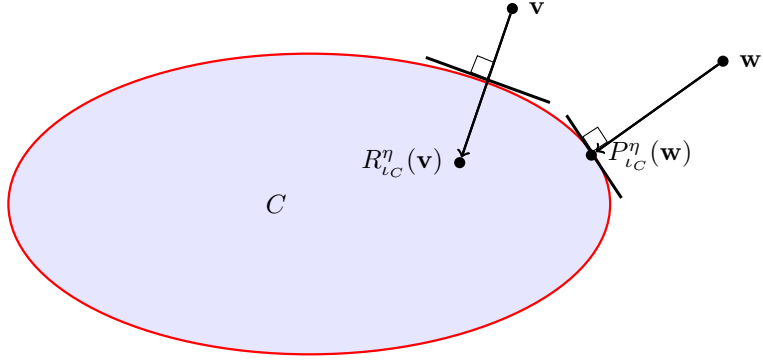

We are now ready to reformulate our main problem of interest \eqref{eq:wFL} in a product space:
\begin{align}
\label{eq:FL-prod}
\min_{\wbs \in H} ~ \fs(\wbs) = \inner{\one}{\fv(\wbs)} , ~\where~ \fv(\wbs) := \big(f_1(\wv_1), \ldots, f_m(\wv_m)\big), 
\end{align}
$\one$ is the vector of all 1s, 
\begin{align}
H := \{ \wbs = (\wv_1, \ldots, \wv_{\numuser}) \in \RR^{dm}: ~ \wv_1 = \cdots = \wv_{\numuser} \}
\end{align} 
and 
we equip with the inner product 
\begin{align}
\inner{\wbs}{\zbs} = \inner{\wbs}{\zbs}_{\lambdav} := \sum_{\userind} \lambda_\userind \wv_{\userind}^\top \zv_{\userind}.
\end{align} 
Note that the complement $H^\perp = \{ \wbs: \sum_{\userind} \lambda_\userind \wv_\userind = \zero\}$, and
\begin{align}
\prox[]{H}(\wbs) = (\bar\wv, \ldots, \bar\wv), ~\where~ \bar\wv := \sum\nolimits_{\userind} \lambda_{\userind} \wv_{\userind}, ~ \mbox{and} ~ \ra[]{H}(\wbs) = ( 2\bar\wv-\wv_1, \ldots, 2\bar\wv - \wv_{\numuser} ).
\end{align}
We define the (forward) gradient update map \wrt to a (sub)differentiable function $f$: 
\begin{align}
\grad[\eta]{f} := \id - \eta \cdot\partial f, ~\qquad~ \wv \mapsto \wv - \eta \cdot \partial f(\wv).
\end{align}
When $f$ is differentiable and convex, we note that $\ra[\eta]{f} = \grad[\eta]{f} \circ \prox[\eta]{f}$ and $\nabla \env[\eta]{f} = \tfrac{\id - \prox[\eta]{f}}{\eta}$.

\section{\FL as operator splitting}
\label{sec:sp}
Following \textcite{pathak2020fedsplit}, in this section we interpret existing \FL algorithms (\FA, \FP, \FS) from the operator splitting point of view. We reveal the importance of the step size, obtain new convergence guarantees, and uncover some new and accelerated algorithmic variants. 

\subsection{\FA as forward-backward splitting}
The \FA algorithm of \textcite{McMahanMRHA17} is essentially a $k$-step version of the forward-backward splitting approach of \textcite{Bruck77}: 
\begin{align}
\label{eq:FB}
\wbs_{t+1} \gets \prox{H} \grad[\eta_t]{\fs, k} \wbs_t, ~~\where~~ \grad[\eta_t]{\fs, k} := \underbrace{\grad[\eta_t]{\fs} \circ \grad[\eta_t]{\fs} \circ \cdots \circ \grad[\eta_t]{\fs}}_{k \mbox{ times }},
\end{align}
where we perform $k$ steps of (forward) gradient updates \wrt $\fs$ followed by 1 (backward) proximal update \wrt $H$. 
To improve efficiency, \textcite{McMahanMRHA17} replace $\grad[\eta_t]{\fs}$ with $\hgrad[\eta_t]{\fs}$ where the (sub)gradient is approximated on a minibatch of training data and $\prox{H}$ with $\hprox{H}$ where we only average over a chosen subset $I_t \subseteq [\numuser]:=\{1, \ldots, \numuser\}$ of users. 
Note that the forward step $\grad[\eta_t]{\fs, k}$ is performed in parallel at the user side while the backward step $\prox{H}$ is performed at the server side.

The number of local steps $k$ turns out to be a key factor: Setting $k=1$ reduces to the usual (stochastic) forward-backward algorithm and enjoys the well-known convergence guarantees. 
On the other hand, setting $k=\infty$ (and assuming convergence on each local function $f_{\userind}$) amounts to (repeatedly) averaging the chosen minimizers of $f_{\userind}$'s, and eventually converges to the average of minimizers of all $\numuser$ user functions. In general, the performance of the fixed point solution of \FA appears to have a dependency on the number of local steps $k$ and the step size $\eta$. Let us illustrate this with quadratic functions $f_i(\wv) = \tfrac12\|A_i\wv - \bv_i\|_2^2$ and non-adaptive step size $\eta_t \equiv \eta$, where we obtain the fixed-point $\wv_{\FA}^{*}(k)$ of \FA in closed-form \parencite{pathak2020fedsplit}:
\begin{align}
\textstyle
\wv_{\FA}^{*}(k) 
= \big(\sum_{\userind=1}^{\numuser} \frac{1}{\numstep}\sum_{\localstep=0}^{\numstep-1} A_{\userind}^\top A_{\userind} (I \!-\! \eta A_{\userind}^\top A_{\userind})^{\localstep}\big)^{\!-1}
\big(\sum_{\userind=1}^{\numuser} \frac{1}{\numstep}\sum_{\localstep=0}^{\numstep-1} (I \!-\! \eta A_{\userind}^\top A_{\userind})^{\localstep} A_{\userind}^\top \bv_{\userind}\big).
\end{align}

For small $\eta$ we may apply Taylor expansion and ignore the higher order terms:
\begin{align}
\label{eq:ls-appx}
\textstyle
\frac{1}{\numstep} \sum_{\localstep=0}^{\numstep-1} (I-\eta A_{\userind}^\top A_{\userind})^{\localstep} 
\approx \frac{1}{\numstep}\sum_{\localstep=0}^{\numstep-1} (I-\localstep \eta A_{\userind}^\top A_{\userind})
= I - \tfrac{\eta (\numstep-1)}{2} A_{\userind}^\top A_{\userind}.
\end{align}

Thus, we observe that the fixed point $\wv_{\FA}^{*}(k)$ of \FA depends on $\eta(k-1)$, up to higher order terms. In particular, when $k=1$, the fixed point does not depend on $\eta$ (as long as it is small enough to guarantee convergence), but for $k > 1$, the solution that \FA converges to does depend on $\eta$. Moreover, the final performance is almost completely determined by $\eta(k-1)$ in this quadratic setting when $\eta$ is small, as we verify in experiments, see \Cref{fig:fixed_pr_stationary} left (details on generating $A_i$ and $\bv_i$ can be found in \cref{sec:lslr_exp_setup}). 

\Cref{fig:trade_off} in the appendix further illustrates this trade-off  where larger local epochs $\numstep$ and learning rate $\eta$ leads to faster convergence (in terms of communication rounds) during the early stages, at the cost of a worse final solution. Similar results are also reported in e.g. \citep{xiang_convergence_2019,pathak2020fedsplit,MalinovskiyKGCR20,Charles21}.
In \Cref{fig:fixed_pr_stationary}, we run \FA in \eqref{eq:FB} on least squares, logistic regression and a nonconvex CNN on the MNIST dataset \citep{mnist}. The experiments are run with pre-determined numbers of  communication rounds and different configurations of local epochs $k$ and learning rates $\eta$. (Complete details of our experimental setup are given in \Cref{sec:app-setup}.\footnote{All nonconvex experimental results in this paper are averaged over 4 runs with different random seeds.}) We examine the dependencies of \FA's final performance regarding $k$ and $\eta$. We can conclude that, in  general, smaller local epochs and learning rates gives better final solutions (assuming sufficient communication rounds to ensure convergence). Moreover, for least squares and logistic regression, the product  $\eta(k-1)$ we derived in \eqref{eq:ls-appx} almost completely determines the final performance, when $k$ and $\eta$ are within a proper range. For the nonconvex CNN (Fig \ref{fig:fixed_pr_stationary}, right), especially with limited communications, the approximation in \eqref{eq:ls-appx} is too crude to be indicative.
\begin{figure}[t]
\centering
\includegraphics[width=0.33\columnwidth]{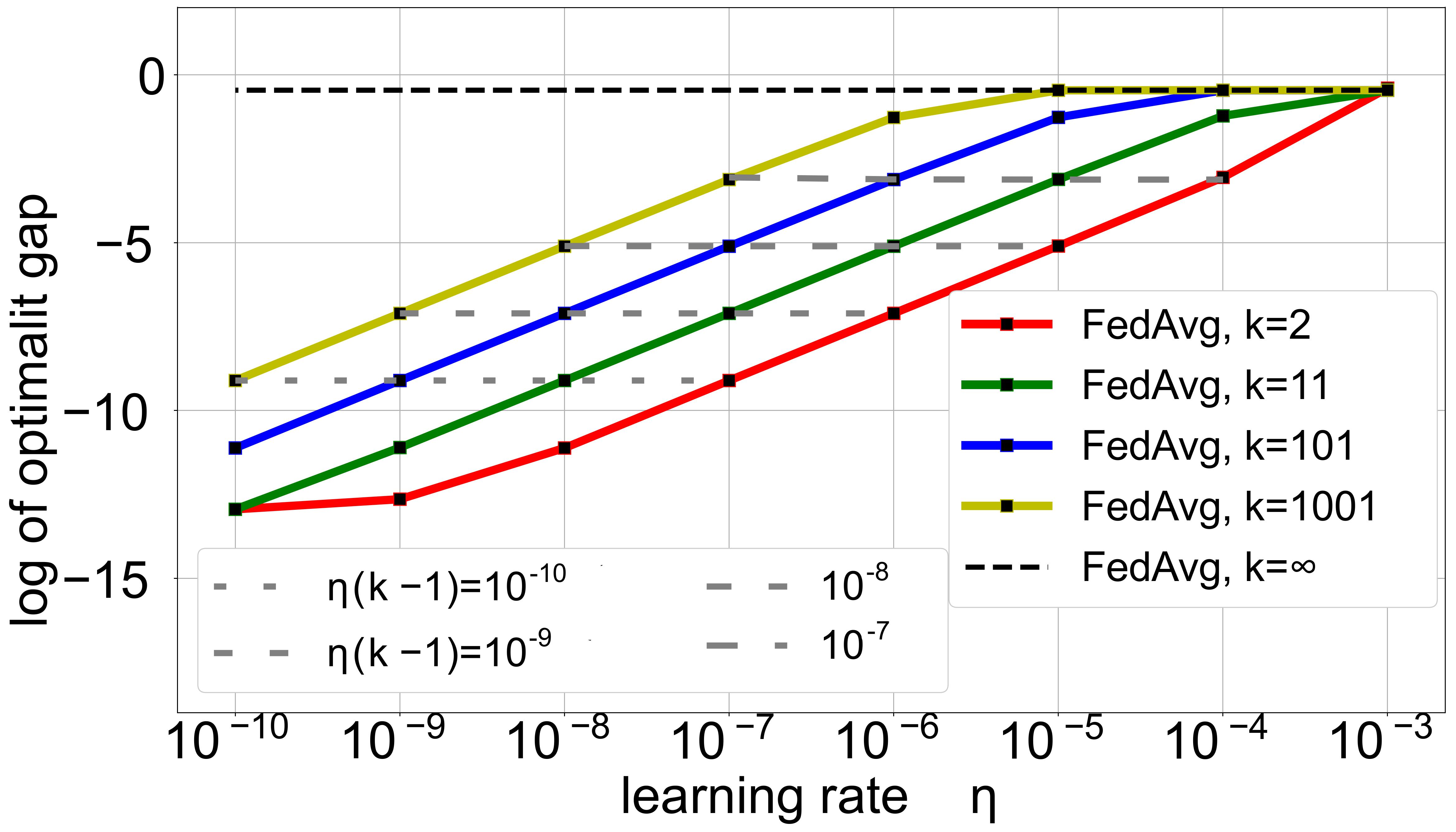}
\includegraphics[width=0.32\columnwidth]{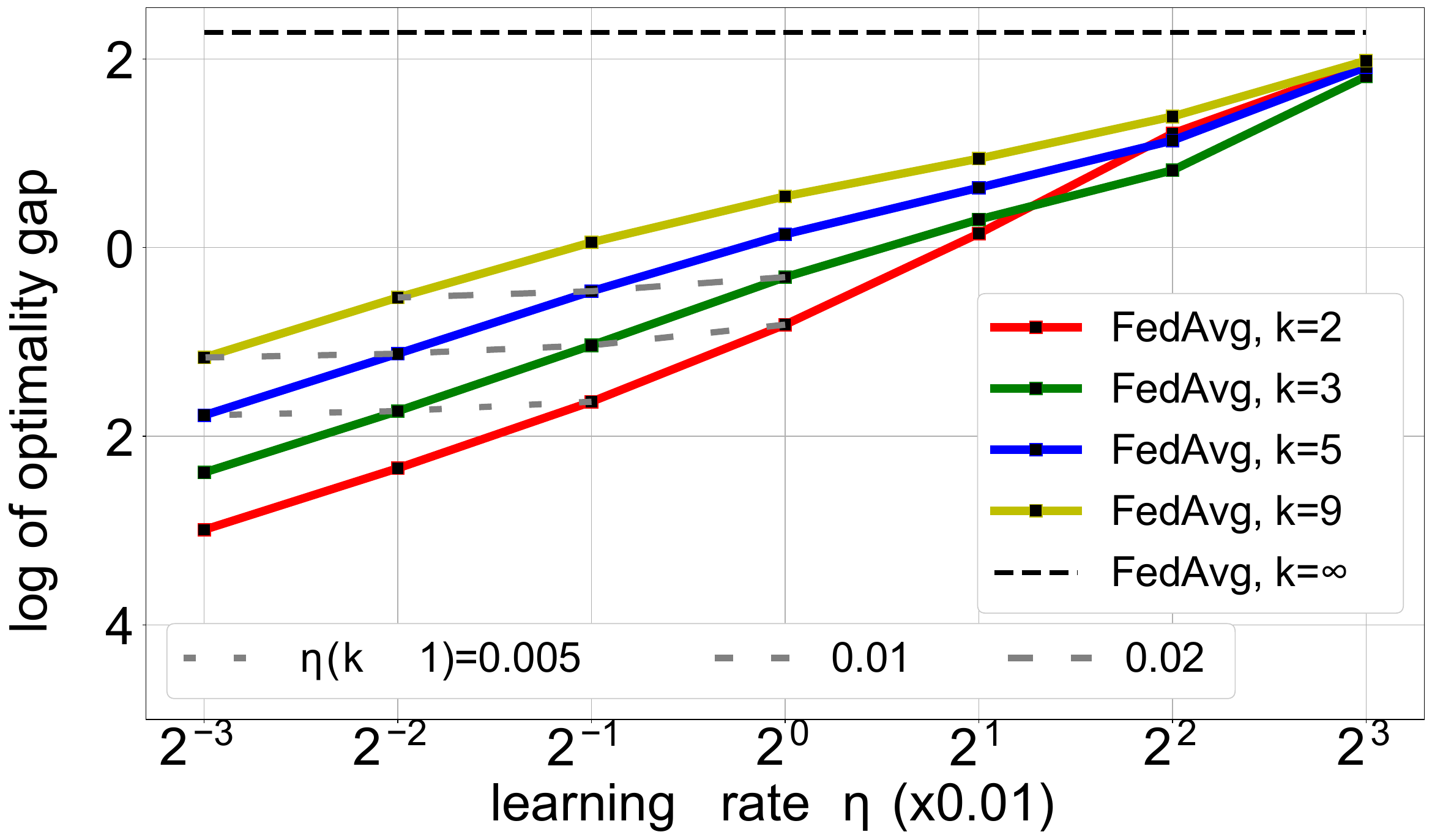}
\includegraphics[width=0.32\columnwidth]{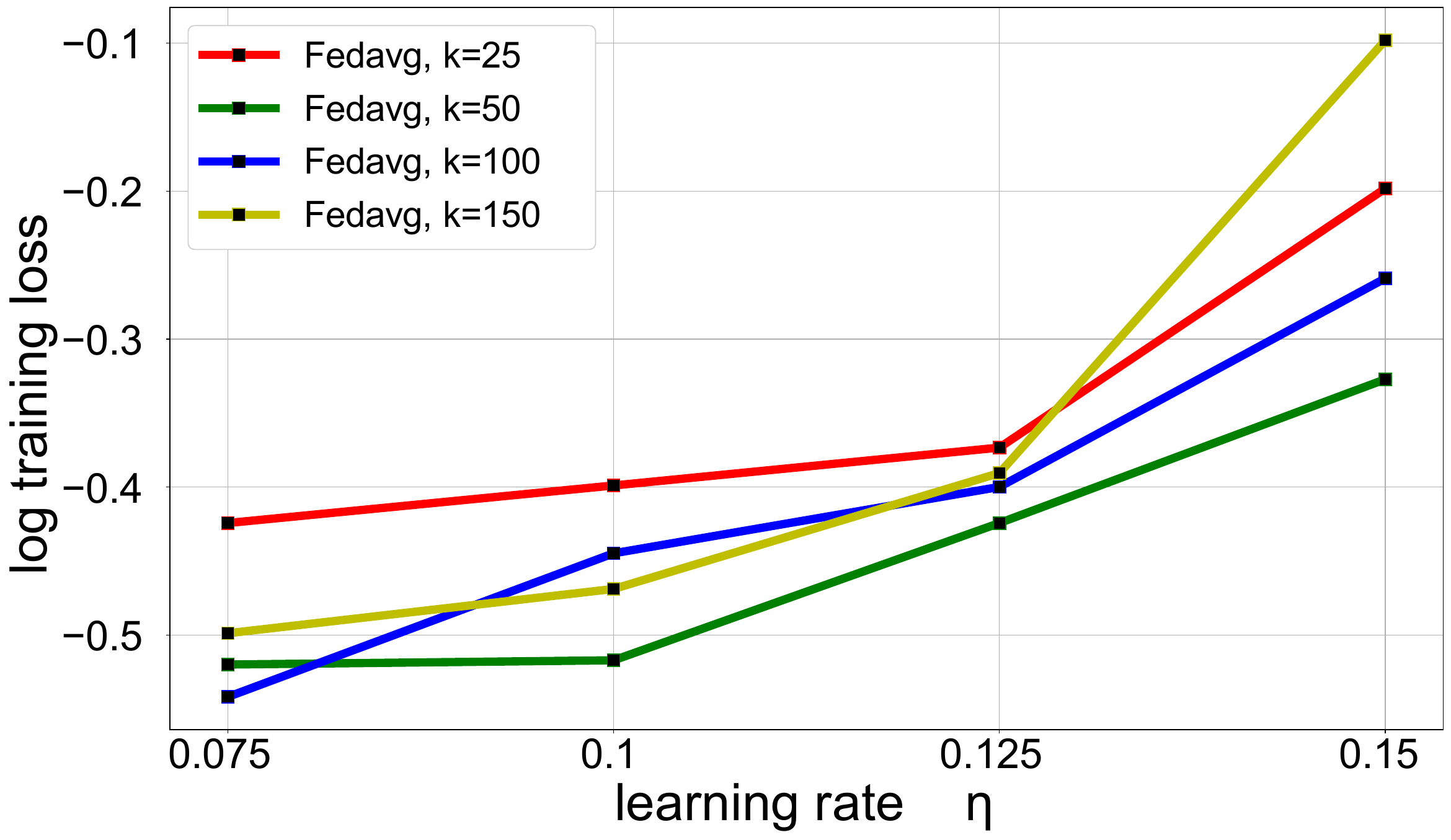}
\caption{Optimality gap $\{f(\wv_{\FA}^{*})-f^{*}\}$ or training loss $\{f(\wv_{\FA}^{*})\}$ of (approximate) fixed-point solutions of \FA for different learning rates $\eta$ and local epochs $k$. Different colored lines are for different numbers of local epochs,
and dashed lines for different product values $\eta(k-1)$. Left: least squares (closed-form solution); Middle: logistic regression ($6000$ communication rounds); Right: nonconvex CNN on the MNIST dataset ($200$ communication rounds). } 

\label{fig:fixed_pr_stationary}
\end{figure}

\subsection{\FP as backward-backward splitting}
\label{sec:fp}
The recent \FP algorithm  \cite{tian_2018_fedprox} replaces the gradient update in \FA with a proximal update:
\begin{align}
\label{eq:BB}
\wbs_{t+1} = \prox[]{H}\prox[\eta_t]{\fs}\wbs_t,
\end{align}
where as before we may use a minibatch to approximate $\prox[\eta]{\fs}$ or select a subset of users to approximate $\prox[]{H}$. Written in this way, it is clear that \FP is an instantiation of the backward-backward splitting algorithm of \textcite{Lions78,Passty79}. In fact, this algorithm traces back to the early works of \eg \textcite{Cimmino38,LionsTemam66,Auslender69}, sometimes under the name of the Barycenter method. It was also rediscovered by \textcite{Yu13a,YuZMX15} in the ML community under a somewhat different motivation. \textcite{pathak2020fedsplit} pointed out that \FP does not solve the original problem \eqref{eq:wFL}. While technically correct, their conclusion did not take many other subtleties into account, which we explain next.

Following \textcite{BauschkeCR05}, we first note that, with a constant step size $\eta_t \equiv \eta$, \FP is actually equivalent as \FA but applied to a ``regularized'' problem:
\begin{align}
\label{eq:regFL}
\min_{\wbs\in H} ~~ \tilde \fs(\wbs), ~~\where~~\tilde \fs(\wbs) := \inner{\one}{\env[\eta]{\fv}(\wbs)}, ~ \mbox{ and } ~ \env[\eta]{\fv}(\wbs) = \big( \env[\eta]{f_1}(\wv_1), \ldots, \env[\eta]{f_m}(\wv_m) \big).
\end{align}
Interestingly, \textcite{DinhTN20} proposed exactly \eqref{eq:regFL} for the purpose of personalization, which we now realize is automatically achieved if we apply \FP to the original formulation \eqref{eq:wFL}. Indeed, $\nabla \env[\eta]{\fv}(\wbs) = [\wbs - \prox[\eta]{\fv}(\wbs) ] / \eta$ hence $\grad[\eta]{\tilde f}(\wbs) = \wbs - \eta \nabla \tilde\fs(\wbs) =  \prox[\eta]{\fv}(\wbs)=\big( \prox[\eta]{f_1}(\wv_1), \ldots, \prox[\eta]{f_m}(\wv_m) \big)$. This simple observation turns out to be crucial in understanding \FP. 

Indeed, a significant challenge in \FL is user heterogeneity (a.k.a. non-iid distribution of data), where the individual user functions $f_i$ may be very different (due to distinct user-specific data). 
But, we note that\footnote{These results are classic and well-known, see \eg \textcite{RockafellarWets98}.} as $\eta \to 0$, $\env[\eta]{f} \to f$ (pointwise or uniformly if $f$ is Lipschitz) while $\env[\eta]{f} \to \min f$ as $\eta \to \infty$. In other words, a larger $\eta$ in \eqref{eq:regFL} ``smoothens'' heterogeneity in the sense that the functions $\env[\eta]{f_i}$ tend to have similar minimizers (while the minimum values may still differ). We are thus motivated to understand the convergence behaviour of \FP, for small and large $\eta$, corresponding to the original problem \eqref{eq:wFL} and the smoothened problem, respectively. In fact, we can even adjust $\eta$ dynamically. Below, for simplicity, we assume full gradient (\ie large batch size) although extensions to stochastic gradient are possible.

\begin{restatable}{theorem}{fphom}
Assuming each user participates indefinitely, the step size $\eta_t$ is bounded from below (\ie $\liminf_t \eta_t > 0$), the user functions $\{f_i\}$ are convex, and homogeneous in the sense that they have a common minimizer, \ie
$\Fm := \bigcap_i \argmin_{\wv_i} ~ f_i(\wv_i) \ne \emptyset$, 
then the iterates of \FP converge to a point in $\Fm$, which is a correct solution of \eqref{eq:wFL}.
\label{thm:fp-hom}
\end{restatable}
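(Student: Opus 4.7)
The plan is to view the \FP iteration $\wbs_{t+1} = T_t(\wbs_t)$ with $T_t := \prox{H} \circ \prox[\eta_t]{\fs}$ as a Fej\'er-monotone sequence with respect to a common set of fixed points, and then identify every subsequential limit as an element of $\Fm$.

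First I would check that any diagonal lift $\wbs^* := (\wv^*, \ldots, \wv^*)$ of $\wv^* \in \Fm$ is a fixed point of \emph{every} $T_t$: for any $\eta > 0$ we have $\prox[\eta]{f_i}(\wv^*) = \wv^*$ because $\wv^*$ already minimizes $f_i$, and $\wbs^* \in H$ is fixed by $\prox{H}$. Next, in the weighted Hilbert space $(\RR^{dm}, \inner{\cdot}{\cdot}_{\lambdav})$ both $\prox{H}$ and $\prox[\eta_t]{\fs}$ are firmly non-expansive, so setting $\pbs_t := \prox[\eta_t]{\fs}(\wbs_t)$ and applying the firm non-expansive inequality to each of the two steps (using that $\wbs^*$ is fixed by both) yields
\[
\|\wbs_{t+1} - \wbs^*\|^2 + \|\wbs_t - \pbs_t\|^2 + \|\pbs_t - \wbs_{t+1}\|^2 \;\le\; \|\wbs_t - \wbs^*\|^2.
\]
This immediately gives Fej\'er monotonicity of $\{\wbs_t\}$ with respect to every such $\wbs^*$, boundedness of the iterates, and, by telescoping, square-summability (hence vanishing) of both gaps $\wbs_t - \pbs_t$ and $\pbs_t - \wbs_{t+1}$.

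Since $\wbs_t \in H$ for $t \ge 1$, write $\wbs_t = (\wv_t, \ldots, \wv_t)$. The optimality condition for the proximal map gives $(\wv_t - \bar\wv_{t,i})/\eta_t \in \partial f_i(\bar\wv_{t,i})$, where $\bar\wv_{t,i} := \prox[\eta_t]{f_i}(\wv_t)$. Along any subsequence $\wv_{t_k} \to \wv^\infty$, the vanishing of $\wbs_t - \pbs_t$ forces $\bar\wv_{t_k,i} \to \wv^\infty$, and the assumption $\liminf_t \eta_t > 0$ ensures $(\wv_{t_k} - \bar\wv_{t_k,i})/\eta_{t_k} \to 0$. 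Outer semicontinuity of $\partial f_i$ (a closed convex function) then yields $0 \in \partial f_i(\wv^\infty)$ for every $i$, i.e.\ $\wv^\infty \in \Fm$. A standard Opial-type argument, exploiting Fej\'er monotonicity with respect to every fixed point \emph{simultaneously}, promotes this to full-sequence convergence of $\wbs_t$ to the diagonal lift of some $\wv^\infty \in \Fm$. Since any element of $\Fm$ also minimizes $f = \sum_i \lambda_i f_i$, the limit solves \eqref{eq:wFL}.

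The main technical obstacle is the time-varying step size $\eta_t$: it precludes a direct appeal to off-the-shelf averaged-operator convergence theorems that assume a single operator, so the Fej\'er argument has to be redone by hand. The uniform lower bound $\liminf_t \eta_t > 0$ is used in exactly one, but crucial, place: it is what guarantees that the vanishing proximal displacement $\wv_t - \bar\wv_{t,i}$ forces a vanishing subgradient at the limit, without which the outer-semicontinuity step would fail.
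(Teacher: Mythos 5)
Your argument is sound as far as it goes, but it proves a weaker statement than the one claimed: you have implicitly assumed that every user participates in every round, so that the iteration is the full product-space map $\wbs_{t+1}=\prox[]{H}\prox[\eta_t]{\fs}\wbs_t$ at each step. The theorem only assumes each user participates \emph{indefinitely} (infinitely often): at round $t$ the server averages only over a nonempty subset $I_t\subseteq[m]$ of participating users, so the update is effectively $\wv_{t+1}=\Tm_{I_t}\wv_t$ with $\Tm_{I}:=\tfrac{1}{|I|}\sum_{i\in I}\prox[\eta_t]{f_i}$, with no constraint on $I_t$ beyond $|\{t: i\in I_t\}|=\infty$ for every $i$. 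This partial-participation generality is precisely what the paper's proof (an adaptation of Tseng's 1992 analysis) is built to handle. Under partial participation the first half of your argument survives: every common minimizer is a fixed point of each $\Tm_I$ (each $\Tm_I$ is firmly nonexpansive, being the proximal map of a proximal average), so Fej\'er monotonicity and the vanishing displacement $\wv_t-\Tm_{I_t}\wv_t\to 0$ still follow. But that displacement only involves the users active at round $t$; it gives no control over $\wv_t-\prox[\eta_t]{f_i}(\wv_t)$ for users $i\notin I_t$, so your outer-semicontinuity step cannot conclude $0\in\partial f_i(\wv^\infty)$ for \emph{all} $i$. The missing idea is the combinatorial subsequence argument of the paper: given a limit point $\zv$ fixed by the operators indexed by some family $\Jc$, look at the first times $s_k\ge t_k$ at which an index set outside $\Jc$ is used, use Fej\'er monotonicity with respect to $\zv$ (which is fixed by all operators used on $[t_k,s_k)$) to show $\wv_{s_k}\to\zv$ as well, and hence that $\zv$ is also fixed by that new operator; since every user appears infinitely often, iterating this forces $\zv\in\Fm$, after which the Fej\'er-to-convergence lemma applies as you describe.

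By contrast, the obstacle you single out, the time-varying step size, is actually the easy part, and your treatment of it is arguably more direct than the paper's: you pass $(\wv_t-\prox[\eta_t]{f_i}\wv_t)/\eta_t\in\partial f_i(\prox[\eta_t]{f_i}\wv_t)$ to the limit using $\liminf_t\eta_t>0$ and closedness of the subdifferential graph, whereas the paper reduces to a constant step size via the monotonicity of the displacement in $\eta$ (Tseng 2000). So the gap is not in that step, but solely in the restriction to full participation, which the theorem does not assume.
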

The homogeneity assumption
might seem strong since it challenges the necessity of \FL in the first place. However, we point out that (a) as a simplified limiting case it does provide insight on when user functions have close minimizers (\ie homogeneous); (b) by using a large step size $\eta$ \FP effectively homogenizes the users and its convergence thus follows\footnote{The consequence of this must of course be further investigated; see \textcite{DinhTN20} and \textcite{tian_2018_fedprox}.}; (c) modern deep architectures are typically over-parameterized so that achieving null training loss on each user is not uncommon. However, \FL is still relevant even in this case since it selects a model that works for \emph{every} user and hence provides some regularizing effect; (d) even when the homogeneity assumption fails, $\prox[\eta_t]{\fs}$ still converges to a point that is in some sense closest to $H$ \parencite{BauschkeCR05}.

The next result removes the homogeneity assumption by simply letting step size $\eta_t$ diminish:
\begin{restatable}[\cite{Lions78,Passty79}]{theorem}{fp}
Assuming each user participates in every round, the step size $\eta_t$ satisfies $\sum_t \eta_t = \infty$ and $\sum_t \eta_t^2 < \infty$, and the functions $\{f_i\}$ are convex, then the \emph{averaged} iterates 
$\bar\wbs_t := \tfrac{\sum_{s=1}^t \eta_s \wbs_s}{\sum_{s=1}^t \eta_s}$
of \FP converge to a \emph{correct} solution of the \emph{original} problem \eqref{eq:wFL}.
\label{thm:fp}
\end{restatable}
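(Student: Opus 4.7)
The plan is to reinterpret \FP in \eqref{eq:BB} as Passty's backward-backward splitting applied to the inclusion $0 \in \partial\fs(\wbs) + N_H(\wbs)$ on the product space $\RR^{dm}$. Writing $A = \partial\fs$ and $B = N_H = \partial\iota_H$, both are maximal monotone since $\fs$ and $\iota_H$ are proper, closed, convex; and since $J_{\eta_t N_H} = \prox{H}$ is independent of $\eta_t$, the update is literally Passty's iteration $\wbs_{t+1} = J_{\eta_t B}J_{\eta_t A}\wbs_t$. The statement is then exactly the conclusion of Passty (1979) in finite dimension, and what remains is (i) a self-contained derivation of the ergodic estimate, and (ii) the translation of its limit back to \eqref{eq:wFL}.

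For (i), fix any zero $\wbs^\star \in H$ of $A+B$ and decompose $0 = a^\star + b^\star$ with $a^\star \in \partial\fs(\wbs^\star)$ and $b^\star \in N_H(\wbs^\star) = H^\perp$, so $a^\star \in H^\perp$ too. Let $\wbs_t^{1/2} := \prox[\eta_t]{\fs}\wbs_t$ and $s_t := (\wbs_t - \wbs_t^{1/2})/\eta_t \in \partial\fs(\wbs_t^{1/2})$. Expanding $\|\wbs_t^{1/2}-\wbs^\star\|^2$ via the identity $\wbs_t = \wbs_t^{1/2} + \eta_t s_t$ and using monotonicity of $\partial\fs$ to replace $\langle s_t, \wbs_t^{1/2} - \wbs^\star\rangle$ by its lower bound $\langle a^\star, \wbs_t^{1/2} - \wbs^\star\rangle$ yields
\begin{align*}
\|\wbs_t^{1/2}-\wbs^\star\|^2 \le \|\wbs_t-\wbs^\star\|^2 - 2\eta_t\langle a^\star, \wbs_t^{1/2}-\wbs^\star\rangle - \eta_t^2\|s_t\|^2;
\end{align*}
nonexpansiveness of $\prox{H}$ then passes to $\wbs_{t+1}$. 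Since $\wbs_{t+1}, \wbs^\star \in H$ and $a^\star \in H^\perp$, we have $\langle a^\star, \wbs_t^{1/2}-\wbs^\star\rangle = \langle a^\star, \wbs_t^{1/2} - \wbs_{t+1}\rangle$, a quantity controllable via Cauchy--Schwarz by the projection gap $\|\wbs_t^{1/2} - \wbs_{t+1}\|$. Telescoping the recursion, using $\sum \eta_t^2 < \infty$ to bound $\sum\eta_t^2\|s_t\|^2$ and hence the projection gaps in a suitable $\ell^2$ sense, and finally invoking Jensen's inequality after dividing by $\sum_{s\le t}\eta_s \to \infty$ produces $\langle a^\star, \bar\wbs_t - \wbs^\star\rangle \to 0$ together with boundedness of $\{\bar\wbs_t\}$.

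For (ii), since the previous step holds for every dual certificate $(\wbs^\star, a^\star)$ of the maximal monotone operator $A+B$, a Minty-style demiclosedness argument at any cluster point $\bar\wbs$ of $\{\bar\wbs_t\}$ (cluster points coincide with subsequential limits in $\RR^{dm}$) identifies $\bar\wbs$ itself as a zero of $A+B$, with the vanishing of the ergodic projection gap forcing $\bar\wbs \in H$. Writing $\bar\wbs = (\wv,\ldots,\wv)$, the inclusion $0 \in \partial\fs(\bar\wbs) + H^\perp$ reads $\sum_i \lambda_i \zv_i = 0$ for some $\zv_i \in \partial f_i(\wv)$, i.e.\ $0 \in \partial f(\wv)$, so $\wv$ solves \eqref{eq:wFL}. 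The main obstacle is step (i) in the heterogeneous regime: unlike \Cref{thm:fp-hom}, $\wbs_t^{1/2} \notin H$ in general and $\fs(\wbs_t^{1/2}) - \fs(\wbs^\star)$ has no sign, so ordinary function-value progress is unavailable and one is forced to linearize against a fixed dual certificate $a^\star$ and pass to the weighted average $\bar\wbs_t$; the two conditions $\sum\eta_t = \infty$ and $\sum\eta_t^2 < \infty$ are precisely calibrated to make this linearization produce a limit that is feasible ($\bar\wbs \in H$) and optimal.
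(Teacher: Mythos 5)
Your step (i) is essentially sound, and can even be tightened: since $\wbs_t=\prox[]{H}(\cdot)\in H$, the projection gap obeys $\|\wbs_t^{1/2}-\wbs_{t+1}\|\le\|\wbs_t^{1/2}-\wbs_t\|=\eta_t\|s_t\|$, so the cross term $2\eta_t\dual{a^\star}{\wbs_{t+1}-\wbs_t^{1/2}}$ is absorbed by $-\eta_t^2\|s_t\|^2$ and yields $\|\wbs_{t+1}-\wbs^\star\|_2^2\le\|\wbs_t-\wbs^\star\|_2^2+\eta_t^2\|a^\star\|_2^2$, i.e.\ quasi-Fej\'er monotonicity and boundedness, with no need for $\sup_t\|s_t\|<\infty$ (as written, your claim that $\sum_t\eta_t^2<\infty$ alone bounds $\sum_t\eta_t^2\|s_t\|^2$ is circular, since bounded subgradients presuppose bounded iterates). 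The genuine gap is in step (ii), the identification of cluster points as solutions. You linearize only against dual certificates $(\wbs^\star,a^\star)$ at \emph{zeros} of $A+B$, for which $a^\star=-b^\star\in H^\perp$; but every iterate $\wbs_s$ lies in $H$, hence so does $\bar\wbs_t$, and therefore $\dual{a^\star}{\bar\wbs_t-\wbs^\star}$ is \emph{identically zero}. The quantity you propose to drive to zero carries no information about where the cluster points of $\{\bar\wbs_t\}$ are (in the homogeneous case one may even take $a^\star=0$), and a Minty/demiclosedness argument requires the asymptotic variational inequality against \emph{arbitrary} elements of the graph of $\partial\fs+N_H$, not merely against its zeros: knowing how $\bar\wbs_t$ correlates with certificates of other solutions cannot certify that a cluster point is itself a solution. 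This is precisely where the paper's proof differs: it fixes an arbitrary test point $\wbs'\in\dom\fs\cap H$ with arbitrary $a'\in\partial\fs(\wbs')$, $b'\in H^\perp$, uses the identity $\wbs'=\prox[\eta_t]{\fs}(\wbs'+\eta_t a')$ together with firm nonexpansiveness of both resolvents, obtains an error term $\eta_t^2\,[\,\|a'+b'\|_2^2+\|a'\|_2^2\,]$ depending only on the test point, and after dividing by $\Lambda_t=\sum_{k\le t}\eta_k$ concludes $\liminf_t\dual{\wbs'-\bar\wbs_t}{a'+b'}\ge0$ for \emph{all} such pairs, which is exactly what maximal monotonicity needs.

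A secondary omission: even granting that every cluster point of $\{\bar\wbs_t\}$ solves \eqref{eq:wFL}, convergence of the whole averaged sequence does not follow in general, because the solution set need not be a singleton. The paper closes this by verifying the three conditions of \Cref{thm:GFP} (quasi-Fej\'er monotonicity of $\{\wbs_t\}$, the tail condition $\dist(\bar\wbs_t,W_k)\to0$ for the averaged sequence, and optimality of its cluster points); your proposal would need an analogous Opial/Fej\'er-type argument to pass from subsequential optimality to convergence of $\bar\wbs_t$.
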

In \Cref{sec:app-proof}, we give examples to show the tightness of the step size condition in \Cref{thm:fp}. We emphasize that $\bar\wbs_t$ converges to a solution of the original problem \eqref{eq:wFL}, not the regularized problem \eqref{eq:regFL}. The subtlety is that we must let the step size $\eta_t$ approach 0 reasonably slowly, a possibility that was not discussed in \textcite{pathak2020fedsplit} where they always fixed the step size $\eta_t$ to a constant $\eta$. \Cref{thm:fp} also makes intuitive sense, as $\eta_t \to 0$ slowly, we are effectively tracking the solution of the regularized problem \eqref{eq:regFL} which itself tends to the original problem \eqref{eq:wFL}: recall that $\env[\eta]{f} \to f$ as $\eta\to 0$.

Even the ergodic averaging step in \Cref{thm:fp} can be omitted in some cases:

\begin{restatable}[\cite{Passty79}]{theorem}{fperg}
Under the same assumptions as in \Cref{thm:fp}, if $\fv$ is strongly convex or the solution set of  \eqref{eq:wFL} has nonempty interior, then the vanilla iterates $\wbs_t$ of \FP also converge.
\end{restatable}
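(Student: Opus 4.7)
The plan is to upgrade the ergodic conclusion of \Cref{thm:fp} to convergence of the raw iterates $\{\wbs_t\}$ by sharpening the quasi-Fej\'er estimate that underlies Passty's argument, and then exploiting the extra structure supplied by each of the two alternative hypotheses.

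First I would reproduce the base estimate. Writing the update in two stages as $\wbs_{t+1/2} := \prox[\eta_t]{\fv}(\wbs_t)$ and $\wbs_{t+1} := \prox{H}(\wbs_{t+1/2})$, the resolvent optimality conditions give $(\wbs_t - \wbs_{t+1/2})/\eta_t \in \partial \fs(\wbs_{t+1/2})$ and $(\wbs_{t+1/2} - \wbs_{t+1})/\eta_t \in N_H(\wbs_{t+1})$. Pairing these against a primal--dual optimality pair $\vbs^\star \in \partial \fs(\wbs^\star)$, $-\vbs^\star \in N_H(\wbs^\star)$ at any solution $\wbs^\star \in H$ of \eqref{eq:wFL}, and invoking monotonicity of $\partial \fs$ and $N_H$, one obtains after expansion
\begin{align*}
\|\wbs_{t+1} - \wbs^\star\|^2 \,\leq\, \|\wbs_t - \wbs^\star\|^2 \,-\, 2\eta_t \langle \vbs^\star,\, \wbs_{t+1/2} - \wbs^\star\rangle \,+\, C\eta_t^2,
\end{align*}
where boundedness of $\{\wbs_t\}$ (a standard consequence of $\sum_t \eta_t^2 < \infty$) allows $C$ to be taken uniform in $t$. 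This is precisely the quasi-Fej\'er estimate driving \Cref{thm:fp}, and it already tells us that $\|\wbs_t - \wbs^\star\|$ admits a limit for every $\wbs^\star \in S$, where $S$ denotes the solution set of \eqref{eq:wFL} lifted into $H$.

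For the strongly convex case (each $f_i$ $\mu$-strongly convex), the solution is unique and strong monotonicity of $\partial \fs$ sharpens the above to
\begin{align*}
\|\wbs_{t+1} - \wbs^\star\|^2 \,\leq\, (1 - c\mu\eta_t)\,\|\wbs_t - \wbs^\star\|^2 \,+\, C\eta_t^2
\end{align*}
for some absolute $c > 0$; combining $\sum_t \eta_t = \infty$ and $\sum_t \eta_t^2 < \infty$ via a Robbins--Siegmund / telescoping argument forces $\wbs_t \to \wbs^\star$. For the nonempty-interior case I would instead invoke the classical fact that a quasi-Fej\'er monotone sequence relative to a convex set with nonempty interior must converge strongly: taking $\wbs^\circ$ with $B(\wbs^\circ, r) \subseteq S$ and applying the base inequality to $\wbs^\star = \wbs^\circ + r\ubs$ for every unit direction $\ubs$ yields convergence of each coordinate $\langle \ubs,\, \wbs_t - \wbs^\circ\rangle$; together with convergence of $\|\wbs_t - \wbs^\circ\|$ this pins down a unique limit, which by \Cref{thm:fp} must lie in $S$.

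The main obstacle I anticipate is the clean propagation of the $O(\eta_t^2)$ errors through the composed resolvents with time-varying step sizes, together with the fact that the true solution $\wbs^\star$ of \eqref{eq:wFL} is \emph{not} itself a fixed point of the one-step map $\prox{H}\prox[\eta_t]{\fv}$ at any finite $t$: its exact fixed points solve the regularized problem \eqref{eq:regFL} and only converge to $S$ as $\eta_t \to 0$. In the strongly convex branch this forces the $c\mu\eta_t$ contraction to be extracted from strong monotonicity of $\partial \fs$ at $\wbs^\star$, rather than from any direct contraction of the iterative map itself.
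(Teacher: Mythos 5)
The paper never writes out a proof of this statement---it is imported wholesale from Passty (1979)---so there is no internal argument to compare against; what you propose is a sound reconstruction, and it naturally extends the quasi-Fej\'er machinery the paper does set up for \Cref{thm:fp}, namely the estimate \eqref{eq:BB-qfm} together with \Cref{thm:GFP}. Your strongly convex branch goes through: specializing the base estimate at the unique solution $\wbs^\star$ with a matched pair $\vbs^\star\in\partial\fs(\wbs^\star)$, $-\vbs^\star\in H^\perp$, strong monotonicity contributes a term $-2\mu\eta_t\|\prox[\eta_t]{\fs}\wbs_t-\wbs^\star\|^2$, and since $\wbs_{t+1}=\prox{H}\prox[\eta_t]{\fs}\wbs_t$ with $\wbs^\star\in H$, the Pythagoras identity for the projection lets you pass this to $-2\mu\eta_t\|\wbs_{t+1}-\wbs^\star\|^2$ and absorb the leftover cross term into $\eta_t^2\|\vbs^\star\|_2^2$, yielding $(1+2\mu\eta_t)\|\wbs_{t+1}-\wbs^\star\|_2^2\le\|\wbs_t-\wbs^\star\|_2^2+\eta_t^2\|\vbs^\star\|_2^2$, which is exactly what your Robbins--Siegmund step needs; you also correctly anticipate that the contraction must be extracted from strong monotonicity of $\partial\fs$ at $\wbs^\star$ rather than from the map $\prox{H}\prox[\eta_t]{\fs}$, whose exact fixed points solve \eqref{eq:regFL}, not \eqref{eq:wFL}. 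One point needs care in the interior branch: the solution set of \eqref{eq:wFL}, once lifted into the product space, lies inside the subspace $H$ and therefore has \emph{empty} interior in $\RR^{dm}$; the hypothesis only supplies a ball in $\RR^d$. So your ball $B(\wbs^\circ,r)$ and the unit directions $\ubs$ must be taken relative to $H$ (equivalently, work in $H\cong\RR^d$, which is legitimate because every iterate $\wbs_t$ with $t\ge 1$ lies in $H$ after the averaging step, and the quasi-Fej\'er inequality holds at every solution point since the matched subgradient pair makes the linear term vanish). With that restriction your coordinate argument, combined with convergence of $\|\wbs_t-\wbs^\circ\|_2$, does pin down a single limit, and identifying it as a solution via the ergodic conclusion of \Cref{thm:fp} (or via condition 3 of \Cref{thm:GFP} as verified in the paper's proof of \Cref{thm:fp}) is the right move. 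The only thing your route does not recover is the paper's side remark that convergence is linear in the nonempty-interior case, but the statement itself does not require it.
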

We remark that convergence is in fact linear for the second case. Nevertheless, 
the above two conditions are perhaps not easy to satisfy or verify in most applications. Thus, in practice, we recommend the ergodic averaging in \Cref{thm:fp} since it does not create additional overhead (if implemented incrementally) and in our experiments it does not slow down the algorithm noticeably.

There is in fact a quantitative relationship between the original problem \eqref{eq:FL-prod} and the regularized problem \eqref{eq:regFL}, even in the absence of convexity:
\begin{restatable}[\cite{YuZMX15}]{theorem}{fpnvx}
Suppose each $f_i$ is $M_i$-Lipschitz continuous (and possibly nonconvex), then
\begin{align}
\textstyle
\forall \wbs, ~~~ \fs(\wbs) - \tilde \fs(\wbs) \leq \sum_i \lambda_i \tfrac{\eta M_i^2}{2}.
\end{align}
\end{restatable}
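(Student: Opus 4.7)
The plan is to reduce the claimed bound to a per-user scalar inequality and then invoke Lipschitz continuity pointwise. Since $\fs(\wbs)=\sum_{\userind} \lambda_{\userind} f_{\userind}(\wv_{\userind})$ and $\tilde\fs(\wbs)=\sum_{\userind} \lambda_{\userind}\,\env[\eta]{f_{\userind}}(\wv_{\userind})$ share the same nonnegative weights, I would begin by writing
\begin{align*}
\fs(\wbs)-\tilde\fs(\wbs)=\sum_{\userind} \lambda_{\userind}\bigl(f_{\userind}(\wv_{\userind})-\env[\eta]{f_{\userind}}(\wv_{\userind})\bigr),
\end{align*}
so that it suffices to prove the one-variable bound $f_{\userind}(\wv)-\env[\eta]{f_{\userind}}(\wv)\le \tfrac{\eta M_{\userind}^2}{2}$ for every $\wv\in\RR^{\numdim}$. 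Summing against the $\lambda_{\userind}$ then recovers the theorem exactly as stated.

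For the per-user bound the nonnegative direction is free: plugging $\xv=\wv$ into the envelope definition \eqref{eq:prox} gives $\env[\eta]{f_{\userind}}(\wv)\le f_{\userind}(\wv)$. For the matching upper bound on the gap I would lower bound the envelope using only the $M_{\userind}$-Lipschitz estimate $f_{\userind}(\xv)\ge f_{\userind}(\wv)-M_{\userind}\|\xv-\wv\|_2$, which holds for every $\xv$ without any convexity assumption. Substituting into the definition of the envelope and taking infimum over $\xv$---equivalently, over the scalar $t=\|\xv-\wv\|_2\ge 0$---yields
\begin{align*}
\env[\eta]{f_{\userind}}(\wv)\ge f_{\userind}(\wv)+\inf_{t\ge 0}\Bigl(\tfrac{t^2}{2\eta}-M_{\userind}\,t\Bigr),
\end{align*}
and this one-variable quadratic is minimized at $t^{\star}=\eta M_{\userind}$ with value $-\tfrac{\eta M_{\userind}^2}{2}$. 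Rearranging gives the per-user bound, and the summation step above then closes the argument.

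There is no real obstacle here; the whole proof is a short scalar calculation. The only two points I would flag to the reader are that (i) nonconvexity of the $f_{\userind}$ is harmless because we only use a one-sided Lipschitz inequality, never a subgradient inequality, and (ii) the argument does not require the infimum defining the envelope to be attained, since the monotonicity of infima already preserves the direction of the bound; the quadratic penalty in the envelope, combined with the Lipschitz lower bound, ensures the infimum is automatically finite in $\RR$.
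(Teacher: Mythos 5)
Your proof is correct: the weighted decomposition over users is exactly how $\fs$ and $\tilde\fs$ differ, and the per-user bound $f_{\userind}(\wv)-\env[\eta]{f_{\userind}}(\wv)\le \tfrac{\eta M_{\userind}^2}{2}$ via the Lipschitz lower bound and minimizing $\tfrac{t^2}{2\eta}-M_{\userind}t$ is the standard argument. The paper itself gives no proof for this statement (it is cited from the reference), and your derivation coincides with the usual proof of that envelope bound, so there is nothing further to flag.
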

Thus, for a small step size $\eta$, \FP, aiming to minimize the regularized function $\tilde \fs$ is not quantitatively different from \FA that aims at minimizing $\fs$. This again reveals the fundamental importance of step size in \FL.
We remark that, following the ideas in \textcite{YuZMX15}, we may remove the need of ergodic averaging for \FP if the functions $f_i$ are ``definable'' (which most, if not all, functions used in practice are). We omit the technical complication here since the potential gain does not appear to be practically significant.

In \Cref{fig:eta_effect_fedprox}, we show the effect of step size $\eta$ on the convergence of \FP, and compare the  results with that of \FA on both convex and nonconvex models. 
We run \FP with both fixed and diminishing step sizes. In the experiments with diminishing step sizes, we have set the initial value of $\eta$ (i.e. $\eta_0$) to larger values - compared to the constant $\eta$ values - to ensure that $\eta$ does not get very small after the first few rounds. From the convex experiments (Fig~\ref{fig:eta_effect_fedprox}, left and middle),  one can see that \FP with a fixed learning rate converges fast (in a few hundred rounds)  to a suboptimal solution. In contrast, \FP with diminishing $\eta$ converges slower, but to better quality solutions. It is interesting to note that only when the conditions of \Cref{thm:fp} are satisfied (i.e. $\eta$ diminishes neither too fast nor too slow), \FP converges to a correct solution of the original problem \eqref{eq:wFL}, \eg see the results for $\eta_t \propto 1/t$ which satisfies both conditions in \Cref{thm:fp}. Surprisingly, for the nonconvex setting (\Cref{fig:eta_effect_fedprox}, right), the best results are achieved with larger learning rates. A similar observation about \FA in nonconvex settings was reported in \citep[Fig. 5 \& 6]{McMahanMRHA17}. \footnote{Note that the results of \FA and \FP for $\eta=100$ and $100/\log (t)$ overlap with each other.} Moreover, from the results on both convex and nonconvex models, one can see that  ergodic averaging does not affect the convergence rate of \FP noticeably.  

\begin{figure}[t]
 \includegraphics[width=0.33\columnwidth]{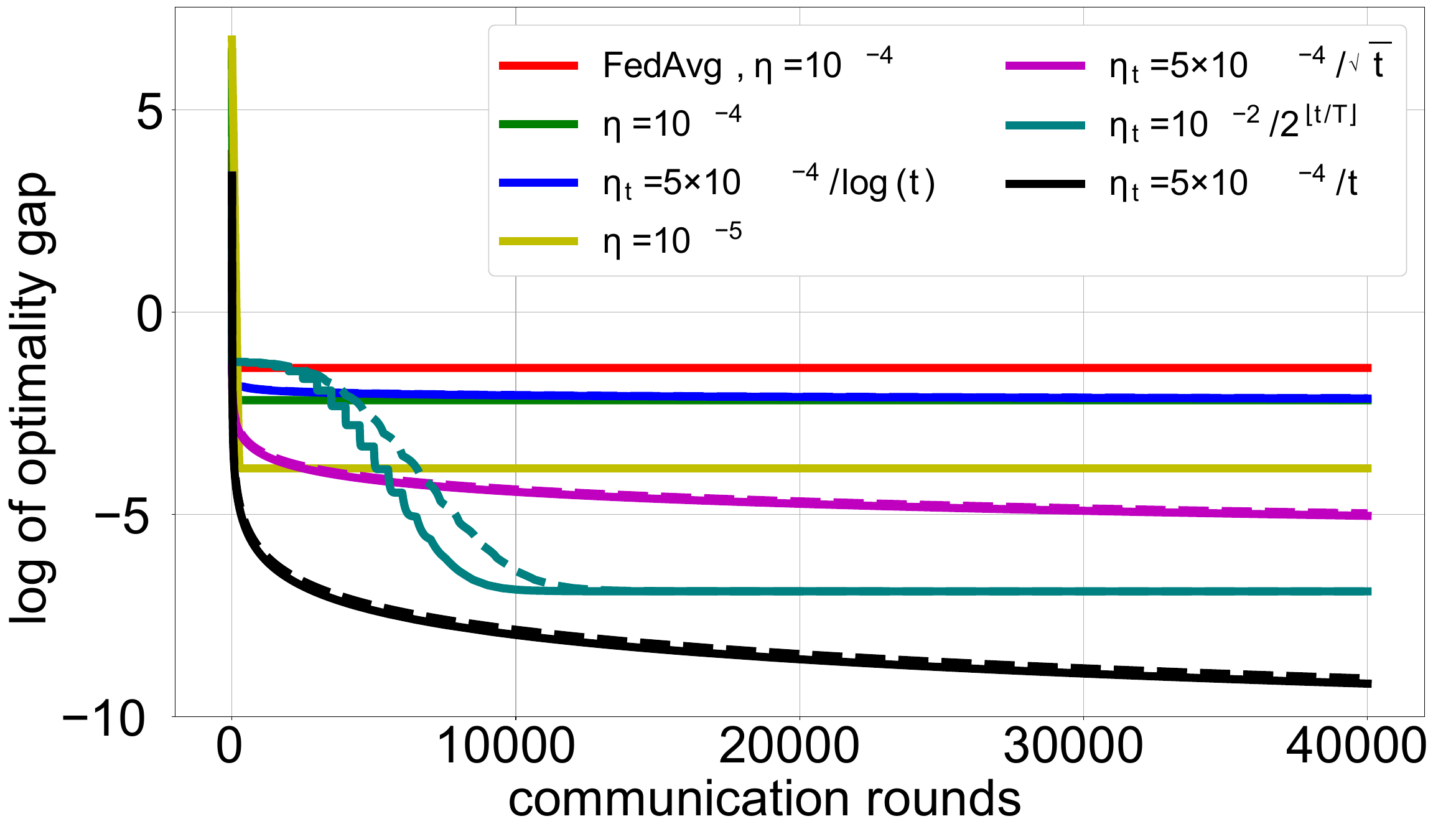}
 \includegraphics[width=0.32\columnwidth]{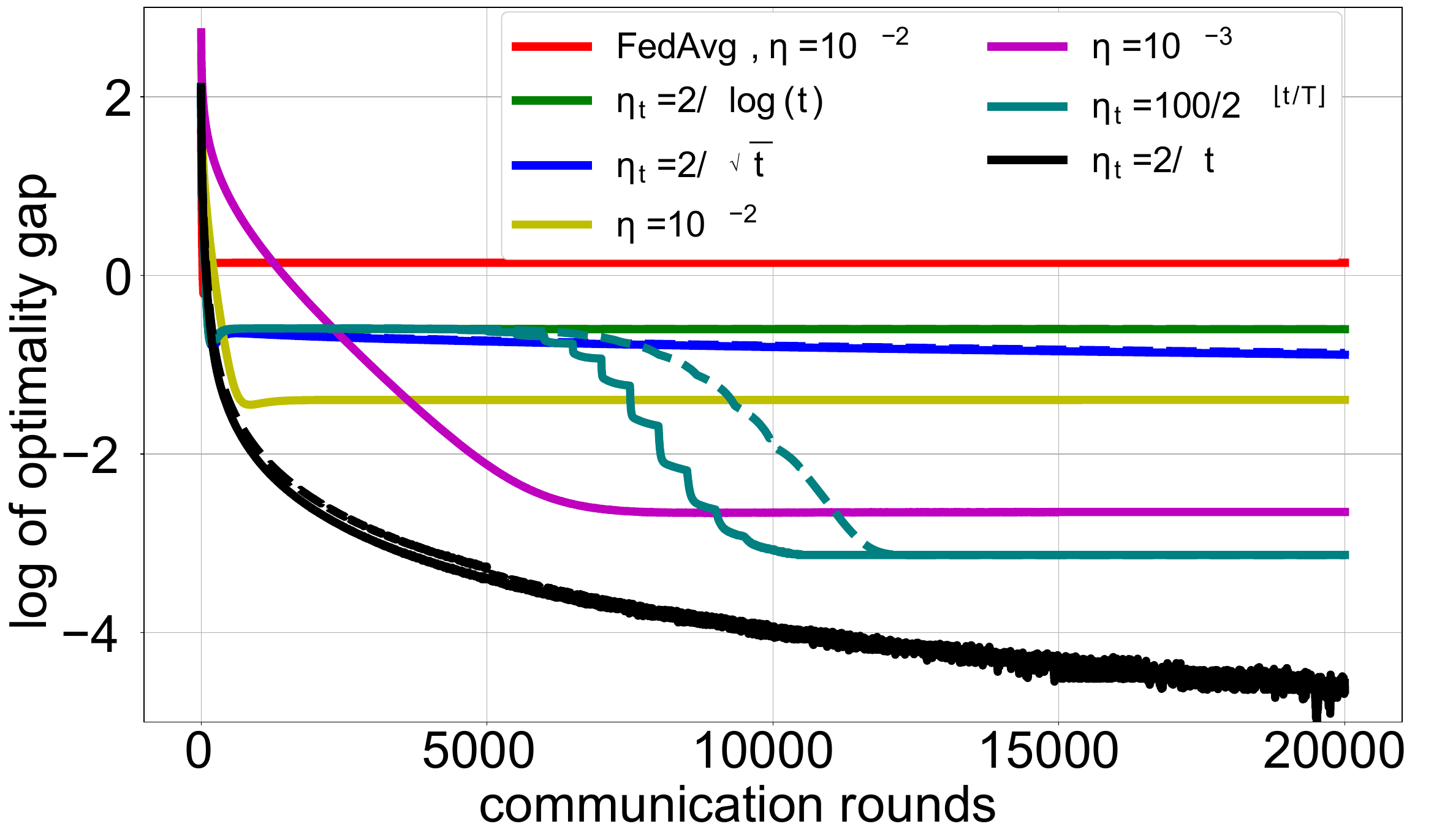}
 \includegraphics[width=0.32\columnwidth]{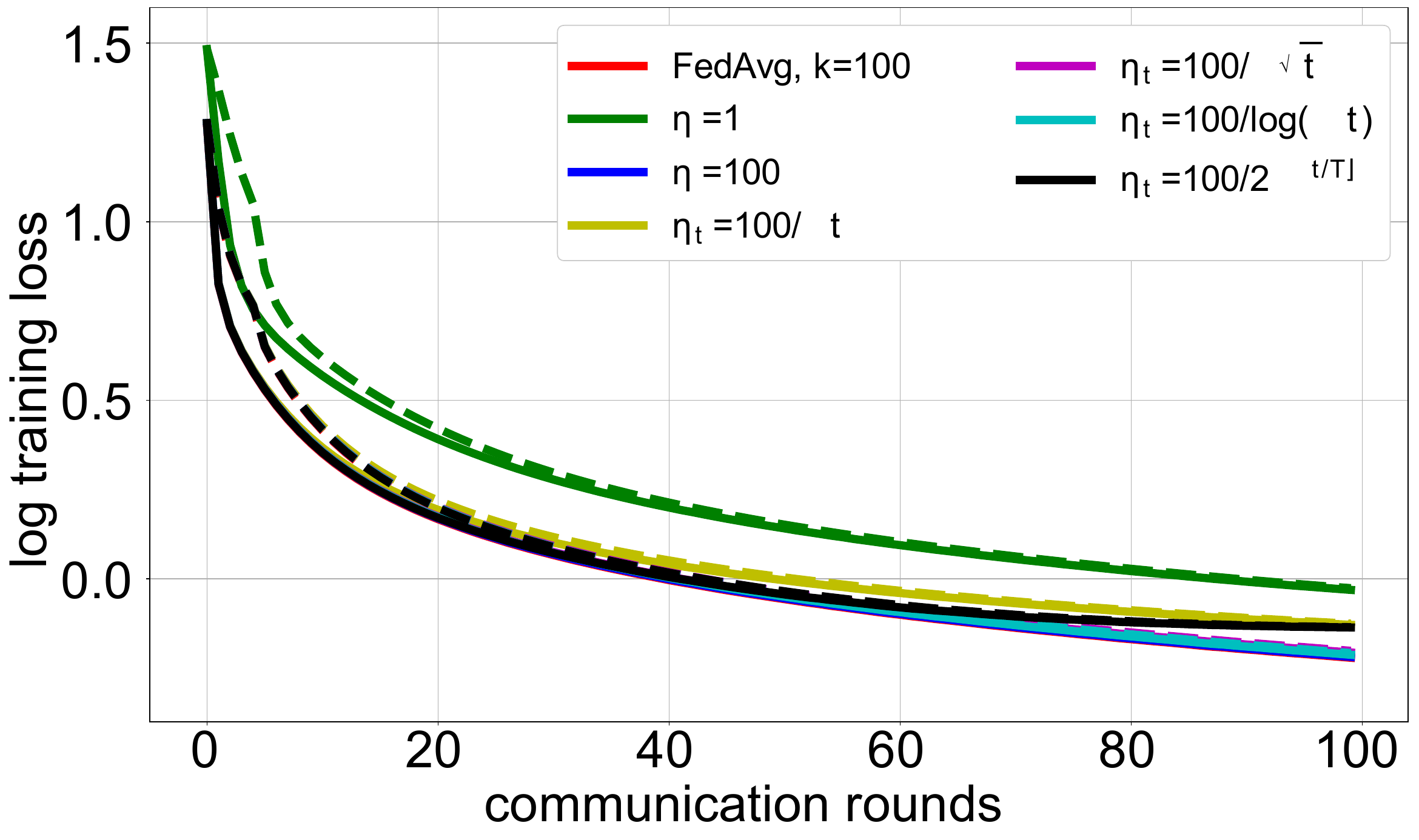}
\caption{Effect of step size $\eta$ and averaging on \FP. Left: least squares; Middle: logistic regression; Right: CNN on MNIST. The dashed and solid lines with the same color show the results obtained with and without the ergodic averaging step in \Cref{thm:fp}, respectively. For exponentially decaying $\eta_t$, we use period $T$ equal to $500$ for both least squares and logistic regression experiments, and $10$ for CNN experiment.
}
\label{fig:eta_effect_fedprox}
\end{figure}

\subsection{\FS as Peaceman-Rachford splitting}
\textcite{pathak2020fedsplit} introduced the \FS algorithm recently:
\begin{align}
\label{eq:PR}
\wbs_{t+1} = \ra[]{H} \ra[\eta_t]{\fs}\wbs_t,
\end{align}
which is essentially an instantiation of the Peaceman-Rachford splitting algorithm \parencite{PeacemanRachford55,LionsMercier79}. As shown by \textcite{LionsMercier79}, \FS converges to a correct solution of \eqref{eq:wFL} if $\fv$ is strictly convex, and the convergence rate is linear if $\fv$ is strongly convex and smooth (and $\eta$ is small). 
\textcite{pathak2020fedsplit} also studied the convergence behaviour of \FS when the reflector $\ra[\eta]{\fs}$ is computed approximately. 
However, we note that  convergence behaviour of \FS is not known or widely studied for nonconvex problems. In particular, we have the following surprising result:
\begin{restatable}{theorem}{rcon}
If the reflector $\ra[\eta]{f}$ is a (strict) contraction, then $f$ must be strongly convex.
\end{restatable}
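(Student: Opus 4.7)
The plan is to extract strong monotonicity of $\partial f$ from the contraction hypothesis, and then invoke the classical characterization of strong convexity via strong monotonicity of the subdifferential.

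First, since $\mathsf{R}^\eta_f = 2\mathsf{P}^\eta_f - \mathrm{id}$ is, by hypothesis, a single-valued strict contraction on $\mathbb{R}^d$, so is $\mathsf{P}^\eta_f$. Fix $u, v \in \mathbb{R}^d$, set $p := \mathsf{P}^\eta_f(u)$ and $q := \mathsf{P}^\eta_f(v)$, and apply Fermat's rule to the proximal subproblem \eqref{eq:prox}: this gives $a_p := (u-p)/\eta \in \partial f(p)$ and $a_q := (v-q)/\eta \in \partial f(q)$, so that
\[
\mathsf{R}^\eta_f(u) - \mathsf{R}^\eta_f(v) = (p-q) - \eta(a_p - a_q), \qquad u - v = (p-q) + \eta(a_p - a_q).
\]

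Next, I would substitute these into the contraction inequality $\|\mathsf{R}^\eta_f(u) - \mathsf{R}^\eta_f(v)\|^2 \leq \kappa^2 \|u - v\|^2$ and expand: the cross terms $\pm 2\eta \langle p-q,\, a_p - a_q \rangle$ appear with opposite signs on the two sides, so after collecting,
\[
(1 - \kappa^2)\bigl(\|p-q\|^2 + \eta^2\|a_p - a_q\|^2\bigr) \leq 2\eta(1 + \kappa^2)\,\langle p-q,\, a_p - a_q\rangle.
\]
Dropping the nonnegative second term on the left yields
\[
\langle p-q,\, a_p - a_q \rangle \geq \mu\,\|p-q\|^2, \qquad \mu := \frac{1 - \kappa^2}{2\eta(1 + \kappa^2)} > 0,
\]
which is $\mu$-strong monotonicity of $\partial f$ on $\mathrm{rge}(\mathsf{P}^\eta_f)$.

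Finally, I would upgrade this to strong convexity of $f$ itself. For any $p \in \mathrm{dom}(\partial f)$ with $a_p \in \partial f(p)$, the point $u := p + \eta a_p$ satisfies the first-order optimality condition for the prox subproblem at $p$, and since $\mathsf{P}^\eta_f$ is single-valued and globally defined (forced by the contraction hypothesis), this critical point is in fact the global prox-minimizer; hence $\mathrm{rge}(\mathsf{P}^\eta_f) \supseteq \mathrm{dom}(\partial f)$ and the displayed inequality gives $\mu$-strong monotonicity of $\partial f$ on its full domain. The classical subdifferential characterization of (strong) convexity for proper lsc functions (see \textcite{RockafellarWets98}) then delivers $\mu$-strong convexity of $f$. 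The main obstacle sits precisely in this last step: translating a subdifferential-level inequality into a convexity statement for a possibly nonsmooth, a priori nonconvex $f$ requires the nontrivial variational-analytic fact that a monotone limiting subdifferential forces convexity, and the identification of prox-critical points with prox-global-minimizers relies essentially on the global single-valuedness of $\mathsf{P}^\eta_f$ supplied by the contraction hypothesis.
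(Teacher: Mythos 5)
Your first half is correct, and it reaches the key inequality by a genuinely more elementary route than the paper: the paper passes from the contraction of $\mathsf{R}^\eta_f$ to the contraction factor $\tfrac{1+\gamma}{2}$ of $\mathsf{P}^\eta_f$, then invokes Baillon--Haddad (a nonexpansive gradient of a convex function is firmly nonexpansive) and the resolvent characterization of maximal monotonicity, whereas you obtain $\langle p-q,\,a_p-a_q\rangle \geq \mu\|p-q\|_2^2$ by expanding the contraction inequality directly in the variables $p-q$ and $a_p-a_q$. Up to that point the argument is sound (the constant differs from the paper's $\tfrac{1-\gamma}{1+\gamma}$, which is immaterial).

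The genuine gap is in your final ``upgrade'' step. What you have proved is monotonicity (with modulus $\mu$) only on the \emph{prox-realized} part of the graph of $\partial f$, i.e.\ for pairs $(p,a)$ with $p=\mathsf{P}^\eta_f(p+\eta a)$; equivalently, strong monotonicity of the operator $(\mathsf{P}^\eta_f)^{-1}-\mathrm{id}$. Your claim that \emph{every} pair $a_p\in\partial f(p)$ is of this form ``since $\mathsf{P}^\eta_f$ is single-valued and globally defined'' is a non sequitur: for a nonconvex $f$ the prox subproblem at $u=p+\eta a_p$ may have several critical points, and first-order criticality of $p$ does not make it the global minimizer even when that minimizer is unique; uniqueness constrains the argmin, not the other critical points. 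Without this identification, the Rockafellar--Wets characterization (monotone limiting subdifferential $\Rightarrow$ convexity) cannot be invoked, since it requires monotonicity on the whole graph of $\partial f$ --- and establishing that every subgradient is prox-realized is essentially the convexity you are trying to prove, so the step as written is circular. A correct repair avoids the full-graph claim altogether: take $\eta=1$ w.l.o.g.\ (as in the paper); the contraction gives the two-sided bound $\tfrac{1-\kappa}{2}\|u-v\|_2\leq\|\mathsf{P}u-\mathsf{P}v\|_2\leq\tfrac{1+\kappa}{2}\|u-v\|_2$, so $\mathsf{P}:=\mathsf{P}^1_f$ is a bi-Lipschitz bijection of $\mathds{R}^d$ onto itself; moreover $\mathsf{P}=\nabla g$ for the convex function $g(u)=\sup_x\{\langle u,x\rangle-\tfrac12\|x\|_2^2-f(x)\}$ (a supremum of affine functions, differentiable because $\mathsf{P}$ is a continuous single-valued selection of $\partial g$). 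For every $x$ and $u$ one has $f(x)\geq\langle u,x\rangle-\tfrac12\|x\|_2^2-g(u)$ with equality iff $x=\mathsf{P}(u)$; since $\mathsf{P}$ is onto, taking the supremum over $u$ yields $f=g^*-\tfrac12\|\cdot\|_2^2$ everywhere, and your inequality, which says exactly that $\partial g^*-\mathrm{id}=\mathsf{P}^{-1}-\mathrm{id}$ is $\mu$-strongly monotone, then gives $(1+\mu)$-strong convexity of $g^*$ and hence $\mu$-strong convexity of $f$. (Alternatively one can follow the paper's resolvent route via maximal monotonicity, but some argument of this kind is needed to handle a priori nonconvex $f$; single-valuedness of the prox alone does not supply it.)
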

The converse is true if $f$ is also smooth and $\eta$ is small \parencite{LionsMercier79,Gabay83}. 
Therefore, for non-strongly convex or nonconvex problems, we cannot expect \FS to converge linearly (if it converges at all).

\subsection{\FPI as Douglas-Rachford splitting}
A popular alternative to the Peaceman-Rachford splitting is the Douglas-Rachford splitting \parencite{DouglasRachford56,LionsMercier79}, which, to our best knowledge, has not been applied to the \FL setting. The resulting update, which we call \FPI, can be written succinctly as:
\begin{align}
\label{eq:DR}
\wbs_{t+1} = \tfrac{\wbs_t + \ra{H}\ra[\eta_t]{\fs}\wbs_t }{2},
\end{align}
\ie we simply average the current iterate and that of \FS evenly. Strictly speaking, the above algorithm is a special case of the Douglas-Rachford splitting and was rediscovered by \textcite{Spingarn83} under the name of partial inverse (hence our name \FPI). The moderate averaging step in \eqref{eq:DR} makes \FPI much more stable:
\begin{theorem}[\cite{Spingarn83,LionsMercier79}]
Assuming each user participates in every round, the step size $\eta_t \equiv \eta$ is constant, and the functions $\{f_i\}$ are convex, then the vanilla iterates $\wbs_t$ of \FPI converge to a \emph{correct} solution of the \emph{original} problem \eqref{eq:wFL}.
\end{theorem}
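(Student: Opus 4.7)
The plan is to recognize \eqref{eq:DR} as the classical Douglas--Rachford splitting iteration for the equivalent reformulation $\min_{\wbs\in\RR^{dm}} \fs(\wbs) + \iota_H(\wbs)$, and then invoke its well-established convergence theory. First I would set $T_\eta := \tfrac{1}{2}(\id + \ra{H}\,\ra[\eta]{\fs})$, which is exactly the \FPI update map. Since $\fs$ is separably convex (each $f_i$ is convex) and $\iota_H$ is convex (because $H$ is a closed linear subspace under the $\lambdav$-weighted inner product), both $\prox[\eta]{\fs}$ and $\prox{H}$ are firmly nonexpansive, hence the reflectors $\ra[\eta]{\fs}$ and $\ra{H}$ are nonexpansive, and the averaging step in $T_\eta$ renders it $\tfrac{1}{2}$-averaged (equivalently, firmly nonexpansive).

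Next I would invoke the Krasnoselskii--Mann (or Opial) theorem in the finite-dimensional Hilbert space $\RR^{dm}$: iterating a firmly nonexpansive operator converges (weakly, hence strongly in finite dimensions) to some fixed point whenever $\mathrm{Fix}(T_\eta)\neq\emptyset$. By the classical Douglas--Rachford dictionary of Lions--Mercier and Spingarn, $\mathrm{Fix}(T_\eta)$ is in bijection with the solution set of $\min_{\wbs\in H}\fs(\wbs)$ via the output map $\wbs^\star \mapsto \prox[\eta]{\fs}(\wbs^\star)$, so under the standing assumption that \eqref{eq:wFL} admits a minimizer we obtain $\wbs_t \to \wbs^\star$ for some fixed point $\wbs^\star$ of $T_\eta$.

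The heart of the argument is the fixed-point characterization. Setting $\ubs^\star := \prox[\eta]{\fs}(\wbs^\star)$ and rearranging $\ra{H}\,\ra[\eta]{\fs}\wbs^\star = \wbs^\star$ yields $\prox{H}(2\ubs^\star - \wbs^\star) = \ubs^\star$, which simultaneously encodes (i) $\ubs^\star \in H$ and (ii) $\wbs^\star - \ubs^\star \in H^\perp$. Combining (ii) with the prox inclusion $\tfrac{1}{\eta}(\wbs^\star - \ubs^\star) \in \partial\fs(\ubs^\star)$ and the normal-cone identity $N_H(\ubs^\star) = H^\perp$ (valid because $\ubs^\star\in H$ and $H$ is a subspace), we get $0 \in \partial\fs(\ubs^\star) + N_H(\ubs^\star)$, which is exactly the first-order optimality condition for the constrained problem $\min_{\wbs\in H}\fs(\wbs)$. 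Since $\ubs^\star\in H$ is a consensus vector, it corresponds to a bona fide minimizer of \eqref{eq:wFL}, and by continuity of $\prox[\eta]{\fs}$ the derived sequence $\prox[\eta]{\fs}(\wbs_t)$ (equivalently, $\prox{H}(\wbs_t)$ at the limit, since $\ubs^\star$ coincides with both) converges to $\ubs^\star$.

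The main obstacle is conceptual rather than technical: strictly speaking the raw iterate $\wbs^\star$ need not itself lie in $H$, so the phrasing ``converge to a correct solution'' should be interpreted modulo the standard Douglas--Rachford output map $\prox[\eta]{\fs}$ (or $\prox{H}$), a readout that adds no communication and is continuous in $\wbs_t$. A secondary bookkeeping item is verifying that replacing the ordinary Euclidean inner product on $\RR^{dm}$ by the $\lambdav$-weighted one preserves firm nonexpansiveness of the two proximal maps; this holds because $\prox{H}$ remains the orthogonal projection in the weighted geometry (yielding the weighted average $\bar\wv = \sum_i \lambda_i \wv_i$ as the paper already notes) and each $\prox[\eta]{f_i}$ acts independently on its coordinate and is therefore unaffected by the $\lambda_i$ rescaling.
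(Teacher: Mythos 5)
Your proposal is correct, and it is essentially the argument the paper itself relies on: the paper gives no proof of this theorem in its appendix, importing it by citation from Spingarn and Lions--Mercier, and the standard route in those references is exactly yours --- firm nonexpansiveness (hence $\tfrac12$-averagedness) of $\tfrac12(\id+\ra{H}\ra[\eta]{\fs})$, Krasnoselskii--Mann convergence of the governing sequence to a fixed point $\wbs^\star$, and the fixed-point dictionary $\ubs^\star=\prox[\eta]{\fs}(\wbs^\star)\in H$, $\wbs^\star-\ubs^\star\in H^\perp$, giving $0\in\partial\fs(\ubs^\star)+N_H(\ubs^\star)$, i.e.\ a minimizer of \eqref{eq:wFL} (the subdifferential sum rule being automatic here since each $f_i$ is finite-valued). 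Your caveat about the raw iterate is also well taken and consistent with the paper's own parameterization: in the ADMM-style form \eqref{eq:pi-1}--\eqref{eq:pi-3} the iterate $\wbs_{t}$ is $\prox[]{H}$ applied to $\zbs_{t}-\ubs_{t-1}$ with $\ubs_{t-1}\in H^\perp$, so the server-side iterates already incorporate the readout map and converge to the consensus solution, which is the sense in which the theorem's ``vanilla iterates'' solve \eqref{eq:wFL}; likewise your check that the $\lambdav$-weighted inner product leaves $\prox[]{H}$ an orthogonal projection and $\prox[\eta]{\fs}$ coordinatewise (hence both firmly nonexpansive) is correct.
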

Compared to \FS, \FPI imposes less stringent condition on $f_i$. However, when $f_i$ is indeed strongly convex and smooth, as already noted by \textcite{LionsMercier79}, \FPI will be slower than \FS by a factor close to $\sqrt{2}$ (assuming the constant step size is set appropriately for both). More importantly, it may be easier to analyze \FPI on nonconvex functions, as recently demonstrated in \cite{Rockafellar19b,ThemelisPatrinos20}. 

Interestingly, \FPI also has close ties to \FP. Indeed, this is best seen by  expanding the concise formula in \eqref{eq:DR} and introducing a ``dual variable'' $\ubs$ on the server side\footnote{The acute readers may have recognized here the alternating direction method of multipliers (ADMM). Indeed, the equivalence of ADMM, Douglas-Rachford and partial inverse (under our \FL setting) has long been known \parencite[\eg][]{Gabay83,EcksteinBertsekas92}.}: 
\begin{align}
\label{eq:pi-1}
\zbs_{t+1} &\gets \prox[\eta]{\fs}(\wbs_t + \ubs_{t}) \\
\label{eq:pi-2}
\wbs_{t+1} & \gets \prox[]{H}(\zbs_{t+1} \gray{- \ubs_{t}}) \\
\label{eq:pi-3}
\ubs_{t+1} &\gets \ubs_t + \wbs_{t+1} - \zbs_{t+1}. 
\end{align}
From the last two updates \eqref{eq:pi-2} and \eqref{eq:pi-3} it is clear that $\ubs_{t+1}$ is always in $H^\perp$. Thus, after performing a change of variable $\vbs_t := \wbs_t + \ubs_t$ and exploiting the linearity of $\prox[]{H}$, we obtain exactly \FPI:
\begin{align}
\vbs_{t+1} = \ubs_t + 2\wbs_{t+1} - \zbs_{t+1} 
= \vbs_t - \prox[]{H}\vbs_t + 2\prox[]{H}\prox[\eta]{\fs}\vbs_t - \prox[\eta]{\fs}\vbs_t = \tfrac{\vbs_t + \ra[]{H}\ra[\eta]{\fs} \vbs_t}{2}.
\end{align}
Comparing \eqref{eq:BB} and \eqref{eq:pi-1}-\eqref{eq:pi-2} it is  clear that \FP corresponds to fixing the dual variable $\ubs$ to the constant $\zero$ in \FPI. We remark that step \eqref{eq:pi-1} is done at the users' side while steps \eqref{eq:pi-2} and \eqref{eq:pi-3} are implemented at the server side. There is no communication overhead either, as the server need only communicate the sum $\wbs_t + \ubs_t$ to the respective users while the users need only communicate their $\zv_{t, i}$ to the server. The dual variable $\ubs$ is kept entirely at the server's expense.

Let us point out another subtle difference that may prove useful in \FL: \FA and \FP are inherently ``synchronous'' algorithms, in the sense that all participating users start from a common, averaged model at the beginning of each communication round. In contrast, the local models $\zbs_t$ in \FPI may be different from each other, where we ``correct'' the common, average model $\wbs_t$ with user-specific dual variables $\ubs_t$. This opens the possibility to personalization by designating the dual variable $\ubs$ in user-specific ways.

Lastly, we remark that in \FP we need the step size $\eta_t$ to diminish in order to converge to a solution of the original problem \eqref{eq:wFL} whereas \FPI achieves the same with a constant step size $\eta$, although at the cost of doubling the memory cost at the server side. 

\subsection{\FR as Reflection-Projection splitting}
Examining the updates in \eqref{eq:FB}, \eqref{eq:BB} and \eqref{eq:PR}, we are naturally led to the following further variants (that have not been tried in \FL to the best of our knowledge):
\begin{align}
\label{eq:rg}
\wbs_{t+1} &= \ra{H}\grad[\eta_t]{\fs}\wbs_t \\
\label{eq:rpp}
\wbs_{t+1} &= \ra{H}\prox[\eta_t]{\fs}\wbs_t \\
\label{eq:rp}
\mbox{\FR}: \qquad 
\wbs_{t+1} &= \prox{H}\ra[\eta_t]{\fs}\wbs_t. 
\end{align}

Interestingly, the last variant in \eqref{eq:rp}, which we call \FR, has been studied by \textcite{BauschkeKruk04} under the assumption that $\fs = \iota_{\Ksf}$ is an indicator function of an \emph{obtuse}\footnote{Recall that a convex cone $\Ksf$ is obtuse iff its dual cone $\Ksf^* := \{\wv^*: \inner{\wv}{\wv^*} \geq 0\}$ is contained in itself.} convex cone $\Ksf$. We prove the following result for \FR: 
\begin{restatable}{theorem}{fr}
Let each user participate in every round and the functions $\{f_i\}$ be convex. 
If the step size $\eta_t \equiv \eta$ is constant, then any fixed point of \FR is a solution of the \emph{regularized} problem \eqref{eq:regFL}. 
If the reflector $\ra[\eta]{\fs}$ is idempotent (\ie $\ra[\eta]{\fs} \ra[\eta]{\fs} = \ra[\eta]{\fs}$) and the users are homogeneous, 
then the vanilla iterates $\wbs_t$ of \FR converge. 

\label{thm:rp}
\end{restatable}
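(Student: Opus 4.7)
The theorem splits into two independent claims, both of which exploit that $H$ is a closed subspace and hence $\prox{H}$ is the orthogonal projector onto $H$ in the weighted inner product. For the first claim I would start from the fixed-point identity $\wbs^* = \prox{H}\ra[\eta]{\fv}(\wbs^*)$. Because $H$ is a subspace, this immediately forces $\wbs^*\in H$ and $\wbs^*-\ra[\eta]{\fv}(\wbs^*)\in H^\perp$. Substituting $\ra[\eta]{\fv}=2\prox[\eta]{\fv}-\id$ reduces this residual to $2(\wbs^*-\prox[\eta]{\fv}(\wbs^*))$, which by the identity $\nabla \env[\eta]{f}=(\id-\prox[\eta]{f})/\eta$ recalled in \Cref{sec:bg} equals $2\eta\nabla\tilde\fs(\wbs^*)$. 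Thus $\nabla\tilde\fs(\wbs^*)\in H^\perp$ while $\wbs^*\in H$, and since each $\env[\eta]{f_i}$ is convex this is precisely the first-order optimality condition for the convex program $\min_{\wbs\in H}\tilde\fs(\wbs)$; hence $\wbs^*$ solves \eqref{eq:regFL}.

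For the second claim I would first note that $\wbs_t\in H$ for every $t\geq 1$, so $\wbs_t=(\bar\wv_t,\dots,\bar\wv_t)$ is fully specified by a scalar iterate $\bar\wv_t$. Because $\ra[\eta]{\fs}$ acts component-wise, its idempotence decouples into idempotence of each $\ra[\eta]{f_i}$, which in turn forces its range into its fixed-point set $\argmin f_i=:C_i$. Collapsing the product-space iteration then yields the averaging scheme $\bar\wv_{t+1}=\sum_i \lambda_i \uv_{t,i}$ with $\uv_{t,i}:=\ra[\eta]{f_i}(\bar\wv_t)\in C_i$. Homogeneity gives $\Fm:=\bigcap_i C_i\neq\emptyset$, and any $\wv^*\in\Fm$ is a common fixed point of every $\ra[\eta]{f_i}$. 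Combining the convex-combination identity $\|\sum_i \lambda_i x_i-\wv^*\|^2=\sum_i\lambda_i\|x_i-\wv^*\|^2-\sum_i\lambda_i\|x_i-\bar x\|^2$ with non-expansiveness of each reflector at $\wv^*$ produces the Fej\'er-type inequality
\[
\|\bar\wv_{t+1}-\wv^*\|^2 \le \|\bar\wv_t-\wv^*\|^2 - \sum_i \lambda_i \|\uv_{t,i}-\bar\wv_{t+1}\|^2,
\]
whose telescoped form forces $\sum_i \lambda_i \|\uv_{t,i}-\bar\wv_{t+1}\|^2\to 0$. Extracting any subsequential limit $\bar\wv_{t_k}\to \bar\wv_\infty$, the vanishing slack together with continuity of each $\ra[\eta]{f_i}$ pins all values $\ra[\eta]{f_i}(\bar\wv_\infty)$ to their common weighted average $\wv_\infty'$; because each such value lies in $C_i$ by idempotence, $\wv_\infty'\in\Fm$. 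Since $\bar\wv_{t_k+1}\to\wv_\infty'$ and $\|\bar\wv_t-\wv_\infty'\|$ is non-increasing by Fej\'er monotonicity, the full sequence must converge to $\wv_\infty'$.

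The main obstacle is the convergence step just described. Unlike the Peaceman-Rachford or Douglas-Rachford cases, each reflector $\ra[\eta]{f_i}$ is only non-expansive and not firmly non-expansive (a one-dimensional check with $\Ksf=[0,\infty)$ shows this already for the indicator of an obtuse cone), so the composition $\prox{H}\ra[\eta]{\fv}$ is not averaged and the usual Krasnosel'skii--Mann theory does not apply directly. The idempotence hypothesis substitutes for firm non-expansiveness by trapping each reflector's image inside $C_i$; it is this structural constraint, rather than any contractive estimate, that couples with homogeneity to pin subsequential limits inside $\Fm$ and upgrade Fej\'er monotonicity to genuine convergence.
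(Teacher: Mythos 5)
Your proof is correct and follows essentially the same route as the paper: a Fej\'er-monotonicity inequality (your variance identity plus nonexpansiveness of the reflectors at a common minimizer is exactly the paper's firm-nonexpansiveness bound for $\prox[]{H}$ written componentwise), telescoping the slack, and using idempotence plus homogeneity to place subsequential limits in the common fixed-point set before upgrading Fej\'er monotonicity to full convergence. The only cosmetic differences are that you verify the first claim directly via the stationarity condition $\wbs^*\in H$, $\nabla\tilde\fs(\wbs^*)\in H^\perp$, where the paper instead identifies the fixed points of \FR with those of \FP and cites prior work, and that you conclude with a direct Fej\'er argument rather than invoking \Cref{thm:GFP}.
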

It follows easily from a result in \textcite{BauschkeKruk04} that a convex cone is obtuse iff its reflector is idempotent, and hence \Cref{thm:rp} strictly extends their result. We remark that \Cref{thm:rp} does not apply to the variants 
\eqref{eq:rg} and \eqref{eq:rpp} 
since the reflector $\ra{H}$ is not idempotent (recall $H$ is defined in \eqref{eq:FL-prod}). Of course, we can prove (linear) convergence of both variants 
\eqref{eq:rg} and \eqref{eq:rpp} 
if $\fs$ is strongly convex. We omit the formal statement and proof since strong convexity does not appear to be easily satisfiable in practice.

In \Cref{fig:comp}, we  compare the performance of different splitting algorithms and how they respond to different degrees of user heterogeneity. We use least squares and a CNN model on MNIST for the convex and noconvex experiments, respectively. The details of the experimental setup  are described in \Cref{sec:lslr_exp_setup}. For the convex setting,  as expected, \FS, \FPI, and \FA with $k=1$ achieve the smallest optimality gaps. The performance of all the algorithms deteriorates as users' data become more heterogeneous (see \Cref{sec:lslr_exp_setup}). For the nonconvex setting, the best results can be achieved by \FR and \FA with a big $k$. It is noteworthy that in the nonconvex setting, the performance of \FA with $k=1$ is significantly worse than that with $k=100$.

\begin{figure}[t]
    \centering
    
    \includegraphics[width=0.32\columnwidth]{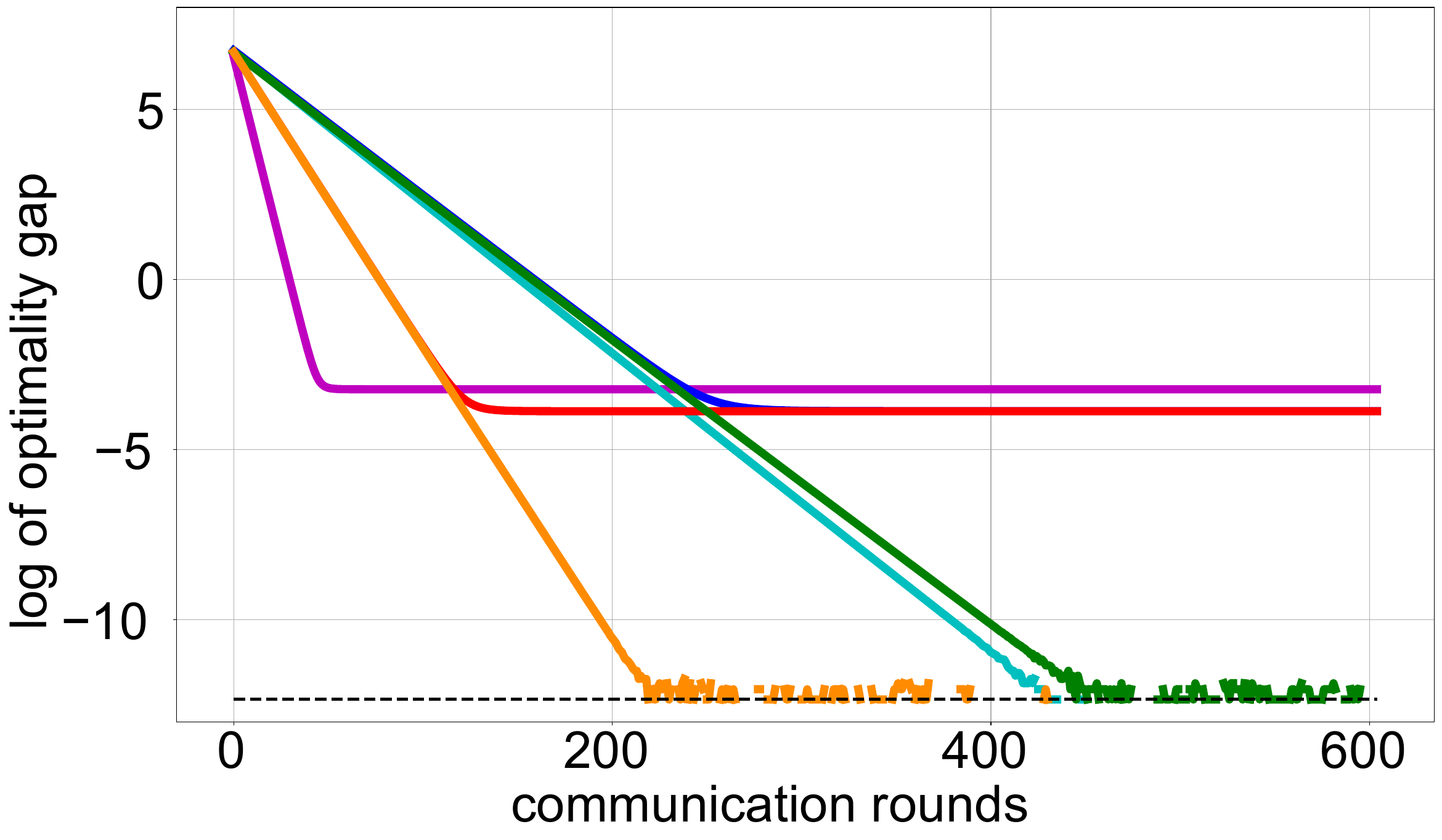}
    \includegraphics[width=0.32\columnwidth]{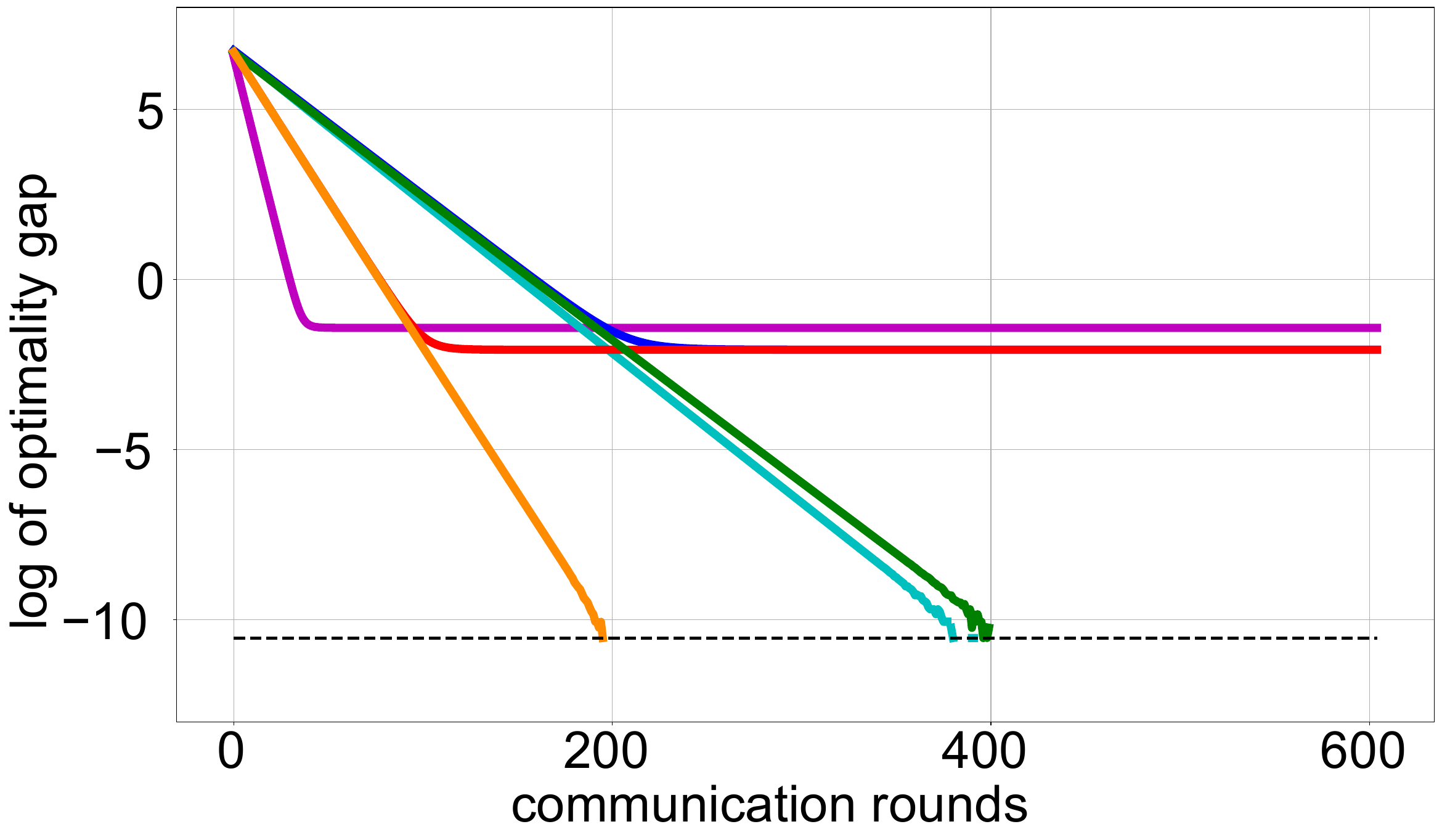}
    \includegraphics[width=0.32\columnwidth]{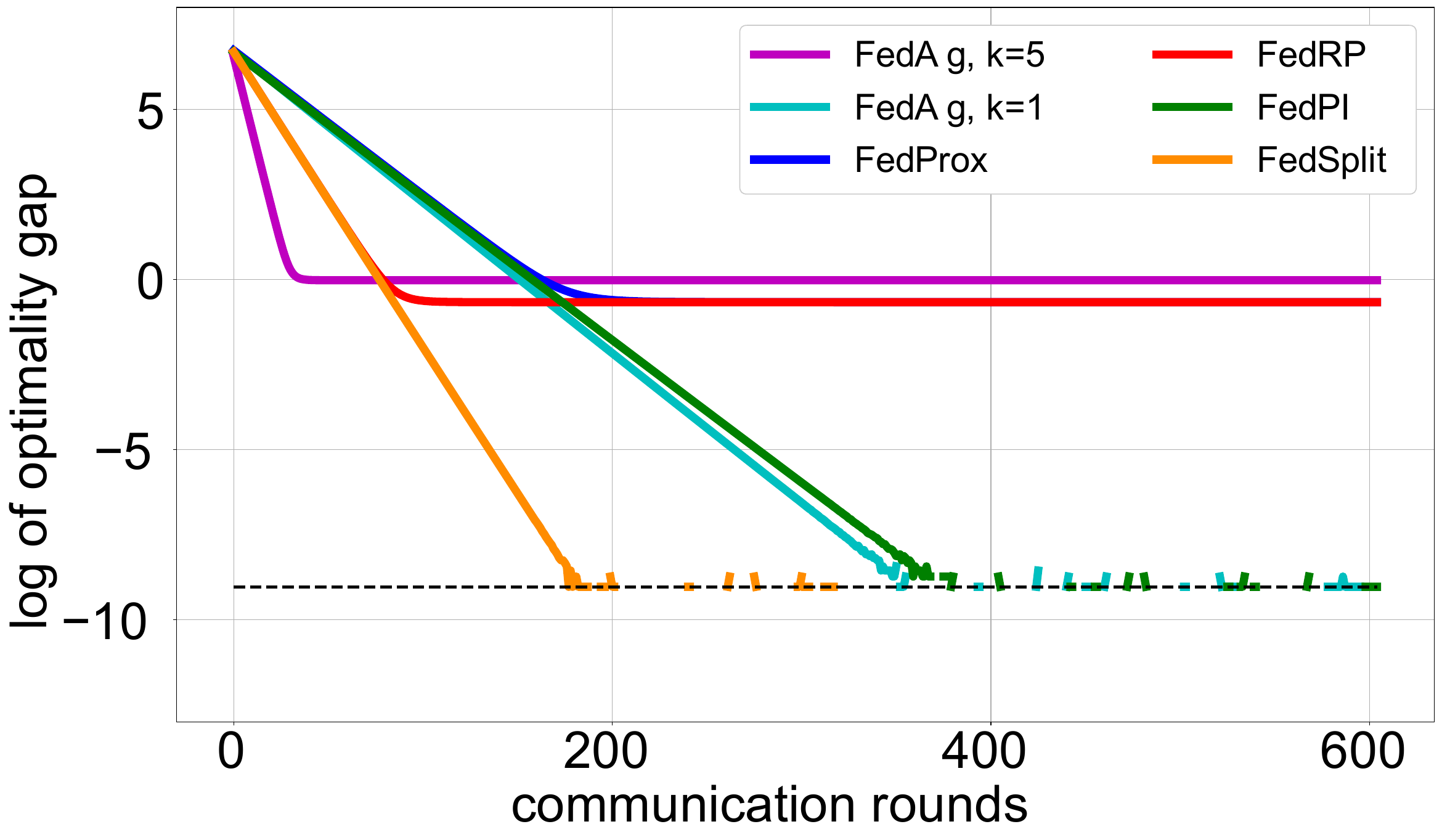}
    \includegraphics[width=0.32\columnwidth]{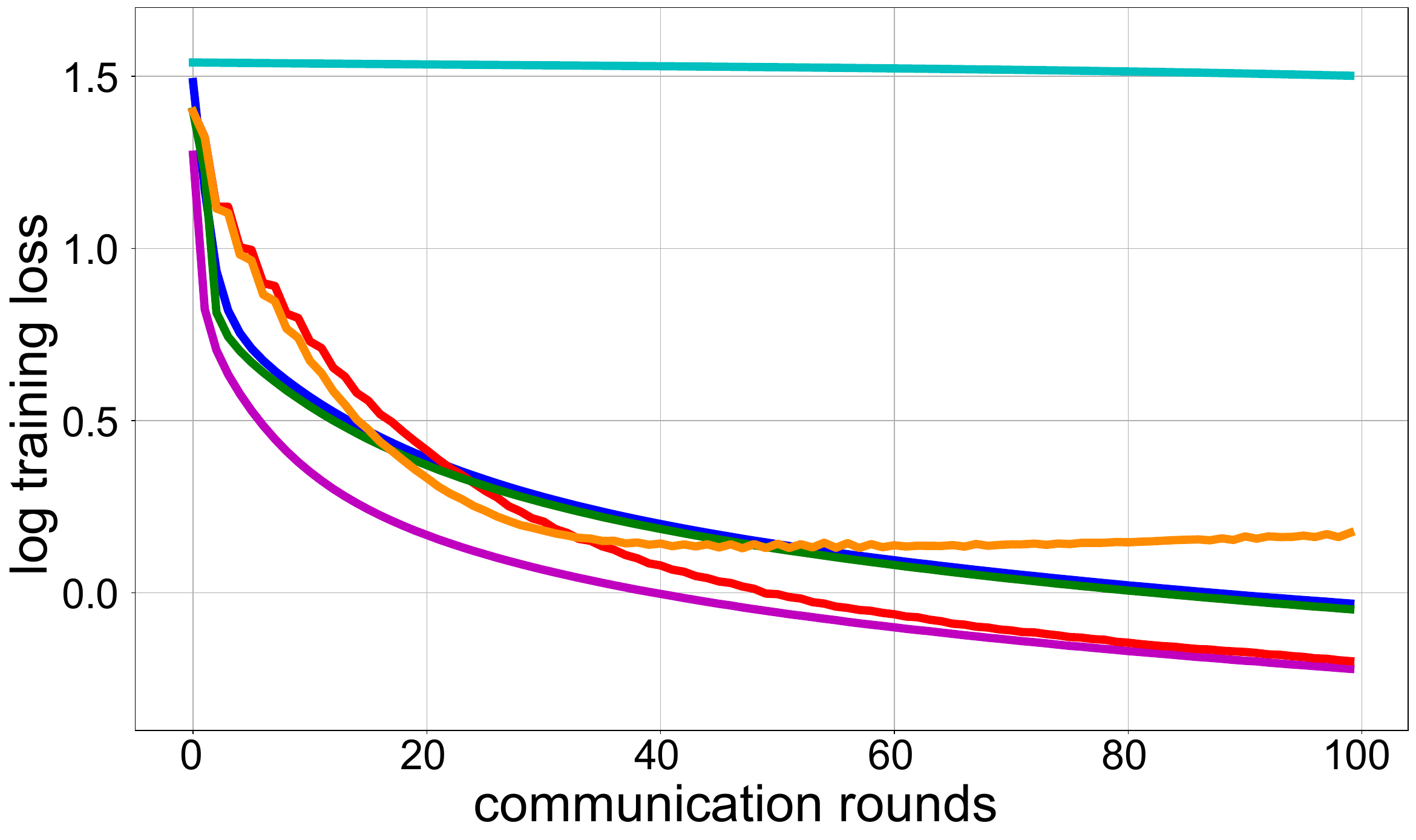}
    \includegraphics[width=0.32\columnwidth]{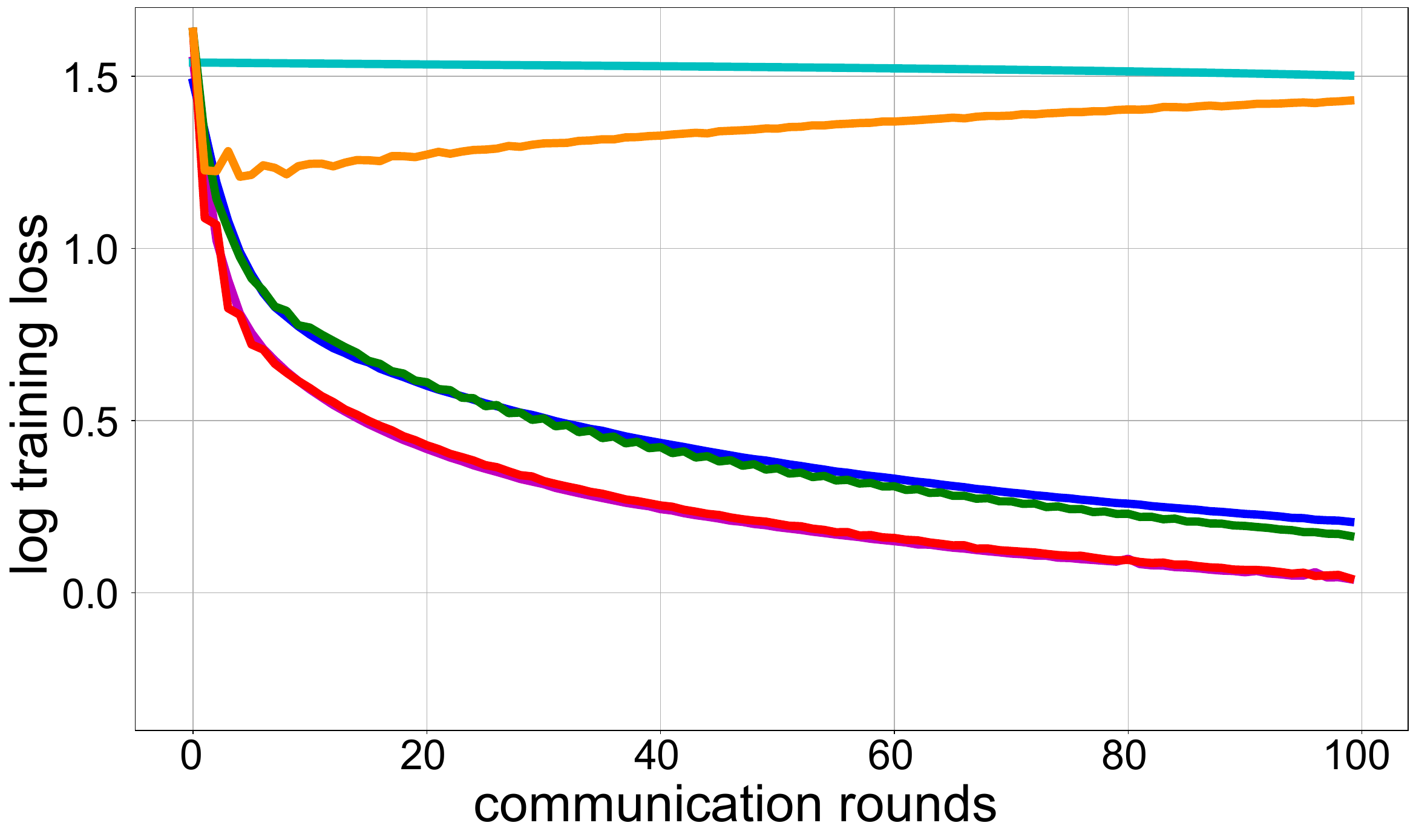}
    \includegraphics[width=0.32\columnwidth]{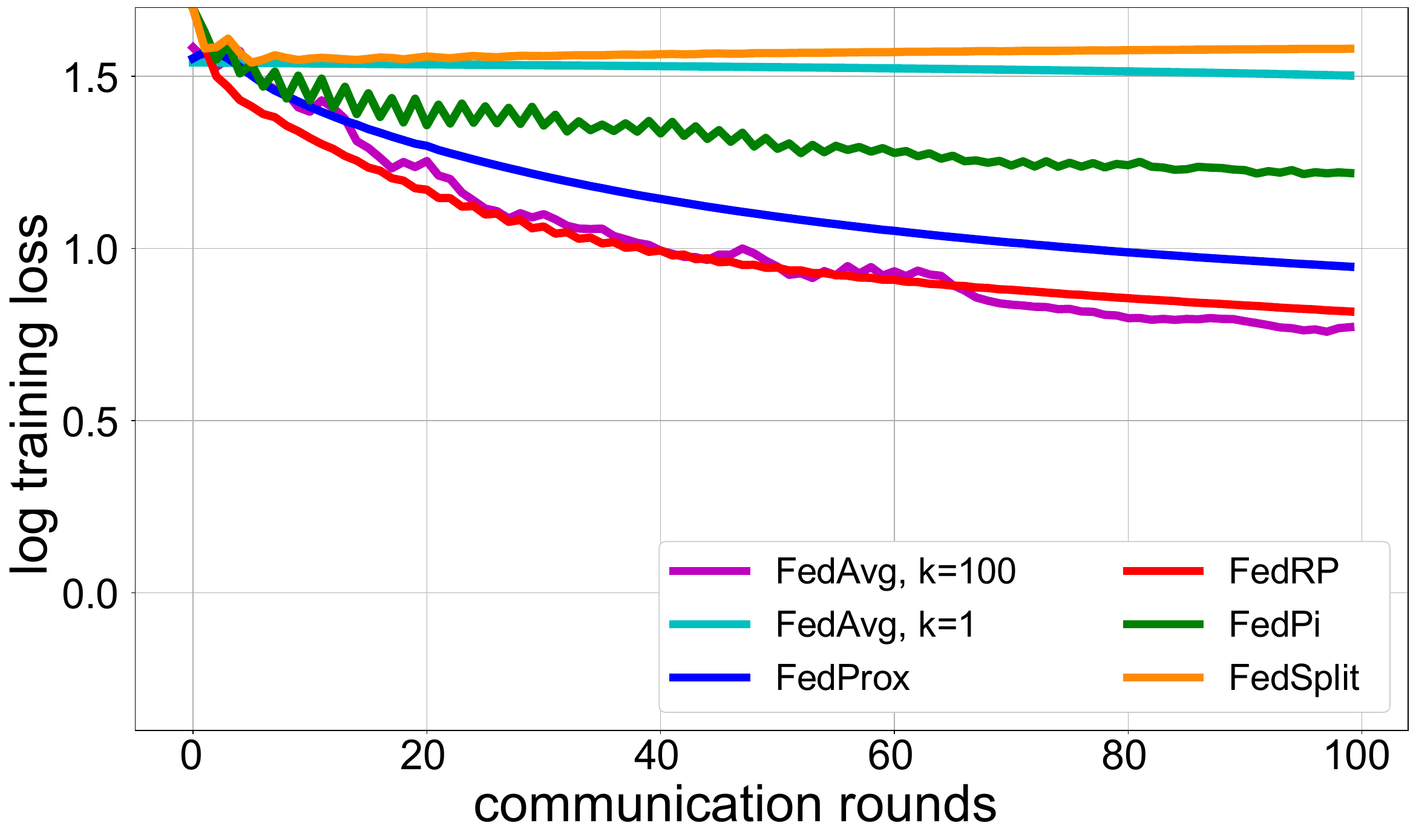}

    \caption{The effect of data heterogeneity on the performance of different splitting methods. The top row shows the results for the least squares, and the bottom row shows the results for nonconvex CNN model. Top-Left: small data heterogeneity  with $H\approx 119\times10^3$; Top-Middle: moderately data heterogeneity with $H\approx7.61\times10^{6}$; Top-Right: large data heterogeneity with $H\approx190.3\times10^{6}$. Bottom-Left: i.i.d. data distribution; Bottom-Middle: non-i.i.d. data distribution with maximum $6$ classes per user; Bottom-Right: non-i.i.d. data distribution with maximum $2$ classes per user.}
    \label{fig:comp}
\end{figure}

\section{Unification, implementation and acceleration}
\label{sec:acc}
The operator splitting view of \FL not only allows us to compare and understand the many existing \FL algorithms but also opens the door for unifying and accelerating them. 
In this section we first introduce a grand scheme that unifies all aforementioned \FL algorithms. Then, we provide two implementation variants that can accommodate different hardware and network constraints. Lastly, we explain how to adapt Anderson acceleration  \parencite{Anderson65} to the unified \FL algorithm, with almost no computation or communication overhead.

\paragraph{Unification.} We introduce the following grand scheme:
\begin{align}
\label{eq:gFL-1}
\zbs_{t+1} &= (1-\alpha_t)\ubs_t + \alpha_t \prox[\eta_t]{\fs}(\ubs_t) \\
\label{eq:gFL-2}
\wbs_{t+1} &= (1-\beta_t)\zbs_{t+1} + \beta_t \prox[]{H}(\zbs_{t+1}) \\
\label{eq:gFL-3}
\ubs_{t+1} &= (1-\gamma_t) \ubs_{t} + \gamma_t \wbs_{t+1}.
\end{align}
\Cref{tab:FL} confirms that the \FL algorithms discussed in \Cref{sec:sp} are all special cases of this unifying scheme, which not only provides new (adaptive) variants but also clearly reveals the similarities and differences between seemingly different algorithms. It may thus be possible to transfer progress on one algorithm to the others and vice versa. We  studied the effect of $\alpha$, $\beta$ and $\gamma$ and found that $\gamma$ mostly affects the convergence speed: the closer $\gamma$ is to 1, the faster the convergence is, while $\alpha$ and $\beta$ mostly determine the final optimality gap: the closer they are \emph{both} to 2 (as in \FS and \FPI), the considerably smaller the final optimality gap is. However, setting only one of them close to 2 only has a minor effect on optimality gap or convergence speed. 

\paragraph{Implementation.} Interestingly, we also have two implementation possibilities: 
\begin{itemize}[leftmargin=*]
\item decentralized: In this variant, all updates in \eqref{eq:gFL-1}-\eqref{eq:gFL-3} are computed and stored locally at each user while the server only acts as a bridge for synchronization: it receives $\zv_{t+1, i}$ from the $i$-th user and returns the (same) averaged model $\prox[]{H}(\zbs_{t+1})$ to all users. In a network where users  communicate directly with each other, we may then dispense the server and become fully decentralized.

\item centralized: In this variant, the $i$-th user only stores $\uv_{t,i}$ and is responsible for computing and communicating $\prox[\eta_t]{f_i}(\uv_{t, i})$ to the server. In return, the server performs the rest of \eqref{eq:gFL-1}-\eqref{eq:gFL-3} and then sends (different) $\uv_{t+1, i}$ back to the $i$-th user.
\end{itemize}

The two variants are of course mathematically equivalent in their basic form. However, as we see below, they lead to different acceleration techniques and we may prefer one over the other depending on the hardware and network constraints.

\begin{table}
\caption{A unifying framework \eqref{eq:gFL-1}-\eqref{eq:gFL-3} for \FL. Note that (a) \FA replaces the proximal update $\prox[\eta]{\fs}$ with a gradient update $\grad[\eta]{\fs,k}$; (b) ? indicates properties that remain to be studied; (c) ``sampling'' refers to selecting a subset of users while ``stochastic'' refers to updating with stochastic gradient.
}
\label{tab:FL}
\centering
  \begin{tabular}{llll|lcccc}
    \toprule
    Algorithm & $\alpha$ & $\beta$ & $\gamma$ & $\eta_t \!\equiv\! \eta$ & $\eta_t \!\to\! 0, \sum_t\eta_t\!=\!\infty$ & nonconvex & sampling & stochastic \\
    \midrule
    \FA       &    1     &    1    &    1     & \cref{eq:wFL} &  \cref{eq:wFL} & \checkmark & \checkmark & \checkmark\\
    \FP       &    1     &    1    &    1     & \cref{eq:regFL} &  \cref{eq:wFL} & \checkmark & \checkmark & \checkmark \\
    \FS       &    2     &    2    &    1     & \cref{eq:wFL}    &  --  & ? & ? & ?\\
    \midrule\midrule
    \FPI      &    2     &    2    &$\tfrac12$& \cref{eq:wFL}    &  --  & \checkmark & ? & ?\\
    \FR       &    2     &    1    &    1     & \cref{eq:regFL} &  \cref{eq:wFL} & ? & ? & ?\\
    \bottomrule
  \end{tabular}
\end{table}

\paragraph{Acceleration.} Let us abstract the grand scheme \eqref{eq:gFL-1}-\eqref{eq:gFL-3} as the map $\ubs_{t+1} = \Tm \ubs_t$, where $\Tm$ is nonexpansive if $\fs$ is convex and $\alpha_t, \beta_t \in [0,2], \gamma_t \in [0,1]$. Following \textcite{FuZB20}, we may then apply the Anderson type-II acceleration to furthur improve convergence. Let $U = [\ubs_{t-\tau}, \ldots, \ubs_t]$ be given along with $T = [\Tm\ubs_{t-\tau}, \ldots, \Tm\ubs_t]$. We solve the following simple least-squares problem: 
\begin{align}
\piv^* = \argmin_{\piv^\top \one = 1} ~ \{\| (U-T) \piv\|_2^2\} = \tfrac{G^\dag \one}{\one^\top G^\dag \one}, ~~\where~~ G = (U-T)^\top (U-T).
\end{align}
(For simplicity we do not require $\piv$ to be nonnegative.) Then, we update $\ubs_{t+1} = T\piv^*$. Clearly, when $\tau=0$, (trivially) $\pi^* = 1$ and we reduce to $\ubs_{t+1} = \Tm\ubs_t$. With a larger memory size $\tau$, we may significantly improve convergence. Importantly, all heavy lifting (computation and storage) is done at the server side and we do not increase communication at all. We note that the same acceleration can also be applied on each user for computing $\prox[\eta_t]{\fs}$, if the users can afford the memory cost.

We have already seen how different variants (\eg \FA, \FP, \FS, \FPI, \FR) compare to each other in \S\ref{sec:sp}. We performed further experiments to illustrate their behaviour under Anderson acceleration.
(See \Cref{sec:app-setup} for details on our experimental setup.) As can be observed in \Cref{fig:extrap},  Anderson acceleration helps \FL algorithms converge considerably faster, all without incurring any overhead. For the convex models (least squares and logistic regression), our implementation of Anderson-acceleration speeds up all of the algorithms, especially \FA, \FP and \FR. However, for the nonconvex CNN model, it is beneficial only for \FA, \FP and \FPI, while applying it to \FR and \FS makes them unstable. It is noteworthy that we already know \FP and \FPI are more stable than \FR and \FS, respectively, and hence it makes intuitive sense that acceleration improves the two more stable algorithms. Another important point we wish to point out is that our acceleration method does not affect the quality of the algorithms' final solutions, but rather it just accelerates their convergence.

\begin{figure}[t]
    \includegraphics[width=0.33\columnwidth]{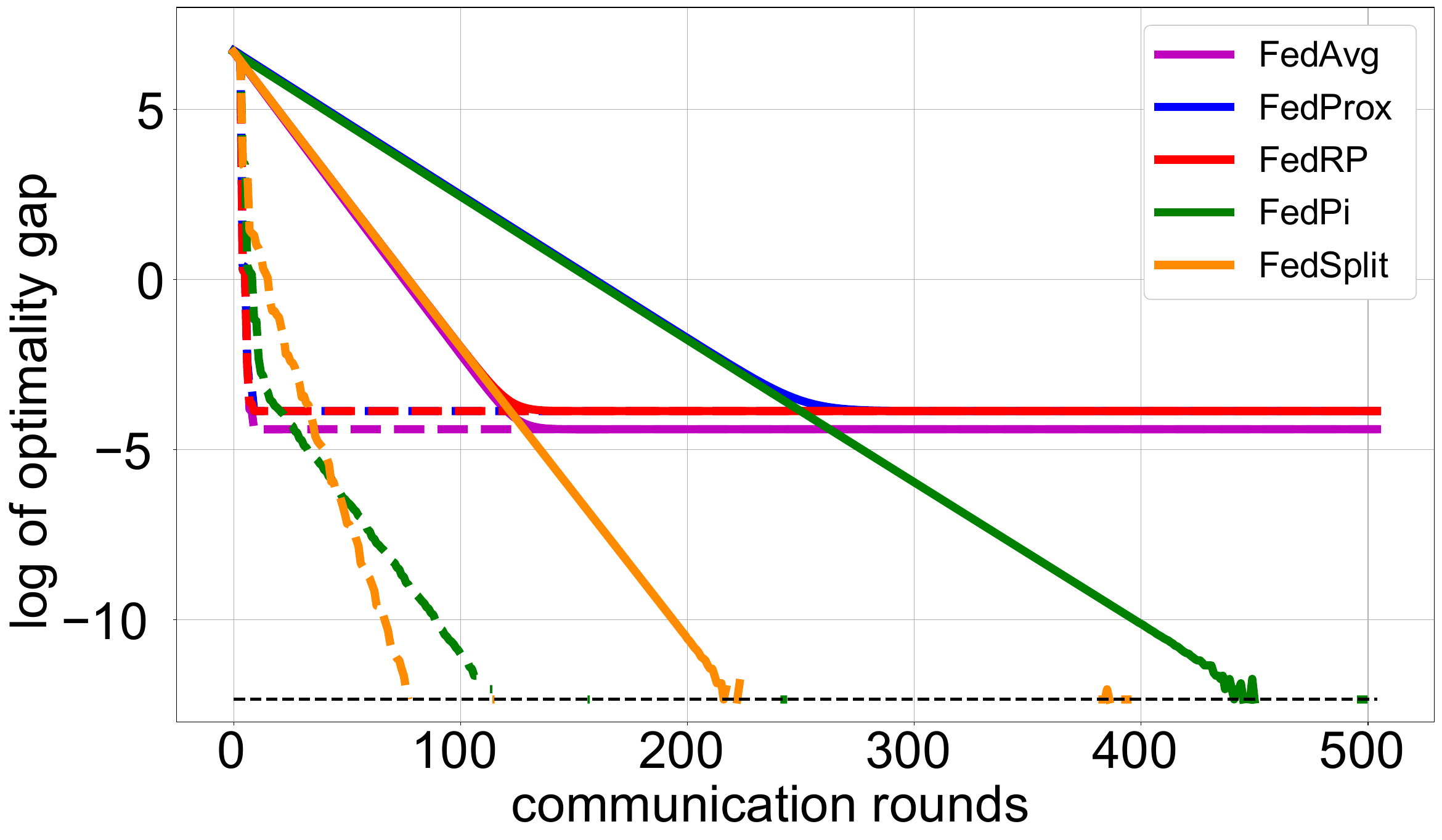}
    \includegraphics[width=0.32\columnwidth]{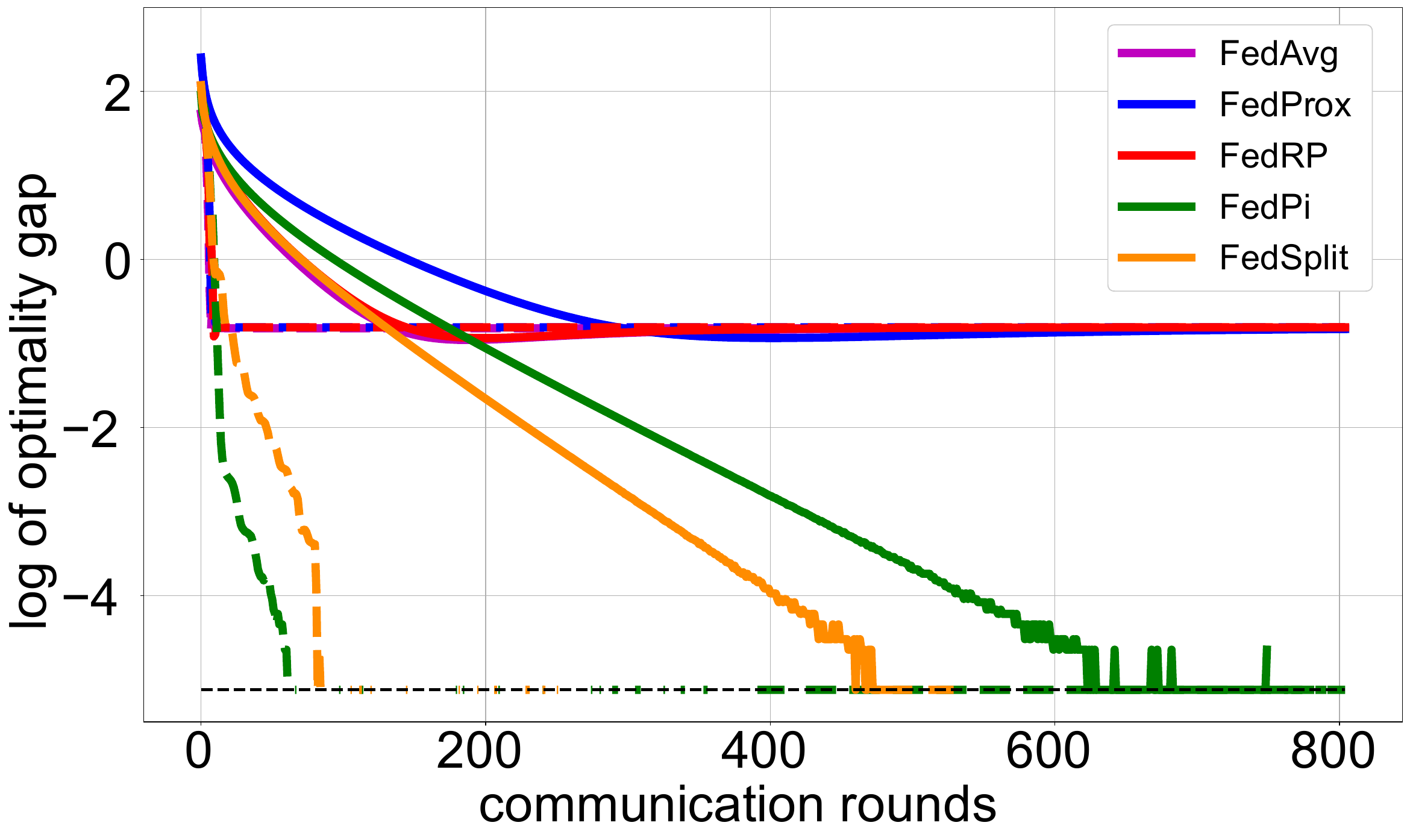}
    \includegraphics[width=0.32\columnwidth]{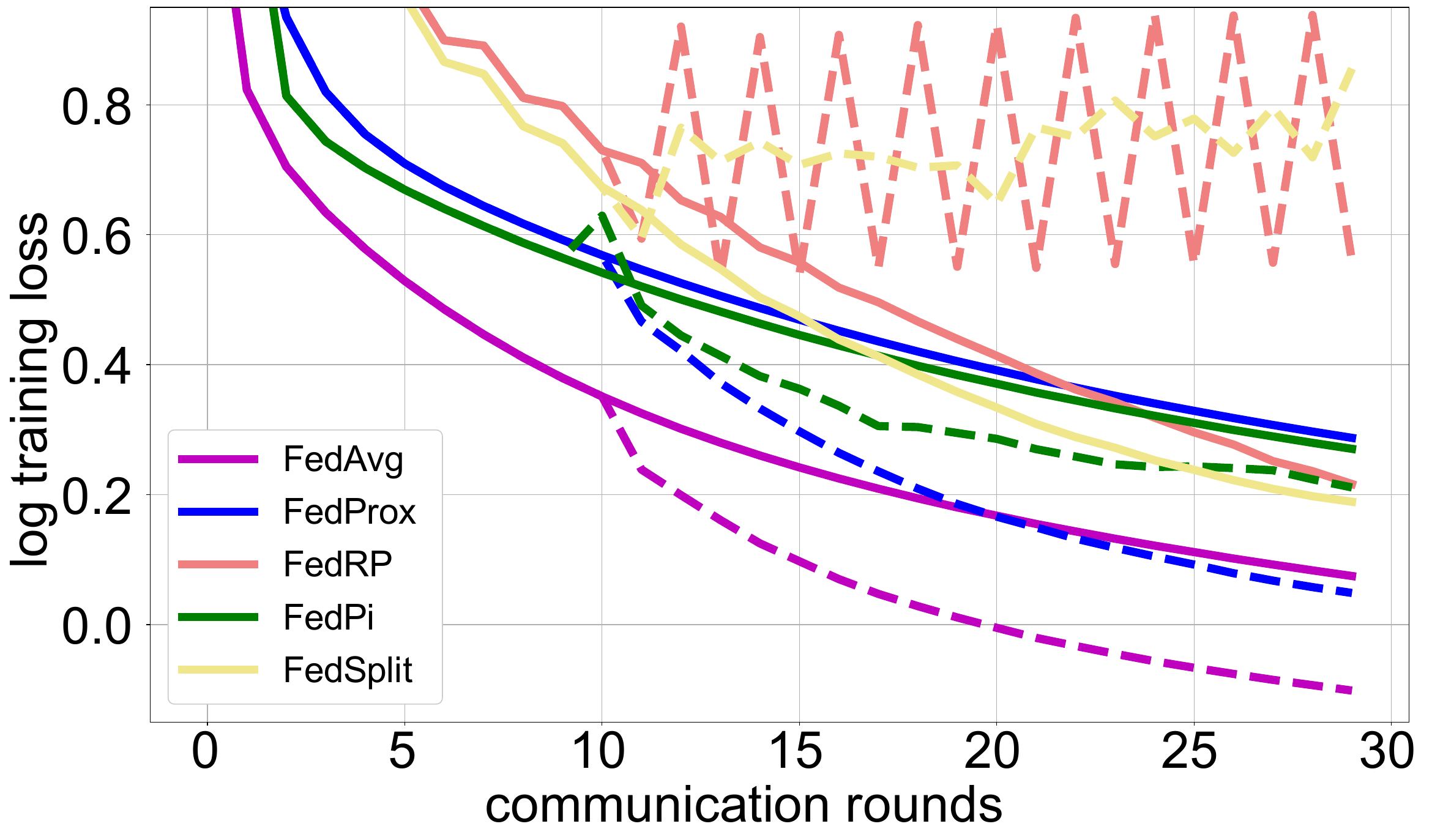}
    \caption{Effect of Anderson acceleration. Left: least squares with $\tau=2$; Middle: logistic regression $\tau=2$; Right: nonconvex CNN with $\tau=10$.  Dashed lines are the accelerated results. 
    }
    \label{fig:extrap}
\end{figure}

\section{Conclusions}
\label{sec:con}
We have connected \FL with the established theory of operator splitting, revealed new insights on existing algorithms and suggested new algorithmic variants and analysis. Our unified view makes it easy to  understand, compare, implement and accelerate different \FL algorithms in a streamlined and standardized fashion. Our experiments demonstrate some interesting differences in the convex and nonconvex settings, and in the early and late communication rounds. In the future we plan to study the effect of stochasticity and extend our analysis to nonconvex functions. While our initial experiments confirmed the potential of Anderson-acceleration for \FL, further work is required to formalize its effect in theory (such as \cite{ZhangOB20}) and to understand its relation with other momentum methods in \FL.

\section*{Acknowledgement}
We thank NSERC and WHJIL for funding support. 


\printbibliography[title={References}]

\clearpage

\appendix

\begin{center}
\Large
\bf
Appendix for \emph{An Operator Splitting View of Federated Learning}
\end{center}
\section{Experimental Setup}
\label{sec:app-setup}
In this section we provide more experimental details that are deferred from the main paper.

\subsection{Experimental setup: Least Squares and Logistic Regression}
\label{sec:lslr_exp_setup}
For simulating an instance of the aforementioned least squares and logistic regression problems, we follow the experimental setup of \textcite{pathak2020fedsplit}.

\paragraph{Least squares regression:} We consider a set of $\numuser$ users with local loss functions  $f_{\userind}(\wv):=\frac{1}{2}\|A_{\userind}\wv- \bv_{\userind}\|_2^2$, and the main goal is to solve the following minimization problem:
\begin{align*}
   \min_{\wv\in \RR^{\numdim}} ~ F(\wv) := \sum_{i=1}^{\numuser}f_{\userind}(\wv) = \frac{1}{2} \sum_{\userind=1}^m \|A_{\userind}\wv - \bv_{\userind}\|_2^2,
\end{align*}
where $\wv\in\RR^d$ is the optimization variable. For each user $\userind$, the response vector $\bv_\userind\in\RR^{n_\userind}$ is related to the  design matrix $A_\userind \in \RR^{n_\userind \times d}$ via the linear model
\begin{align*}
   \centering
    \bv_i= A_i \wv_\star + \epsilonv_i,
\end{align*}
where $\epsilonv_\userind \sim N(0,\sigma^2 I_{n_\userind})$ for some $\sigma > 0$ is the noise vector. The design matrix $A_\userind \in \RR^{n_\userind \times d}$ is generated by sampling its elements from a standard normal,  $A_\userind^{k,l} \sim N(0,1)$. We instantiated the problem with the following set of parameters:
\begin{align*}
   \centering
   m=25, ~~ d=100, ~~ n_\userind=5000, ~~ \sigma^2=0.25.
\end{align*}

\paragraph{Binary logistic regression:} There are $m$ users and the design matrices $A_\userind \in \RR^{n_\userind \times d}$ for $\userind = 1, \ldots, m$ are generated as described before. Each user $\userind$ has a label vector $\bv_\userind \in \{-1,1\}^{n_\userind}$. The conditional probability of observing $\bv_{\userind j}=1$ (the $j$-th label in $\bv_\userind$) is
\begin{align*}
   \centering
   \mathbf{P}\{\bv_{\userind j}=1\}=\frac{e^{\av_{\userind j}^\top \wv_0}}{1+e^{\av_{\userind j}^\top \wv_0}}, ~~~ j=1, \ldots, n_\userind,
\end{align*}
where $\av_{\userind j}$ is the $j$-th row of $A_\userind$. Also, $\wv_0\in\RR^d$ is fixed and sampled from $N(0,1)$. Having generated the design matrices $A_\userind$ and sampled the labels $\bv_\userind$ for all users, we find the maximum likelihood estimate of $\wv_0$ by solving the following convex program, which has a solution $\wv_*$:
\begin{align*}
   \min_{\wv\in \RR^{\numdim}} ~ F(\wv) := \sum_{\userind=1}^{m} f_\userind(\wv)=\sum_{\userind=1}^{m}\sum_{j=1}^{n_\userind}\log(1+e^{-b_{ij} \av_{ij}^\top \wv})+\frac{\| \wv \|_2^2}{2mn_\userind}.
\end{align*}
Following \textcite{pathak2020fedsplit}, we set
\begin{align*}
  \centering
  m=10, ~~ d=100, ~~ n_\userind=1000.
\end{align*}

\paragraph{Data heterogeneity measure:} We adopt the data heterogeneity measure  of  \textcite{KhaledMR20}, and quantify the amount of heterogeneity in users' data for  the least squares and logistic regression problems by
\begin{align*}
   H := \frac{1}{m} \sum_{\userind=1}^{\numuser}\parallel\nabla f_\userind(\wv_{*})\parallel_2^2,
\end{align*}
where $\wv_*$ is a minimizer of the original problem \eqref{eq:wFL}. When users' data is homogeneous, all the local functions $f_\userind$ have the same minimizer of $\wv_*$ and $H=0$. In general, the more heterogeneous the users' data is, the larger $H$ becomes.

\paragraph{Other parameters:} Throughout the experiments, we use local learning rate $\eta=10^{-5}$ for least squares, and $\eta = 10^{-2}$ for logistic regression, and run \FA  with $k=5$ for both least squares and logistic regression, unless otherwise specified. In \Cref{fig:extrap}, \FA is run with $k=2$.

\subsection{Experimental setup: MNIST datasets}
\label{sec:mnist_exp_setup}
 We consider a distributed setting with $20$ users. In order to create a non-i.i.d. dataset, we follow a similar procedure as in \textcite{McMahanMRHA17}: first we split the data from each class into several shards. Then, each user is randomly assigned a number of shards of data.  For example, in \Cref{fig:comp} to guarantees that no user receives data from more than $6$ classes, we split each class of MNIST into $12$ shards (i.e., a total of $120$ shards for the whole dataset), and each user is randomly assigned $6$ shards of data. By considering $20$ users, this procedure guarantees that no user receives data from more than $6$ classes and the data distribution of each user is different from each other. The local datasets are balanced--all users have the same amount of training samples.  The local data is split into train, validation, and test sets with percentage of $80$\%, $10$\%, and $10$\%, respectively. In this way, each user has $2400$ data points for training,  $300$ for test,  and $300$ for  validation. We use a simple 2-layer CNN model with ReLU activation, the detail of which can be found in \Cref{table:mnist_model}. To update the local models at each user using its local data, unless otherwise is stated, we apply gradient descent with $\eta=0.01$ for \FA, and gradient descent with $k=100$ and $\eta=0.01$ for proximal update in the splitting algorithms. 
\begin{table}[th]
\footnotesize	
\centering
\caption{MNIST model \label{table:mnist_model}}
\begin{tabular}{lcccc} \toprule
          Layer &  Output Shape &  $\#$ of Trainable Parameters & Activation & Hyper-parameters  \\\midrule
           Input & $(1, 28, 28)$ & $0$ &  &  \\
           Conv2d & $(10, 24, 24)$ & $260$ & ReLU & kernel size =$5$; strides=$(1, 1)$ \\
           MaxPool2d & $(10, 12, 12)$ & $0$ &  & pool size=$(2, 2)$ \\
           Conv2d & $(20, 8, 8)$ & $5,\!020$ & ReLU & kernel size =$5$; strides=$(1, 1)$ \\
           MaxPool2d & $(20, 4, 4)$ & $0$ &  & pool size=$(2, 2)$ \\
           Flatten & $320$ & $0$ & & \\
            Dense &  $20$ & $6,\!420$ & ReLU & \\
            Dense &  $10$ & $210$ & softmax & \\ \midrule
          Total & & $11,\!910$  & & \\ \bottomrule
\end{tabular}
\end{table}
\subsection{Experimental setup: CIFAR-10 dataset}
\label{sec:cifar_exp_setup}
We consider a distributed setting with $10$ users, and use a sub-sampled CIFAR-10 dataset, which is $20\%$ of the training set. In order to create a non-i.i.d.  dataset, we follow a similar procedure as in \textcite{McMahanMRHA17}: first we sort all data points according to their classes. Then, they are split into $100$ shards, and each user is randomly assigned $10$ shards of data. 
The local data is split into train, validation, and test sets with percentage of $80$\%, $10$\%, and $10$\%, respectively. In this way, each user has $800$ data points for training,  $100$ for test,  and $100$ for  validation. We use a  2-layer CNN model on the dataset, , the detail of which can be found in \Cref{table:cifar-10_model}. To update the local models at each user using its local data, we apply stochastic gradient descent (SGD) with local batch size $B=20$ and local learning rate $\eta=0.1$. All users participate in each communication round.

\begin{table}[th]
\footnotesize	
\centering
\caption{CIFAR-10 model \label{table:cifar-10_model}}
\begin{tabular}{lcccc} \toprule
          Layer &  Output Shape &  $\#$ of Trainable Params & Activation & Hyper-parameters  \\\midrule
           Input & $(3, 32, 32)$ & $0$ &  &  \\
           Conv2d & $(64, 28, 28)$ & $4,\!864$ & ReLU & kernel =$5$; strides=$(1, 1)$ \\
           MaxPool2d & $(64, 14, 14)$ & $0$ &  & pool size=$(2, 2)$ \\
           LocalResponseNorm & $(64, 14, 14)$ & $0$ &  & size=$2$ \\
           Conv2d & $(64, 10, 10)$ & $102,\!464$ & ReLU & kernel =$5$; strides=$(1, 1)$ \\
           LocalResponseNorm & $(64, 10, 10)$ & $0$ &  & size=$2$ \\
           MaxPool2d & $(64, 5, 5)$ & $0$ &  & pool size=$(2, 2)$ \\
           Flatten & $1,\!600$ & $0$ & & \\
           Dense &  $384$ & $614,\!784$ & ReLU & \\
           Dense &  $192$ & $73,\!920$ & ReLU & \\
           Dense &  $10$ & $1,\!930$ & softmax & \\ \midrule
          Total & & $797,\!962$  & & \\ \bottomrule
\end{tabular}
\end{table}

\newpage
\section{Additional experimental results}

In this section we provide more experimental results that are deferred from the main paper.

\subsection{FedRP experiments to supplement  \Cref{fig:eta_effect_fedprox}}
In \Cref{fig:fedrp_eta}, we show the effect of step size $\eta$ on the convergence of \FR on least squares and logistic regression. 
We run \FR with both fixed and diminishing step sizes. In the experiments with diminishing step sizes, we  set the initial value of $\eta$ (i.e. $\eta_0$) to larger values--compared to the constant $\eta$ values--to ensure that $\eta$ does not get very small after the first few rounds. From the figure, one can see that \FR with a fixed learning rate converges faster in early stage but to a sub-optimal solution. In contrast, \FR with diminishing $\eta$ converges slower early, but to better final solutions. It is interesting to note that the convergence behaviour of \FR is similar to that of \FP and when the conditions of \Cref{thm:fp} are satisfied, \FR converges to a correct solution of the original problem \eqref{eq:wFL}, \eg see the results for $\eta_t \propto 1/t$ which satisfies both conditions in \Cref{thm:fp}.
Moreover, one can see that ergodic averaging hardly affects the convergence of \FR in the following results.  
\begin{figure}[H]
    \centering
    \includegraphics[width=0.49\columnwidth]{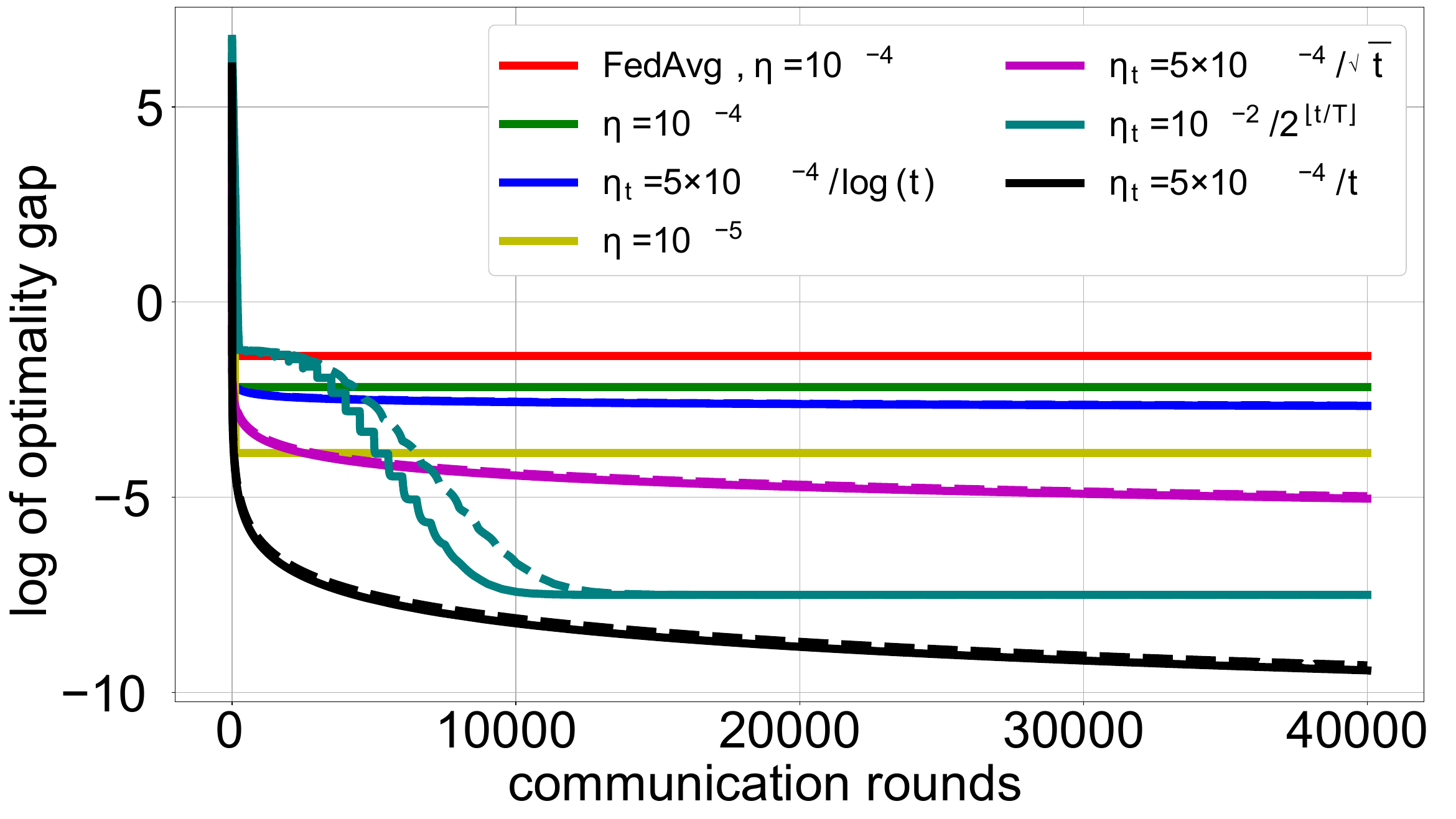}
    \includegraphics[width=0.49\columnwidth]{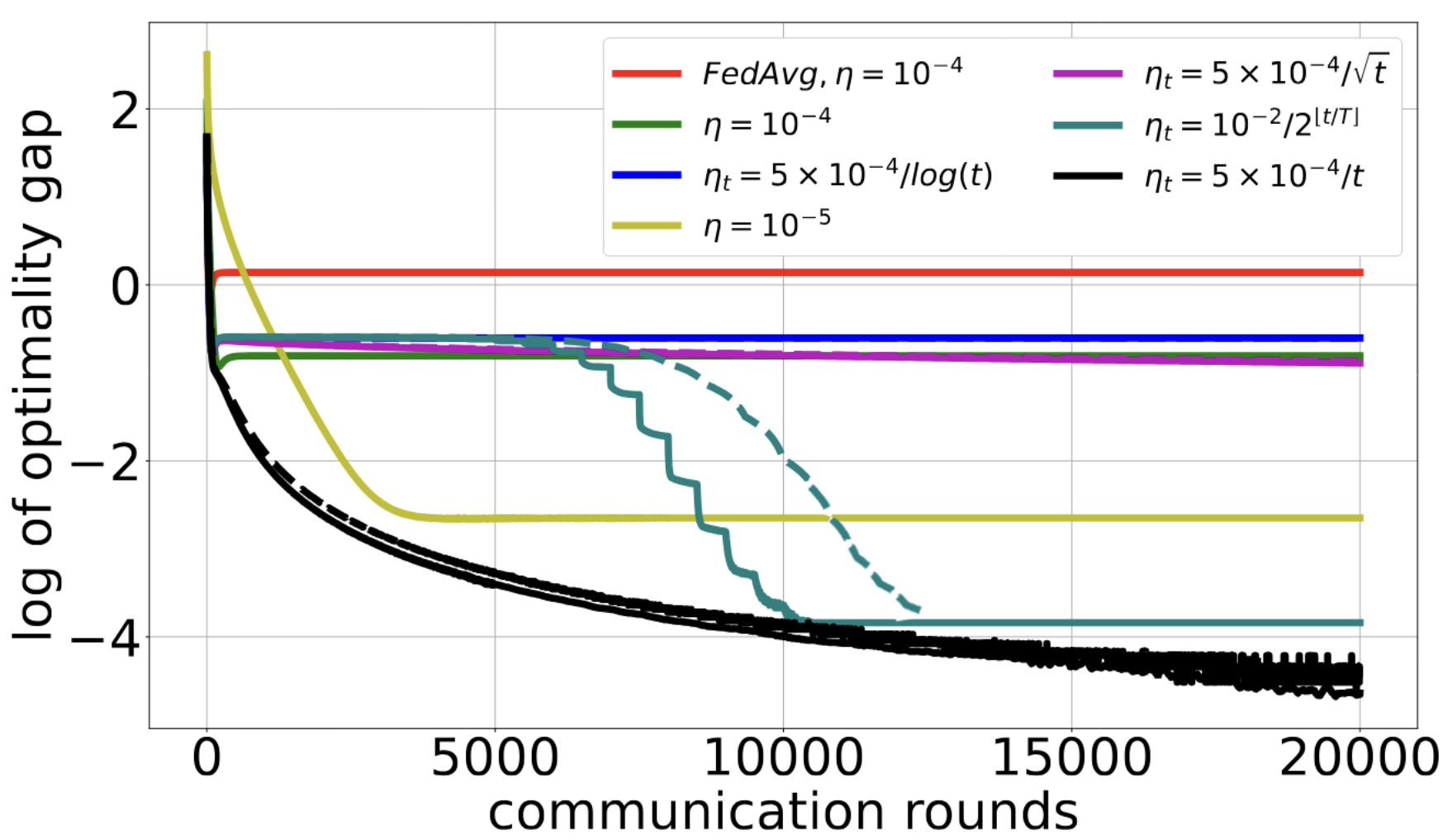}
    \caption{Effect of step size $\eta$ and averaging on \FR. Left: least squares; Right: logistic regression. The dashed and solid lines with the same color show the results obtained with and without the ergodic averaging step in \Cref{thm:fp}, respectively. For exponentially decaying $\eta_t$, we use period $T$ equal to $500$ for both least squares and logistic regression. }
    \label{fig:fedrp_eta}
\end{figure}

\clearpage
\newpage

\subsection{Effect of changing \texorpdfstring{$k$}{k} or \texorpdfstring{$\eta$}{eta} for \FA}
In \Cref{fig:trade_off}, we observe a trade-off between faster (early) convergence and better final solution  when varying local epochs $k$ or learning rate $\eta$, in least squares and logistic regression experiments. When $\eta$ is fixed, larger local epochs $\numstep$ leads to faster convergence (in terms of communication rounds) during the early stages, at the cost of a worse final solution. When local epochs $\numstep$ is fixed, same trade-off holds true for increasing $\eta$.

\begin{figure}[H]
    \centering
    \includegraphics[height=4.2cm,width=0.45\columnwidth]{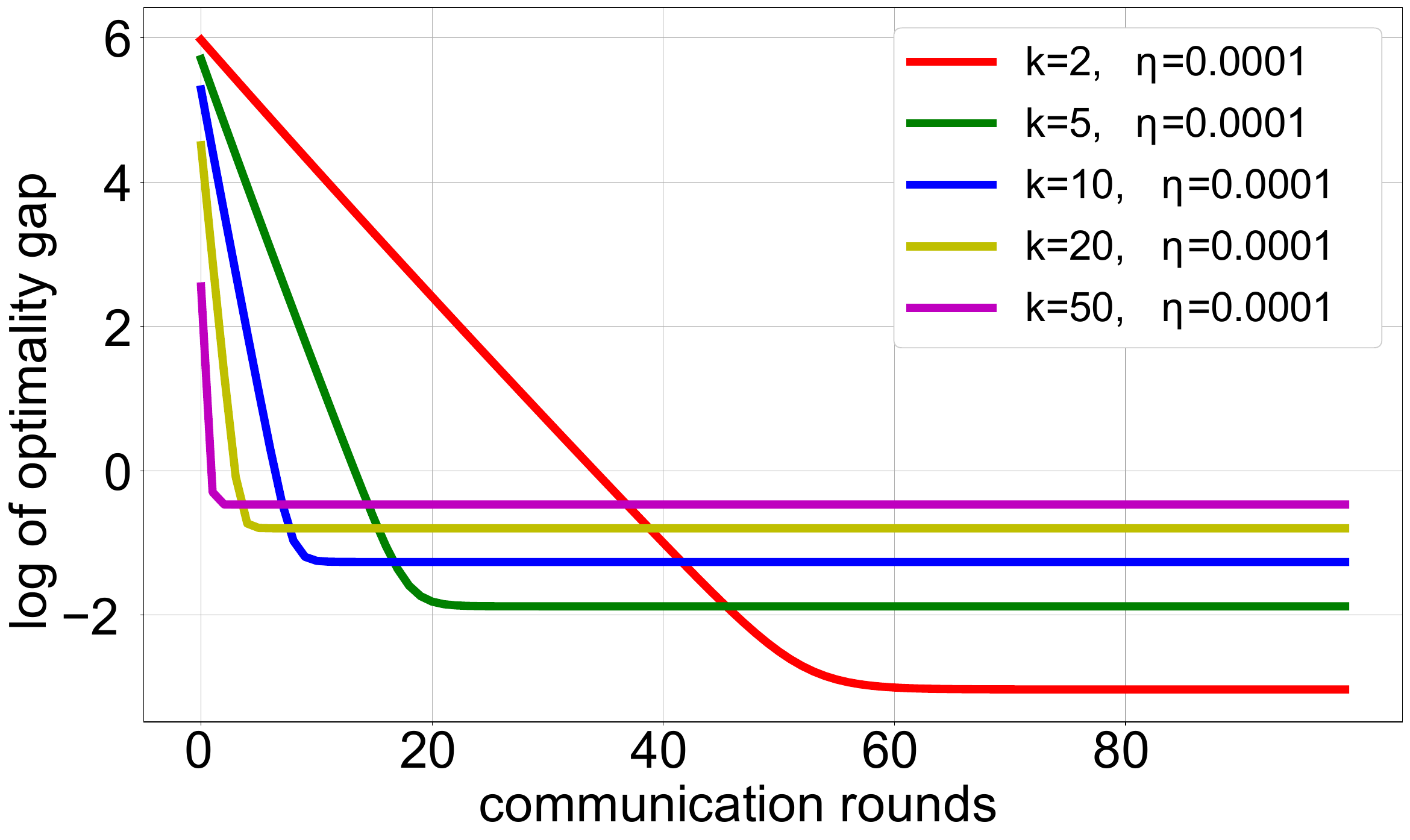}
    \includegraphics[height=4.2cm,width=0.45\columnwidth]{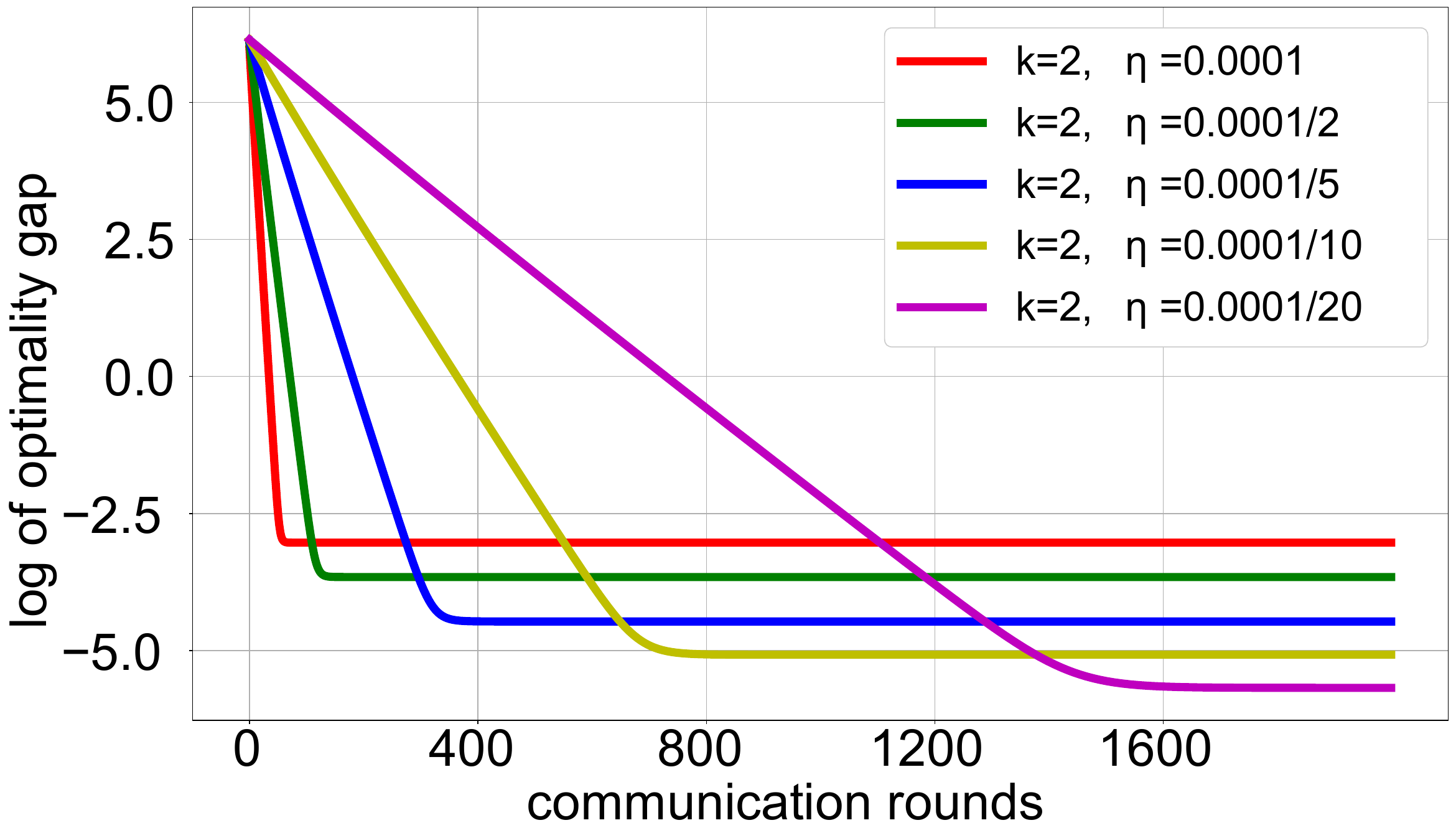}
    \includegraphics[height=4.2cm,width=0.45\columnwidth]{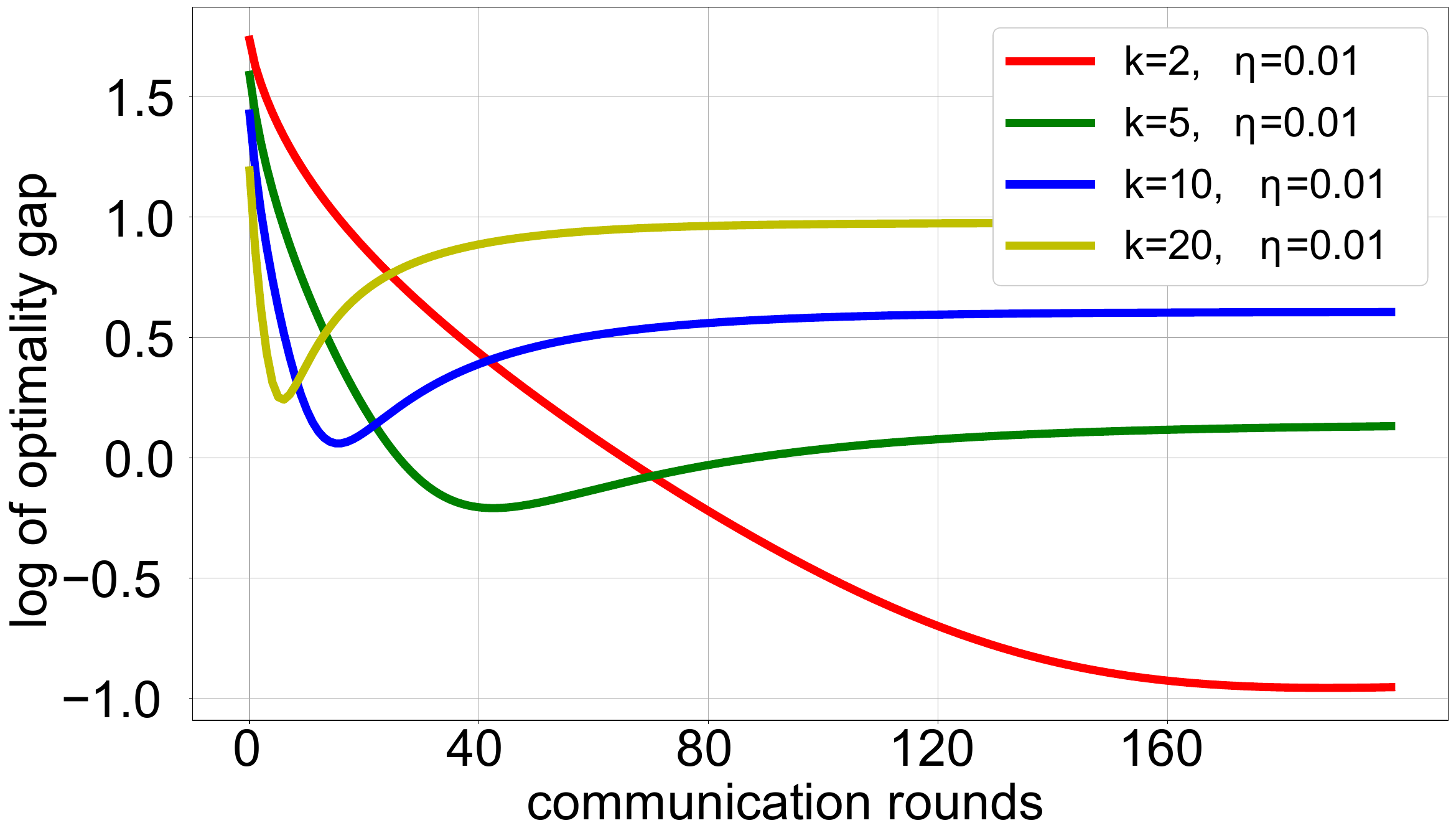}
    \includegraphics[height=4.2cm,width=0.45\columnwidth]{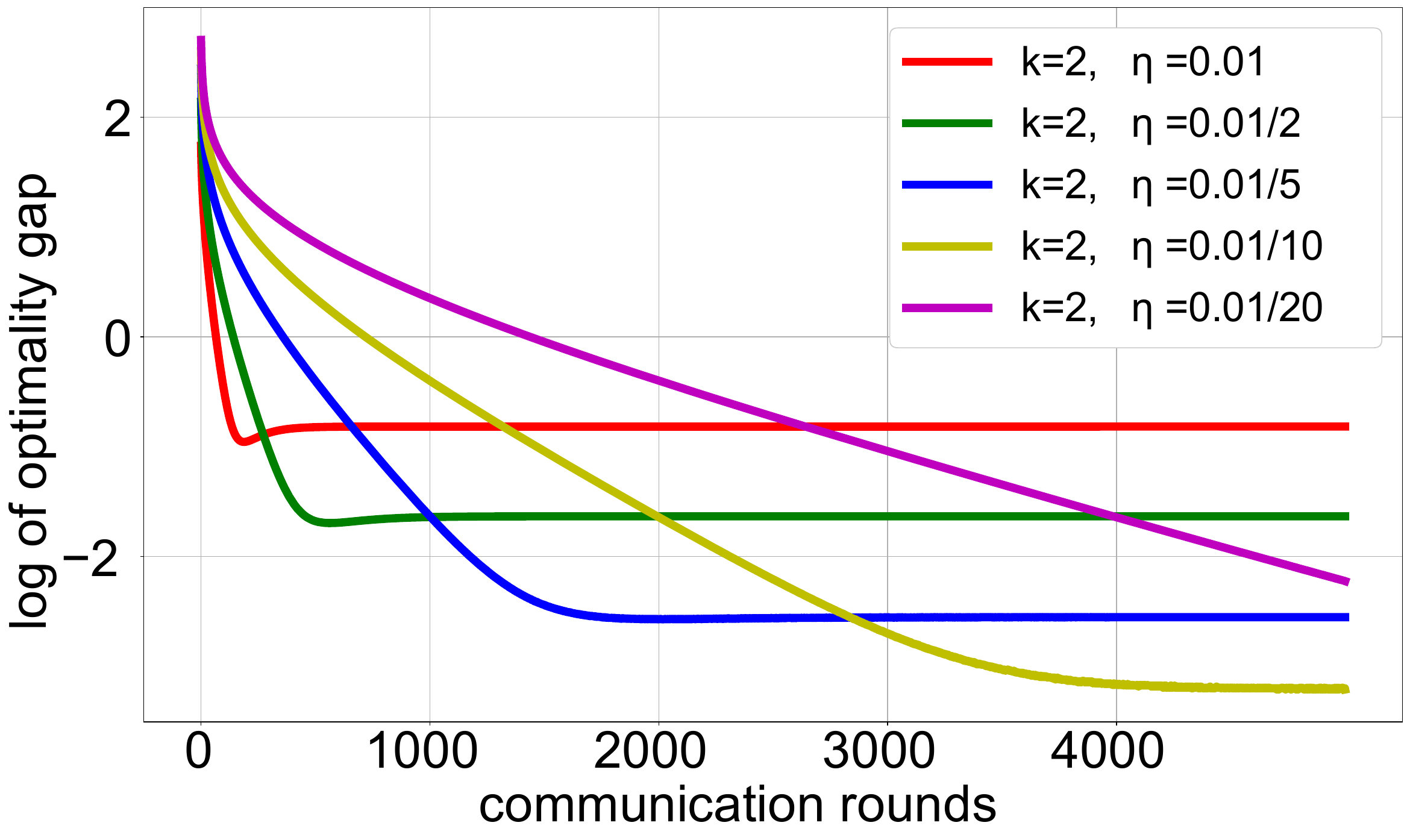}
    \caption{Convergence of \FA changing $k$ or $\eta$. Top: least squares; Bottom: logistic regression.}
    \label{fig:trade_off}
\end{figure}

In \Cref{fig:cifar10}, contrary to what we observe in the previous convex experiments where fixed learning rate with less local epochs leads to a better final solution at the expense of slower convergence, we do not see similar trade-offs in this nonconvex setting, (at least) given the computation we can afford. This is probably because the solution we get is still in early stage due to the combined complexity of CNN and CIFAR-10 dataset, compared to the relatively simple least squares and logistic regression we had above. 
\begin{figure}[H]
\centering
\includegraphics[height=5.6cm,width=0.5\columnwidth]{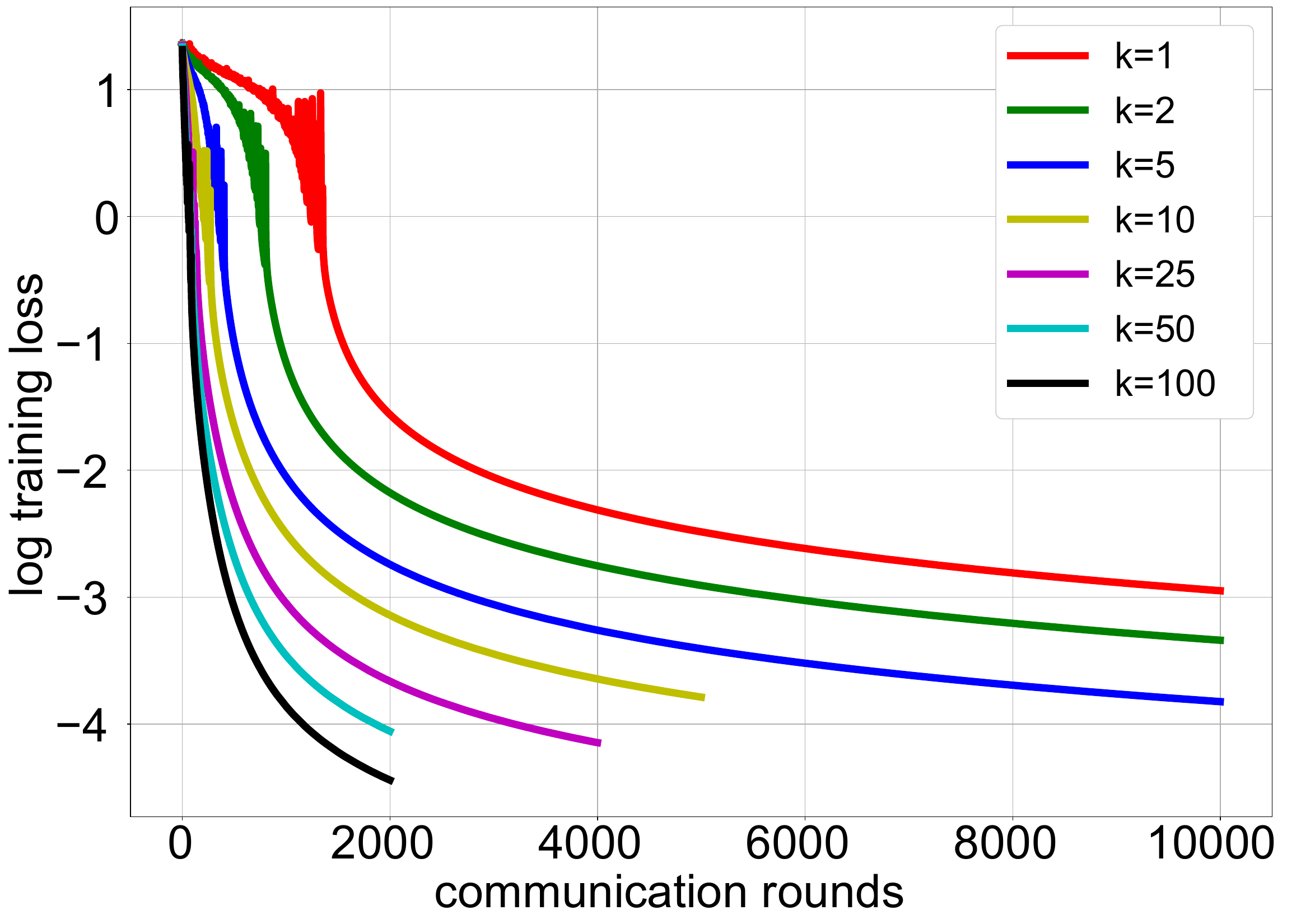}
\caption{Running \FA on CIFAR-10 with different number of local epochs $k$. The number of users is $10$, local learning rate $\eta=0.1$. Data points are sampled and distributed to users non-uniformly.} 
\label{fig:cifar10}
\end{figure}


\subsection{Effect of user sampling}
The grand scheme introduced in \eqref{eq:gFL-1}-\eqref{eq:gFL-3} assumes that all users participate in each communication round--full user participation; however, this might not be always the case.  So, we modify the grand scheme  to investigate the effect of user participation. The modified version, which is based on the centralized implementation of \FL algorithms (explained in \cref{sec:acc}), is given in Algorithm \ref{alg:splitting_usersampling}. The simulation results for least squares and logistic regression are shown in \Cref{fig:splitting_usersampling}.  From the results it can be observed that, with partial user participation, none of the aforementioned algorithms converge to the correct solution, especially \FA with $k=1$. Also, \FS becomes unstable when user participation is low ($p=0.5$), while other algorithms \eg \FPI remain stable.

\begin{algorithm}[H]
\DontPrintSemicolon
\footnotesize
\SetAlgoLined
\KwIn{$p$ (user participation probability), $\alpha_t, \beta_t, \gamma_t, \eta_t$}
\KwOut{$\bar\wbs$ (final global model)}
\SetKwProg{Server}{Server}{:}{}
\SetKwProg{Client}{ClientUpdate}{:}{}

 initialize $\ubs_0$ and $\zbs_0$ for all users

 \Server{(p)}{
 \For{each round $t = 1, 2, \ldots$}{
  $S_t \gets$ sample the set of present users, where each user participates with probability $p$
  
  \For{each client $\userind$ \textbf{in parallel}}{
  \eIf{$\userind \in S_t$}{
   \textbf{ClientUpdate}($\ubs_{i,t}$, $i$)
   }{
   $\zbs_{i, t+1} \gets \zbs_{i, t}$
  }
  }
  
 \vspace{1mm}
 $\bar\wbs \gets \prox[]{H}(\zbs_{t+1})$ \tcp*{over $\zbs_{t+1}$ of present clients}

 \vspace{1mm}
 $\wbs_{t+1} = (1-\beta_t)\zbs_{t+1} + \beta_t \bar\wbs$ 
 
 $\ubs_{t+1} = (1-\gamma_t) \ubs_{t} + \gamma_t \wbs_{t+1}$

  }
 }

\Client{($\ubs_{i,t}$, $i$)}{
compute $\prox[\eta_t]{f_i}(\ubs_{i,t})$ \tcp*{using  standard gradient descent}

$\zbs_{i, t+1} \gets (1-\alpha_t)\ubs_{i, t} + \alpha_t \prox[\eta_t]{f_i}(\ubs_{i,t})
$
}  
 \caption{Federated learning with user sampling.}
 \label{alg:splitting_usersampling}
\end{algorithm}

\begin{figure}[H]
    \centering
    \includegraphics[height=4.6cm,width=0.49\columnwidth]{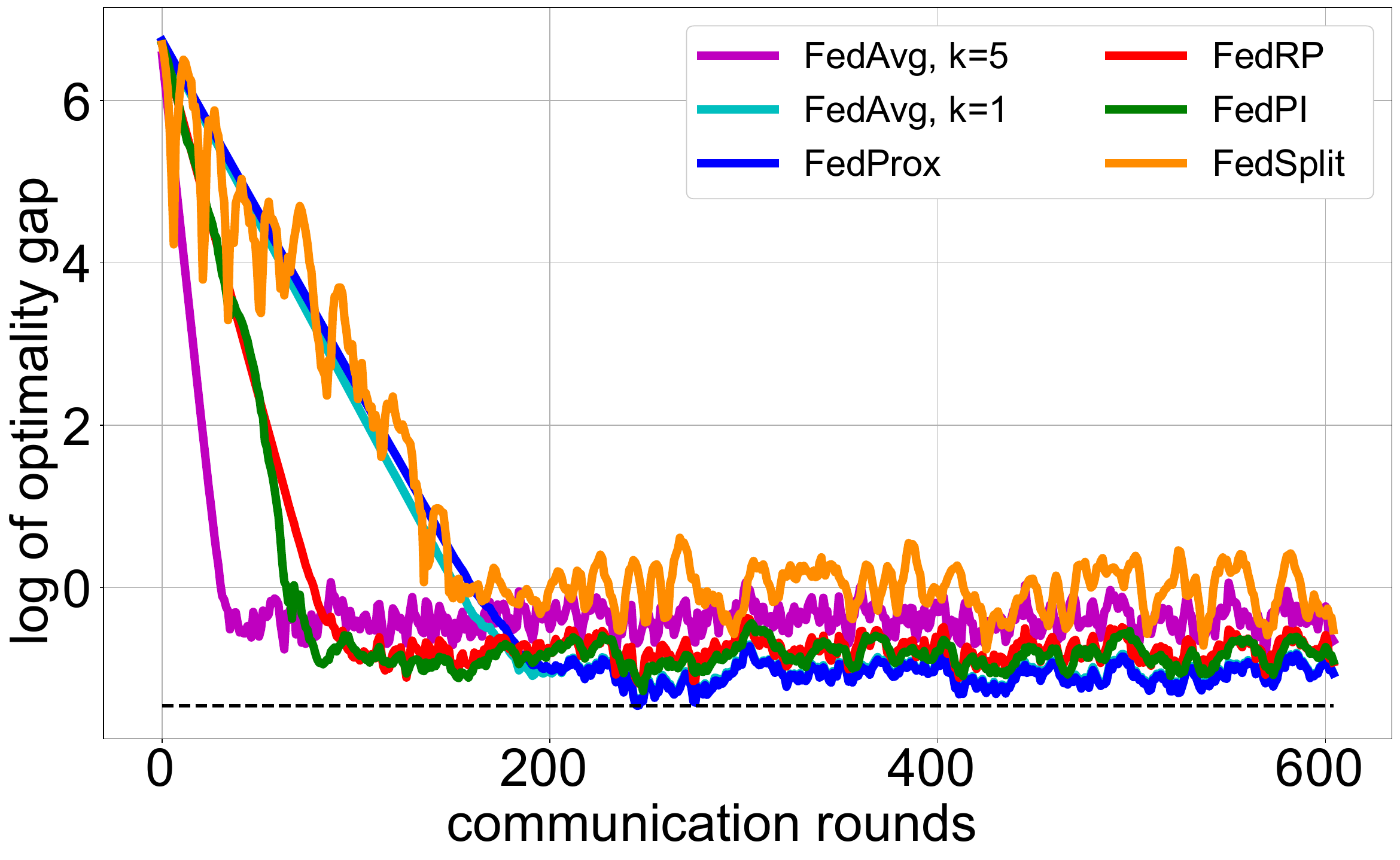}
    \includegraphics[height=4.6cm,width=0.49\columnwidth]{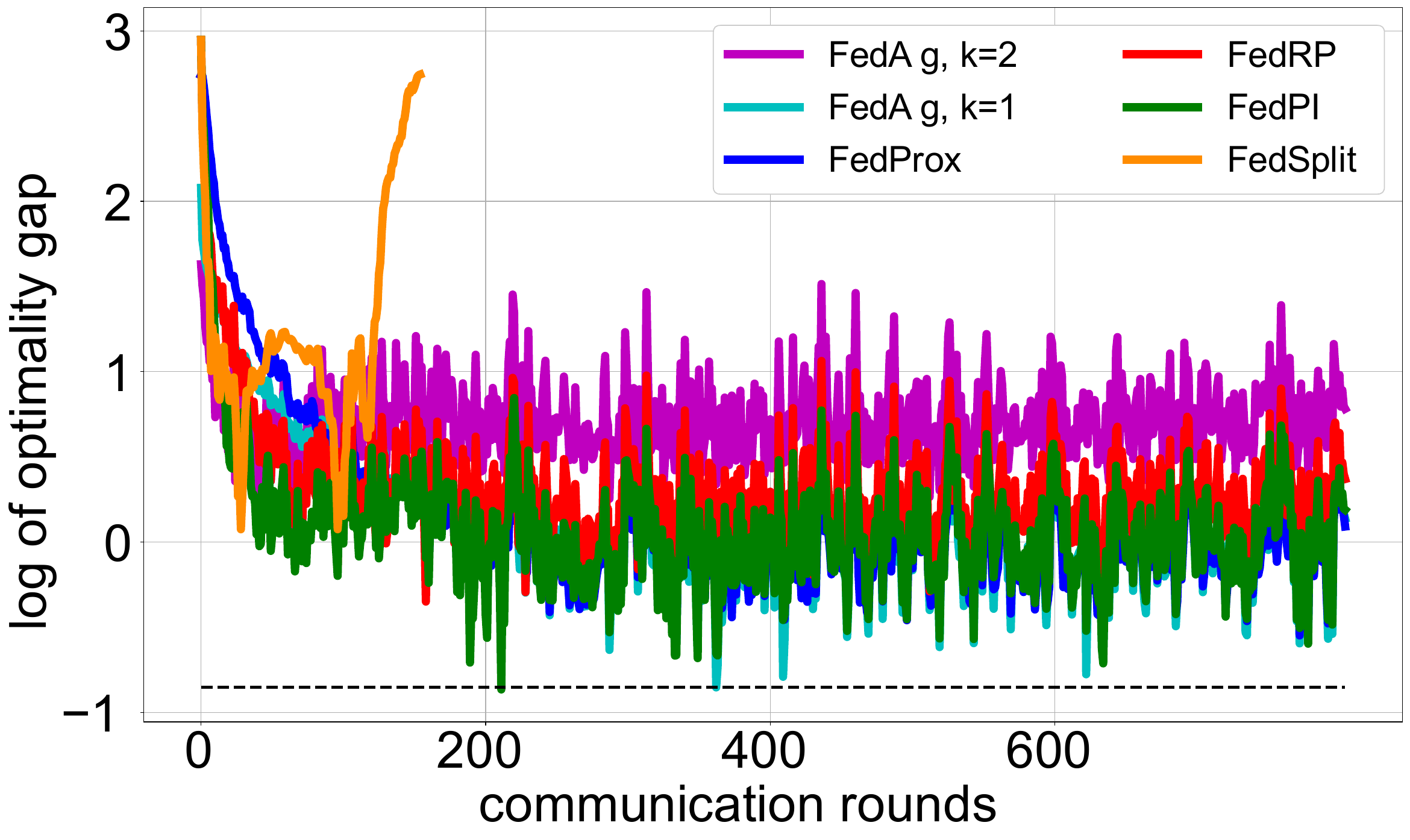}
    \includegraphics[height=4.6cm,width=0.49\columnwidth]{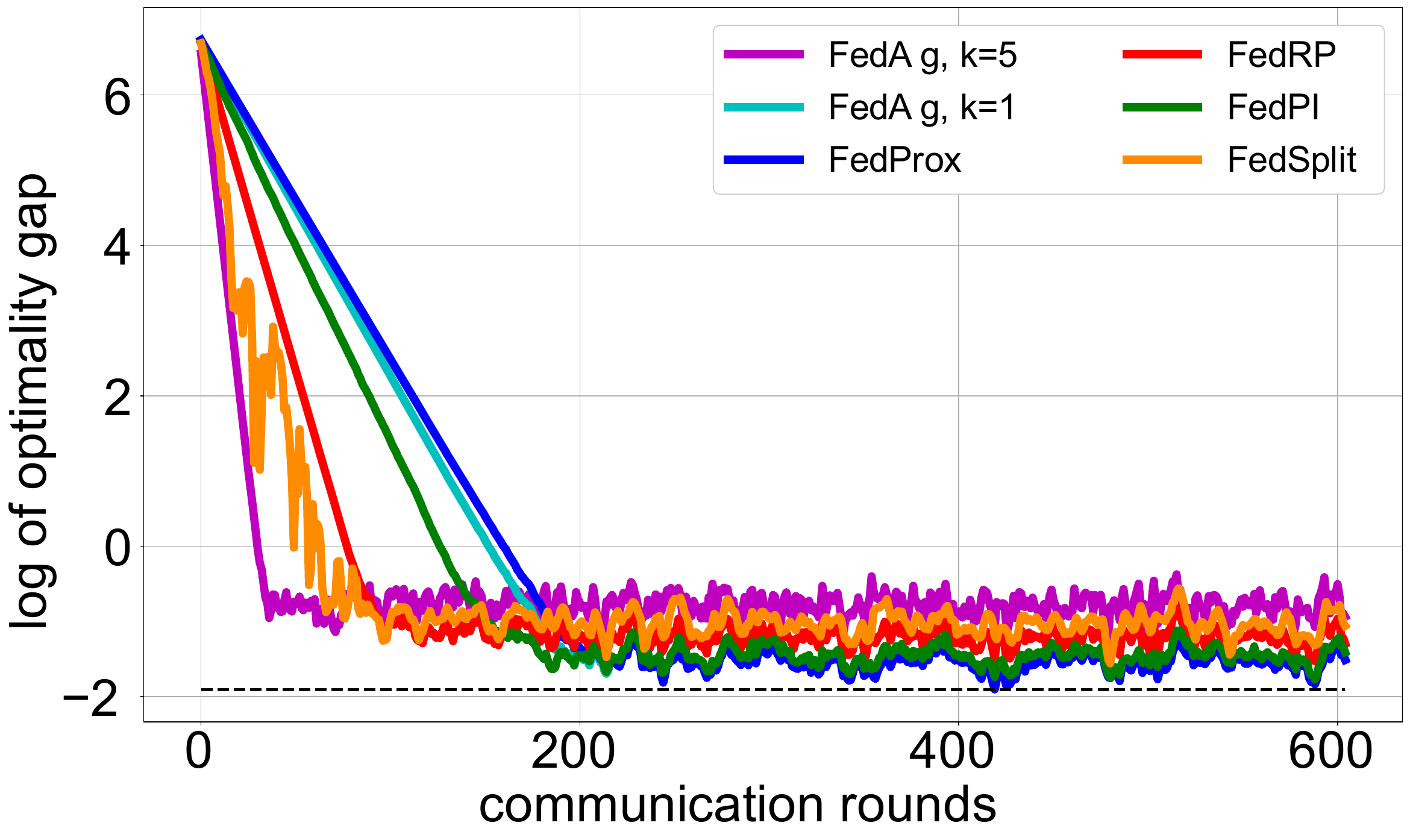}
    \includegraphics[height=4.6cm,width=0.49\columnwidth]{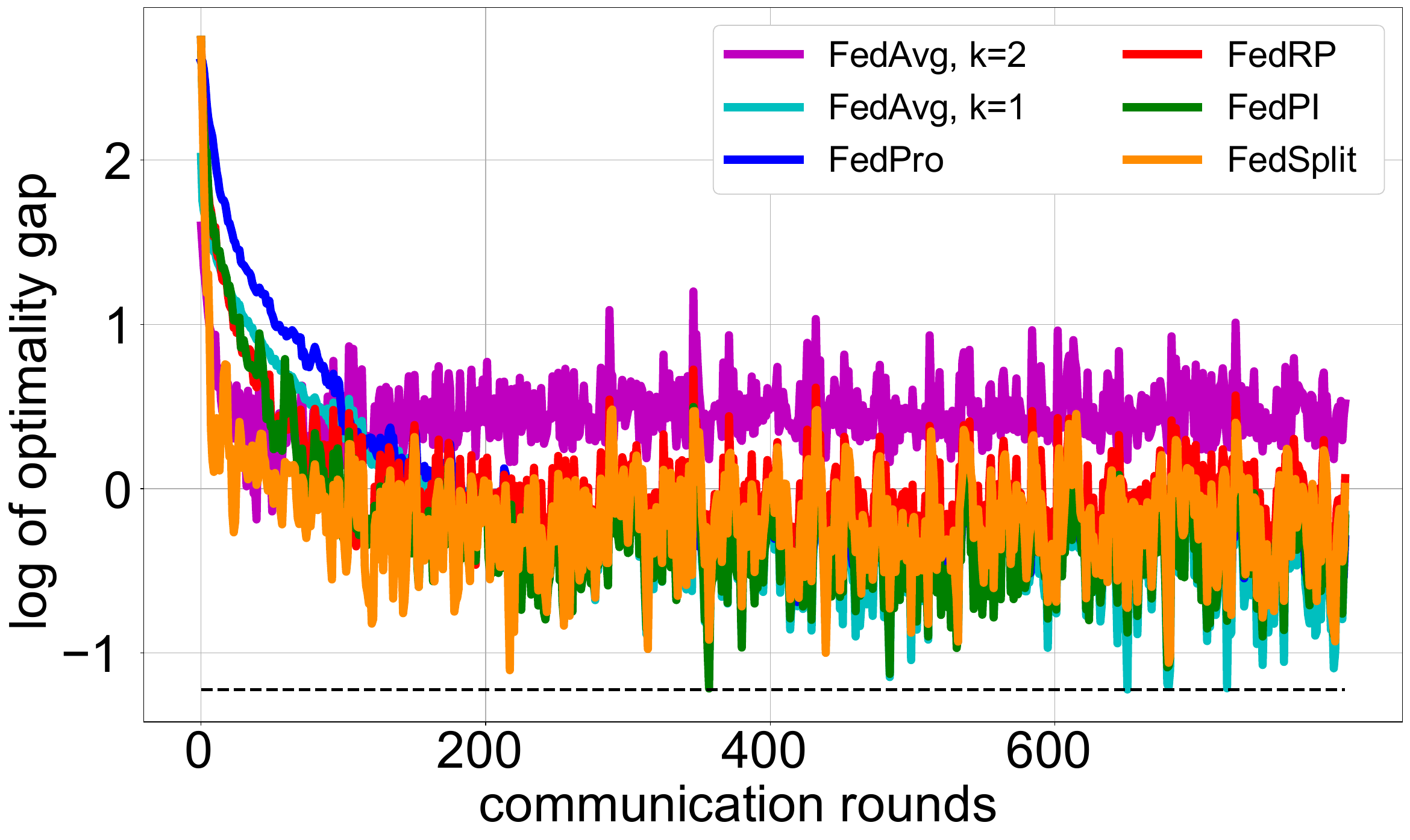}
    \caption{Effect of user sampling. Top-Left: least squares with $p=0.5$ (participation probability); Top-Right: logistic regression with $p=0.5$; Bottom-Left: least squares with $p=0.7$; Bottom-Right: logistic regression with $p=0.7$.}
    
    \label{fig:splitting_usersampling}
\end{figure}

\newpage
\subsection{Effect of using mini-batch gradient descent locally}
\Cref{fig:splitting_minibatch} shows the results when each user uses mini-batch gradient descent to update its model for different splitting algorithms. As in the grand scheme \eqref{eq:gFL-1}-\eqref{eq:gFL-3}, we assume full user participation. Algorithm \ref{alg:splitting_SGD} summarizes the implementation details. As can be observed in the convex setting experiments, with mini-batch gradient descent used locally, none of the aforementioned algorithms converge to the correct solution. Furthermore, convergence speed of the algorithms decreases compared to when users use standard gradient descent. Also, \FA, even with $k=1$, does not outperform the other splitting algorithms, and \FS and \FPI converge to better solutions. On the other hand, in the nonconvex setting, \FA with $k=100$ achieves better solutions compared to the other algorithms.

\begin{algorithm}[H]

\footnotesize
\SetAlgoLined
\DontPrintSemicolon
\footnotesize
\SetAlgoLined
\KwIn{$B$ (local minibatch size), $\alpha_t, \beta_t, \gamma_t, \eta_t$}
\KwOut{$\bar\wbs$ (final global model)}
\SetKwProg{Server}{Server}{:}{}
\SetKwProg{Client}{ClientUpdate}{:}{}

initialize $\ubs_0$ for each user $i$

\Server{}{ 
 \For{each round $t = 1, 2, \ldots$}{
  
  \For{each client $\userind$ \textbf{in parallel}}{
  \textbf{ClientUpdate}($\ubs_{i,t}, i$)
   
  }
  
 \vspace{1mm}
 $\bar\wbs \gets \prox[]{H}(\zbs_{t+1})$

 \vspace{1mm}
 $\wbs_{t+1} = (1-\beta_t)\zbs_{t+1} + \beta_t \bar\wbs$ 
 
 $\ubs_{t+1} = (1-\gamma_t) \ubs_{t} + \gamma_t \wbs_{t+1}$

 }
 }

\Client{($\ubs_{i,t}, i$)}{
  compute $\prox[\eta_t]{f_i}(\ubs_{i,t})$ \tcp*{using minibatch gradient descent with batch size $B$} 
  
  $\zbs_{i,t+1} \gets (1-\alpha_t)\ubs_{i,t} + \alpha_t \prox[\eta_t]{f_i}(\ubs_{i,t})
  $  
 }
 \caption{Mini-batch Gradient Descent for local computations.}
 \label{alg:splitting_SGD}
\end{algorithm}

\begin{figure}[H]
\centering
    \includegraphics[width=0.49\columnwidth]{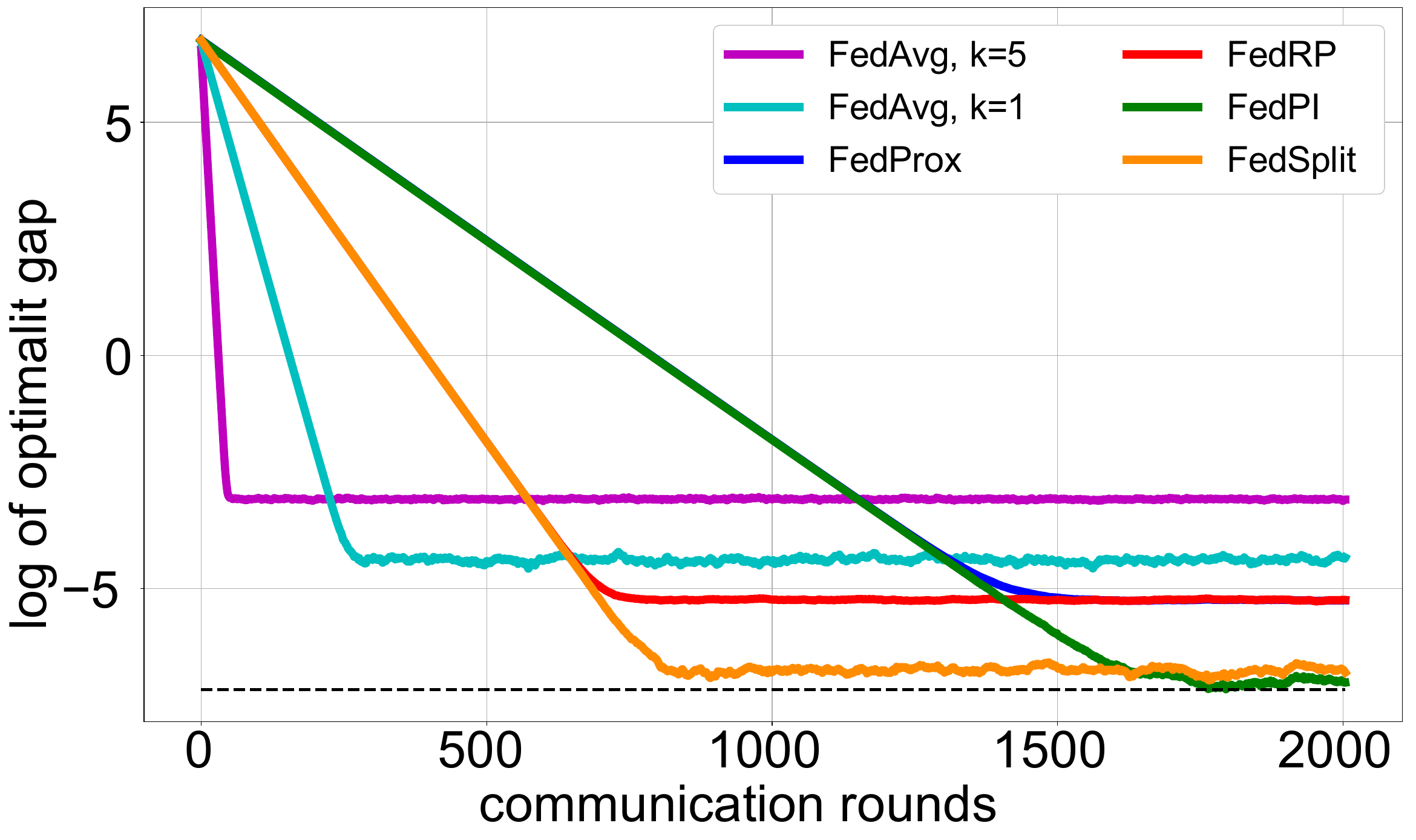}
    \includegraphics[width=0.49\columnwidth]{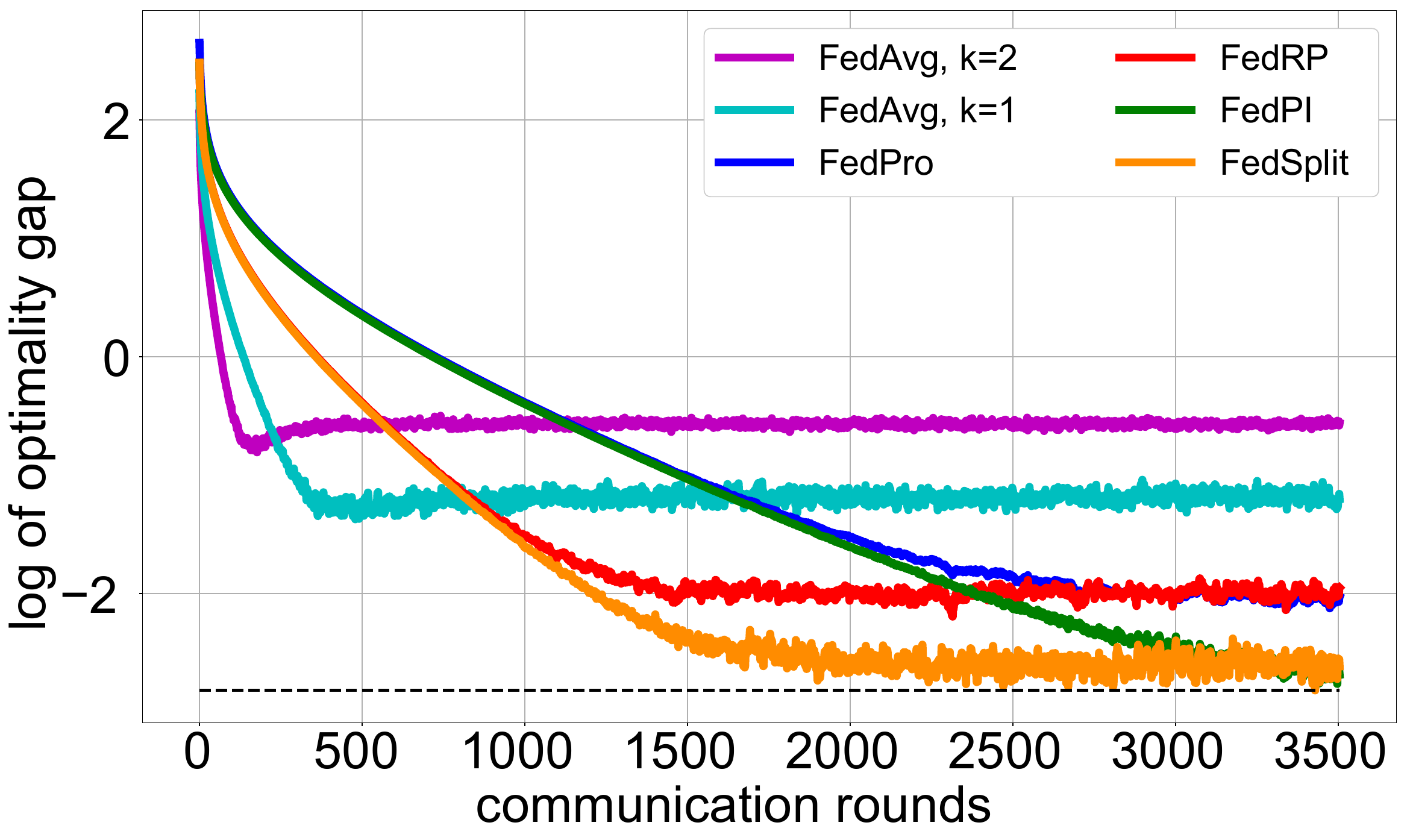}
    \includegraphics[width=0.49\columnwidth]{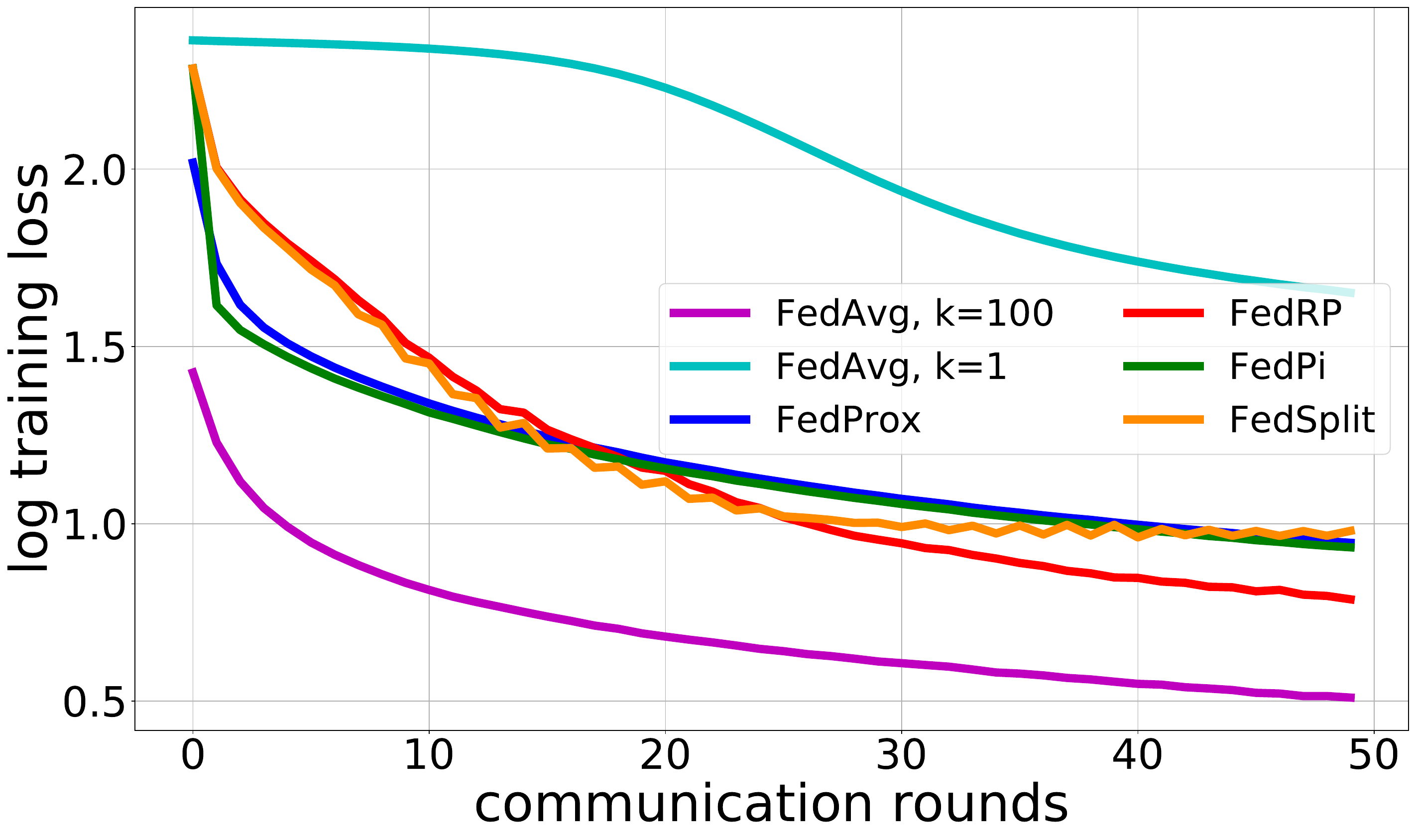}
    \caption{The effect of using minibatch gradient descent on the splitting algorithms. Local datasets are divided into $5$ minibatches.  Top-Left: least squares with  $B = 1000$. Top-Right: logistic regression with $B = 200$. Bottom: nonconvex CNN on MNIST with $B = 500$.}
    
    \label{fig:splitting_minibatch}
\end{figure}

\subsection{Results with error bars for \Cref{fig:extrap}, Right}
\paragraph{Implementation details of Anderson acceleration for nonconvex CNN:} In practice, we found that applying the acceleration algorithm to  $\prox[]{H}(\zbs_{t+1})$ in \eqref{eq:gFL-2} rather than $\ubs_{t+1}$ in \eqref{eq:gFL-3} leads to a more stable algorithm. So, we  applied acceleration to $\prox[]{H}(\zbs_{t+1})$, and used the result to find $\ubs_{t+1}$ and $\wbs_{t+1}$ for the nonconvex model.   

\Cref{fig:extrap_error} shows the effect of Anderson acceleration on the nonconvex CNN with memory size $\tau = 10$. \FR and \FS are less stable and Anderson acceleration actually made them worse while \FA, \FP and \FPI converge much faster with our implementation of Anderson acceleration. The deviation among 4 different runs is shown as the shaded area.

\begin{figure}[H]
\centering
    \includegraphics[width=0.32\columnwidth]{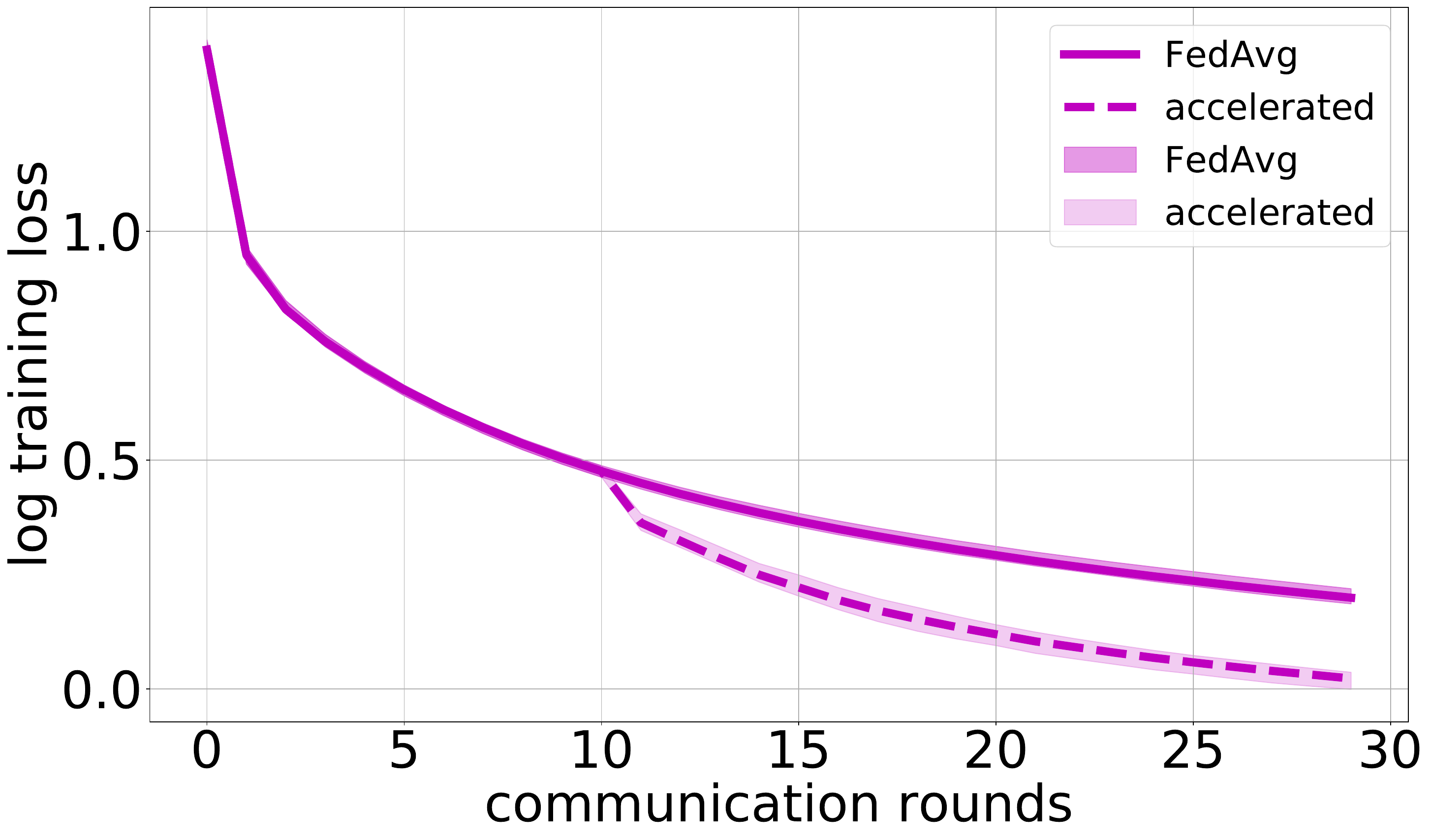}
    \includegraphics[width=0.32\columnwidth]{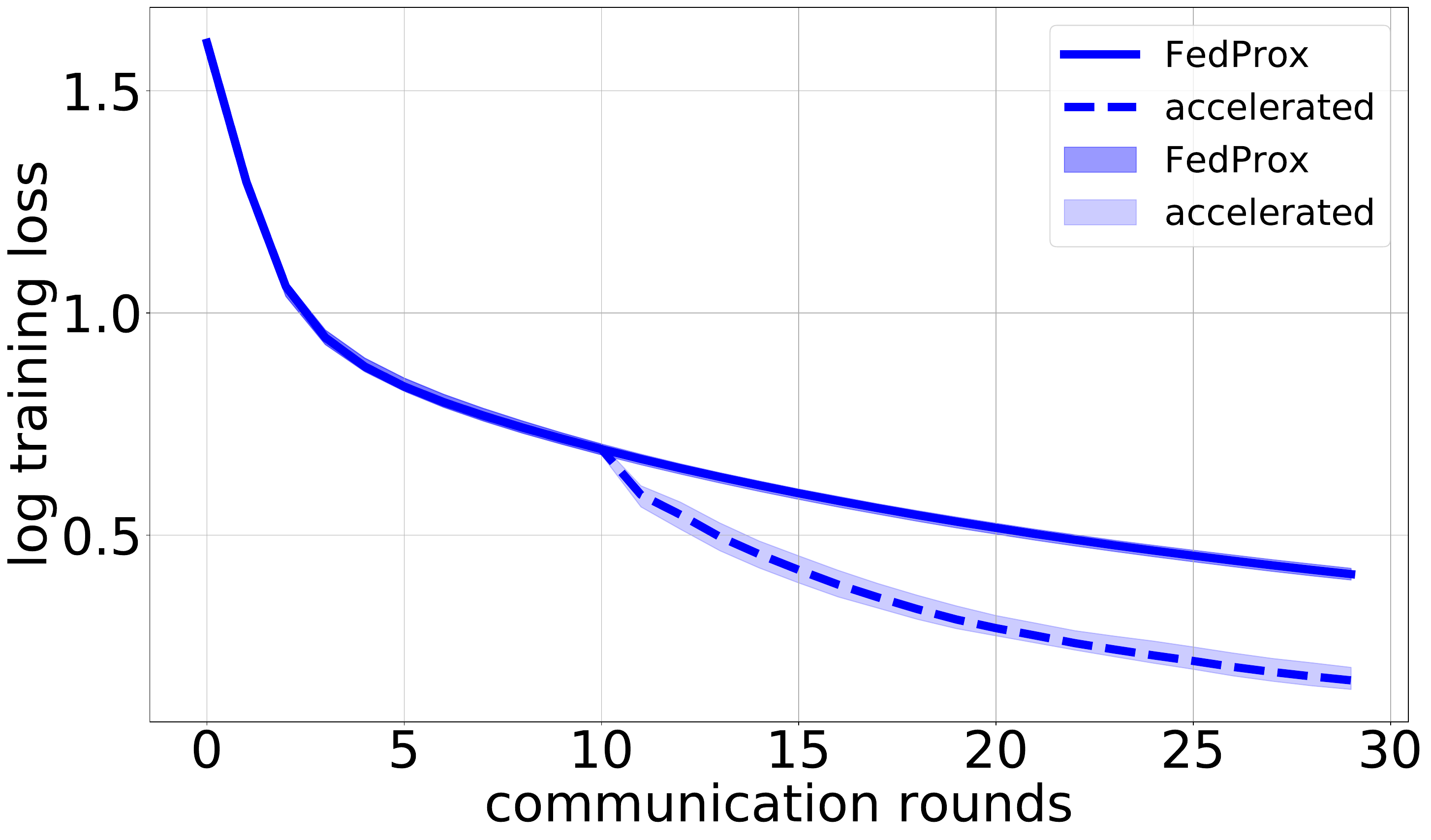}
    \includegraphics[width=0.32\columnwidth]{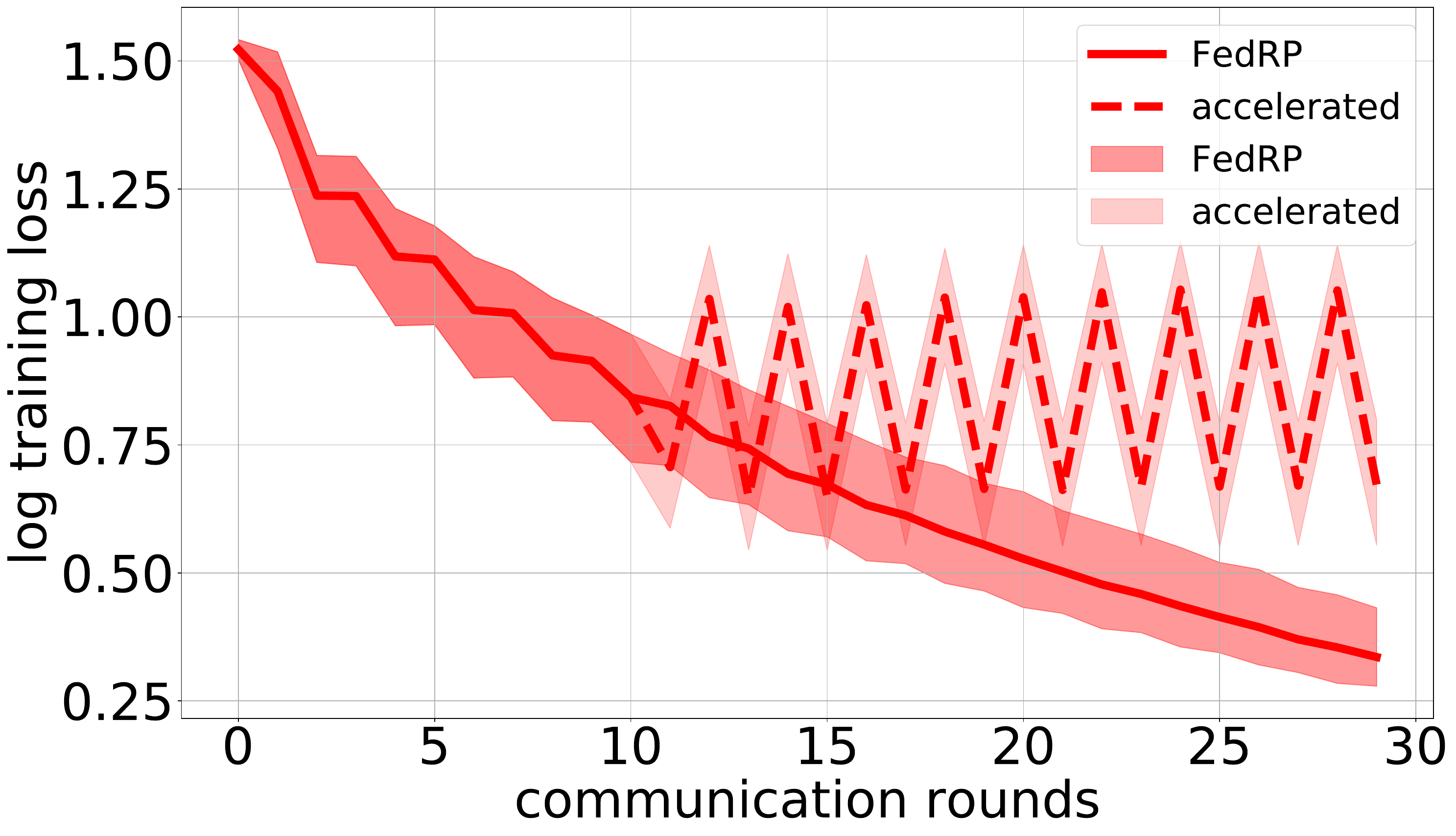}
    \includegraphics[width=0.32\columnwidth]{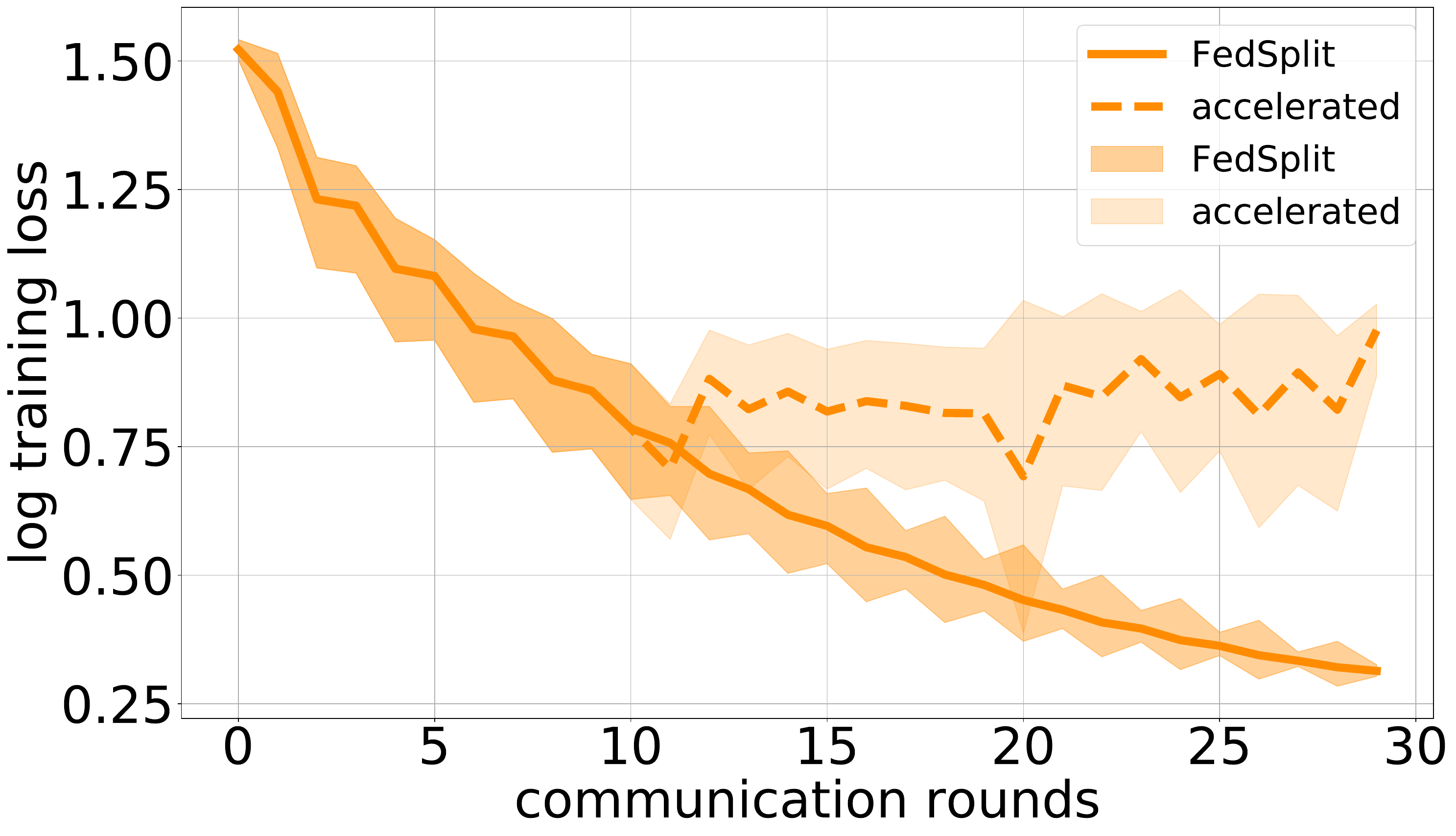}
    \includegraphics[width=0.32\columnwidth]{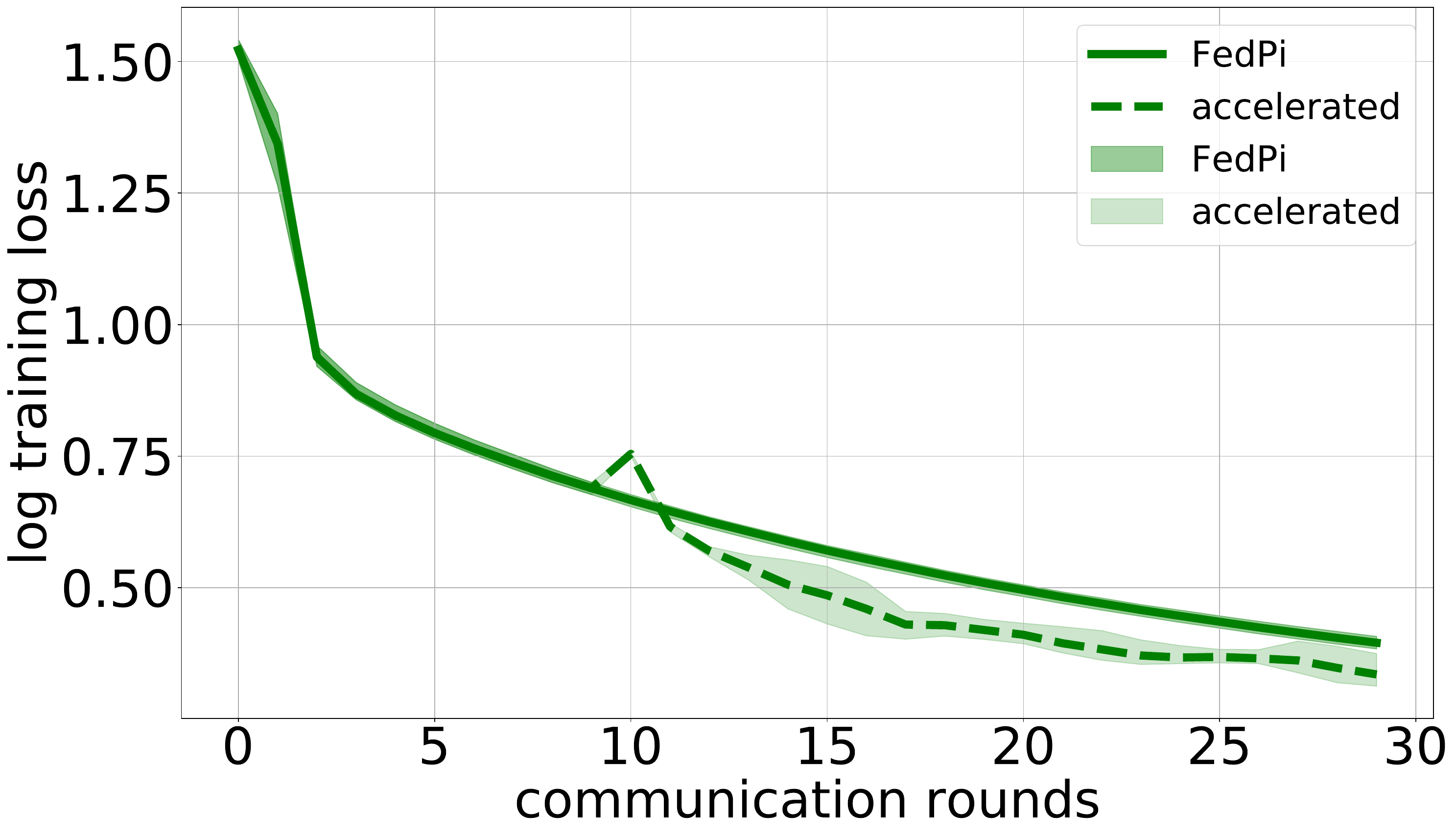}
    \caption{Detailed results for Anderson acceleration for nonconvex optimization problem of CNN training with $\tau=10$. Dashed lines are the accelerated results.}
    \label{fig:extrap_error}
\end{figure}

\section{Proofs}
\label{sec:app-proof}

We first recall the following convenient result: 
\begin{lemma}[\cite{BrezisBrowder76}, {\cite[Theorem 5.36]{Bauschke2011}}]
\label{thm:GFP}
Let $\{\wv_t\}$ and $\{\zv_t\}$ be two sequences in $\RR^d$, $\emptyset\ne \Fm\subseteq\RR^d$, and $W_k := \cl\conv{\wv_t : t\geq k}$. Suppose that 
\begin{enumerate}
\item for all $\wv \in \Fm$, $\|\wv_t - \wv\|_2^2 \to p(\wv) < \infty$;
\item for all $k$, $\dist(\zv_t, W_k) \to 0$ as $t \to \infty$.
\end{enumerate}
Then, the sequence $\{\zv_t\}$ has at most one limit point in $\Fm$. Therefore, if additionally 
\begin{enumerate}[resume]
\item all limit points of $\{\zv_t\}$ lie in $\Fm$,
\end{enumerate}
then the whole sequence $\{\zv_t\}$ converges to a point in $\Fm$.
\end{lemma}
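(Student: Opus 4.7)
The plan is to exploit the polarization identity to convert distance information from hypothesis (1) into inner-product convergence, and then transfer this from $\{\wv_t\}$ to $\{\zv_t\}$ via hypothesis (2). First I would observe that hypothesis (1) forces $\{\wv_t\}$ to be bounded (any $\wv\in\Fm$ works as a reference point), hence each $W_k$ is bounded; combined with hypothesis (2), this implies $\{\zv_t\}$ is bounded, so at least one limit point exists. The crux is then to show that any two limit points $\zv',\zv''\in \Fm$ of $\{\zv_t\}$ must coincide, after which the first conclusion (at most one limit point in $\Fm$) and the second conclusion (full convergence under assumption (3)) follow from a standard boundedness argument.

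For the uniqueness step, I would fix two such limit points $\zv',\zv''\in\Fm$ and write
\begin{align}
\langle \wv_t,\zv'-\zv''\rangle = \tfrac{1}{2}\bigl(\|\wv_t-\zv''\|_2^2-\|\wv_t-\zv'\|_2^2\bigr)+\tfrac{1}{2}\bigl(\|\zv'\|_2^2-\|\zv''\|_2^2\bigr).
\end{align}
By hypothesis (1) applied at $\wv=\zv'$ and $\wv=\zv''$, the right-hand side converges, so $\langle \wv_t,\zv'-\zv''\rangle\to L$ for some $L\in\RR$. The key — and most delicate — step is to promote this convergence from $\{\wv_t\}$ to $\{\zv_t\}$. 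Given $\varepsilon>0$, choose $k$ large enough that $|\langle \wv_s,\zv'-\zv''\rangle - L|<\varepsilon$ for all $s\ge k$. By linearity, the same bound holds for every finite convex combination of such $\wv_s$, and by continuity of the linear functional $\langle \cdot,\zv'-\zv''\rangle$ it extends to every $\wv^*\in W_k$. Hypothesis (2) now supplies, for each large $t$, some $\wv_t^*\in W_k$ with $\|\zv_t-\wv_t^*\|_2\to 0$, whence
\begin{align}
|\langle \zv_t,\zv'-\zv''\rangle-L|\le \varepsilon + \|\zv_t-\wv_t^*\|_2\cdot\|\zv'-\zv''\|_2,
\end{align}
and so $\langle \zv_t,\zv'-\zv''\rangle\to L$.

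Evaluating this limit along the two subsequences realizing $\zv_{t_j}\to\zv'$ and $\zv_{t'_j}\to\zv''$ gives $\langle \zv',\zv'-\zv''\rangle=L=\langle \zv'',\zv'-\zv''\rangle$, i.e.\ $\|\zv'-\zv''\|_2^2=0$, so $\zv'=\zv''$. This proves the uniqueness of limit points of $\{\zv_t\}$ in $\Fm$. For the final assertion, the boundedness established in the first step ensures $\{\zv_t\}$ has at least one limit point; if assumption (3) puts every limit point in $\Fm$, uniqueness forces it to be a single point, and a bounded sequence with a unique limit point converges to it. The main obstacle I anticipate is the transfer step for the inner product: it requires carefully using both the linear and continuous structure to pass from the tail $\{\wv_s:s\ge k\}$ to its closed convex hull $W_k$, and then exploiting the $k$-uniform closeness of $\zv_t$ to $W_k$ supplied by hypothesis (2); the rest is essentially bookkeeping on the polarization identity.
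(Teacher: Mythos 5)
Your proof is correct. Note that the paper does not prove this lemma at all --- it is imported verbatim from Brezis--Browder and \textcite{Bauschke2011} (Theorem 5.36) --- so there is no internal proof to compare against; your argument is essentially the standard one from those references: hypothesis (1) plus the polarization identity gives convergence of $\langle \wv_t,\zv'-\zv''\rangle$ for any two cluster points $\zv',\zv''\in\Fm$, the bound passes to $W_k$ because $\{\xv: |\langle\xv,\zv'-\zv''\rangle-L|\le\varepsilon\}$ is closed and convex and contains the tail $\{\wv_s:s\ge k\}$, hypothesis (2) transfers it to $\zv_t$, and evaluating along the two subsequences yields $\|\zv'-\zv''\|_2^2=0$. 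The boundedness bookkeeping (boundedness of $\{\wv_t\}$ from (1), hence of $W_k$ and of $\{\zv_t\}$ via (2), and the fact that a bounded sequence in $\RR^d$ with a unique cluster point converges) is also handled correctly, so the write-up stands as a complete, self-contained proof of the cited result.
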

Below, we use well-known properties about firm nonexpansions and Fej\'er monotone sequences, see the excellent book \cite{Bauschke2011} for background.

\fphom*
\begin{proof}
The proof is a simplification and application of \cite{Tseng92}. 

Consider all possible combinations of user participation, \ie let $\Ic := 2^{[m]} \setminus \emptyset$ be the power set of users $[m]:=\{1, \ldots, m\}$, with the empty set removed. For each $I \in \Ic$, define 
\begin{align}
\Tm_I = \Tm_{I, \eta} := \frac{1}{|I|}\sum_{i \in I} \prox[\eta]{f_i}.
\end{align}
For any $\wv\in\Fm$, we have $\wv = \prox[\eta]{f_i}\wv$ for all $i$, and hence $\Fm \subseteq \cap_{I\in\Ic} \mathrm{Fix}\Tm_I$ (equality in fact when we consider singleton $I$). 
Moreover, since we assume each $f_i$ to be convex, the mapping $\Tm_{I, \eta}$ is firmly nonexpansive. In fact, $\Tm_{I, \eta} = \prox[\eta]{f_I}$ for a so-called proximal average function $f_I$ \cite{Bauschke2011}, and hence it follows from \cite{Tseng00} that
\begin{align}
(1 \wedge \eta) \|\wv - \Tm_{I, 1}\wv \|_2 \leq \|\wv - \Tm_{I, \eta} \wv \|_2.
\end{align}
Since we assume $\liminf_t \eta_t := \eta > 0$, the aforementioned monotonicity allows us to assume w.l.o.g. that $\eta_t \equiv \eta > 0$. 

With this established notation we rewrite \FP simply as 
\begin{align}
\wv_{t+1} = \Tm_{I_t} \wv_t,
\end{align}
where $I_t \in \Ic$ can be arbitrary except for all $i$, $|\{t: i \in I_t\}| = \infty$. 

Applying the firm nonexpansiveness of $\Tm_{I_t}$, we have for any $\wv \in \Fm$:
\begin{align}
\| \wv_{t+1} - \wv \|_2^2 + \|\wv_t - \wv_{t+1}\|_2^2 \leq \|\wv_t - \wv\|_2^2,
\end{align}
whence 
\begin{align}
\| \wv_{t+1} - \wv \|_2^2 \leq \|\wv_t - \wv\|_2^2, 
\end{align}
\ie $\{\wv_t\}$ is Fej\'er monotone \wrt $\Fm$. Moreover, 
\begin{align}
\label{eq:asymreg}
\wv_t - \Tm_{I_t} \wv_t \to \zero.
\end{align}

Since $\{\wv_t\}$ is bounded and there are only finitely many choices for $I_t \in \Ic$, from \eqref{eq:asymreg} there exists
$\zv \in \cap_{J\in \Jc} \mathrm{Fix}\Tm_J$ that is a limit point of $\{\wv_t\}$ for some $\emptyset\ne \Jc \subseteq\Ic$. Take a subsequence $\wv_{t_k} \to \zv$ and let $s_k = \min\{ t \geq t_k: I_{t} \not\in \Jc \}$. Pass to a subsequence we may assume $I_{s_k} \equiv I$ and $\wv_{s_k} \to \wv$. Since $\wv_{s_k} - \Tm_{I} \wv_{s_k} \to \zero$ (see \eqref{eq:asymreg}) we have $\wv \in \mathrm{Fix}\Tm_I$. Since $\zv \in \cap_{J\in \Jc} \mathrm{Fix}\Tm_J$ and $I_t \in \Jc$ for $t\in[t_k, s_k)$ we have
\begin{align}
\|\wv_{s_k} - \zv \|_2 \leq \|\wv_{t_k} - \zv \|_2\to 0,
\end{align}
and hence $\zv = \wv \in \mathrm{Fix}\Tm_I = \cap_{i\in I} \mathrm{Fix} \prox[]{f_i}$. Since each user $f_i$ participates infinitely often, we may continue the argument to conclude that any limit point $\zv \in \Fm$. Applying \Cref{thm:GFP} we know the whole sequence $\wv_t \to \wv_\infty\in \Fm$.
\end{proof}
We remark that the convexity condition on $f_i$ may be further relaxed using arguments in \cite{AleynerReich09}.

\fp*
\begin{proof}
We simply verify \Cref{thm:GFP}.
Let $\ws \in \dom\fs \cap H$, $\av^* \in \partial\fs(\ws)$ and $\bv^*\in H^\perp$. 
Applying the firm nonexpansiveness of $\prox[\eta_t]{\fs}$:
\begin{align}
\|\prox[\eta_t]{\fs} \ws_t -  \ws \|_2^2 &= \|\prox[\eta_t]{\fs} \ws_t -  \prox[\eta_t]{\fs}(\ws+\eta_t \av^*) \|_2^2
\\
&\leq \|\ws_t - \ws - \eta_t \av^*\|_2^2 - \|\ws_t - \prox[\eta_t]{\fs} \ws_t - \eta_t \av^*\|_2^2
\\
\label{eq:bb-tmp}
&= \|\ws_t - \ws\|_2^2 - \|\ws_t - \prox[\eta_t]{\fs} \ws_t\|_2^2 + 2\eta_t \dual{\ws \!-\! \prox[\eta_t]{\fs} \ws_t}{\av^*},
\\
\|\prox[]{H}\prox[\eta_t]{\fs} \ws_t -  \ws \|_2^2
&\leq 
\|\prox[\eta_t]{\fs} \ws_t - \ws\|_2^2 - \|\prox[\eta_t]{\fs} \ws_t - \prox[]{H}\prox[\eta_t]{\fs} \ws_t\|_2^2 + 2\eta_t \dual{\ws-\prox[]{H}\prox[\eta_t]{\fs} \ws_t}{\bv^*}.
\end{align}
Summing the above two inequalities and applying the inequality $-\|\xv\|_2^2 + 2 \dual{\xv}{\yv} \leq \|\yv\|_2^2$ repeatedly:
\begin{align}
\label{eq:BB-qfm}
\|\prox[]{H}\prox[\eta_t]{\fs} \ws_t - \ws \|_2^2 \leq 
\|\ws_t - \ws\|_2^2 + 2 \eta_t \dual{\ws-\ws_t}{\av^*+\bv^*} + \eta_t^2 [\|\av^*+\bv^*\|_2^2 + \|\av^*\|^2_2].
\end{align}
Summing over $t$ and rearranging we obtain for any $\ws\in\dom \fs \cap H, \ws^* = \av^*+\bv^*$:
\begin{align}
2 \dual{\ws- \bar\ws_t}{\ws^*} +[\|\av^*\|_2^2 + \|\ws^*\|_2^2] \sum_{k=0}^t \eta_k^2 / \Lambda_t \geq ( \|\ws_{t+1} - \ws\|_2^2 - \|\ws_1-\ws\|_2^2 ) / \Lambda_t,
\end{align}
where $\Lambda_t := \sum_{k=1}^t \eta_k$. 
Using the assumptions on $\eta_t$ we thus know
\begin{align}
\liminf_{t\to\infty} ~~\dual{\ws-\bar\ws_t}{\ws^*} \geq 0.
\end{align}
Since $\ws$ is arbitrary and $\ws^*$ is chosen to be its subdifferential, 
it follows that any limit point of $\{\bar\ws_t\}$ is a solution of the original problem \eqref{eq:wFL}, \ie condition 3 of \Cref{thm:GFP} holds. If $\{\bar\ws_t\}$ is bounded, then $\Fm\ne\emptyset$, which we assume now. Let $\ws\in \Fm$ and set $\ws^* = \zero$ we know from \eqref{eq:BB-qfm} that $\{\ws_t\}$ is quasi-Fej\'er monotone \wrt $\Fm$ (\ie condition 1 in \Cref{thm:GFP} holds). Lastly, let $\bar\eta_{t,k} := \eta_k / \Lambda_t$ and we verify condition (II) in \Cref{thm:GFP}:
\begin{align}
\dist(\bar\ws_t, W_k) &\leq \left\| \bar\ws_t - \sum_{s = k}^t \bar\eta_{t,s} \ws_s /\sum_{\kappa= k}^t \bar\eta_{t,\kappa} \right\|_2 
\\
&\leq \sum_{\kappa= 0}^{k-1} \bar\eta_{t, \kappa} \left[\|\ws_\kappa\|_2 + \left\|\sum_{s=k}^t \bar\eta_{t,s} \ws_s\right\|_2 /\sum_{\kappa= k}^t \bar\eta_{t,\kappa} \right] 
\\
&\stackrel{t\to\infty}{\longrightarrow} 0,
\end{align}
since $\ws_t$ is bounded and for any $k$, $\bar\eta_{t,k} \to 0$ as $t\to\infty$. All three conditions in \Cref{thm:GFP} are now verified.
\end{proof}

We remark that the step size condition is tight, as shown by the following simple example:
\begin{example}
Let $f_{\pm}(w) = \tfrac{1}{2}(w\pm1)^2$. Simple calculation verifies that 
\begin{align}
\prox[\eta]{f_{\pm}}(w) = \tfrac{w\mp \eta}{1+\eta}.
\end{align}
Therefore, the iterates of \FP  for the two functions $f_+$ and $f_-$ are:
\begin{align}
w_{t+1} = \tfrac{w_t}{1+\eta_t} = \prod_{\tau=0}^t \tfrac{1}{1+\eta_\tau} w_0,
\end{align}
which tends to the true minimizer $w_\star = 0$ for any $w_0$ iff 
\begin{align}
\prod_{\tau=0}^t \tfrac{1}{1+\eta_\tau} \to 0 \iff \sum_t \eta_t \to \infty.
\end{align}
If we consider instead $f_+$ and $2f_-$, then
\begin{align}
2w_{t+1} = \tfrac{w_t-\eta_t}{1+\eta_t} + \tfrac{w_t + 2\eta_t}{1+2\eta_t}.
\end{align}
Passing to a subsequence if necessary, let $\eta_t \to \eta \ne 0$ and suppose $w_t \to w_\star = \tfrac13$, then passing to the limit we obtain
\begin{align}
2 = \tfrac{1-3\eta}{1+\eta}  + \tfrac{1+6\eta}{1+2\eta} 
&\iff 
2(1+\eta)(1+2\eta) = (1-3\eta)(1+2\eta)  + (1+6\eta)(1+\eta) 
\\
&\iff \eta = 0,
\end{align}
contradiction. Therefore, it is necessary to have $\eta_t \to 0$ in order for \FP to converge to the true solution $w_\star = \tfrac13$ on this example.
\end{example}

\rcon*
\begin{proof}
Since $\ra[\eta]{f} = \ra{\eta f}$ and $f$ is strongly convex iff $\eta f$ is so, 
w.l.o.g. we may take $\eta = 1$. 
Suppose $\ra{f}$ is $\gamma$-contractive for some $\gamma \in (0,1)$, \ie for all $\wv$ and $\zv$:
\begin{align}
\|\ra{f}\wv - \ra{f}\zv\|_2 \leq \gamma \cdot \|\wv - \zv\|_2. 
\end{align}
It then follows that the proximal map 
$\prox[]{f} = \tfrac{\id + \ra{f}}{2}$ is $\tfrac{1+\gamma}{2}$-contractive. Moreover, $\tfrac{2}{1+\gamma} \prox[]{f}$, being nonexpansive, is the gradient of the convex function $\tfrac{2}{1+\gamma} \env[]{f^*}$. Thus, $\tfrac{2}{1+\gamma}\prox[]{f}$ is actually firmly nonexpansive \cite[Corollary 18.17]{Bauschke2011}. But, 
\begin{align}
\tfrac{2}{1+\gamma}\prox[]{f} = \tfrac{2}{1+\gamma} (\id + \partial f)^{-1} = [\id + (\partial f - \tfrac{1-\gamma}{1+\gamma}\id)\circ \tfrac{1+\gamma}{2}\id]^{-1},
\end{align}
and hence $(\partial f - \tfrac{1-\gamma}{1+\gamma}\id)\circ \tfrac{1+\gamma}{2}\id$ is maximal monotone \cite[Proposition 23.8]{Bauschke2011}, \ie $f$ is $\tfrac{1-\gamma}{1+\gamma}$-strongly convex.
\end{proof}

\fr*
\begin{proof}
To see the first claim, let $\ws$ be a fixed point of \FR, \ie 
\begin{align}
\ws = \prox[]{H} \ra[\eta]{\fs}\ws = \prox[]{H}(2\prox[\eta]{\fs}\ws -\ws) = 2 \prox[]{H}\prox[\eta]{\fs}\ws - \prox[]{H}\ws = 2 \prox[]{H}\prox[\eta]{\fs}\ws - \ws, 
\end{align}
since $\prox[]{H}$ is linear and $\ws \in H$ due to the projection. Thus, $\ws = \prox[]{H}\prox[\eta]{\fs}\ws$. In other words, the fixed points of \FR are exactly those of \FP, and the first claim then follows from \cite{BauschkeCR05}, see the discussions in \Cref{sec:fp} and also \cite[Lemma 7.1]{BauschkeKruk04}.

To prove the second claim, we first observe that $\{\ws_t\}$ is Fej\'er monotone \wrt $\Fm$, the solution set of the regularized problem \eqref{eq:regFL}. Indeed, for any $\ws \in \Fm$, using the firm nonexpansiveness of $\prox[]{H}$ we have

\begin{align}
\|\ws_{t+1} - \ws\|_2^2 + \|\ra[\eta]{\fs}\ws_t - \ws_{t+1} - \ra[\eta]{\fs}\ws + \ws\|_2^2 \leq \|\ra[\eta]{\fs}\ws_t - \ra[\eta]{\fs} \ws\|_2^2 \leq \|\ws_t - \ws\|_2^2.
\end{align}
Summing and telescoping the above we know
\begin{align}
\ra[\eta]{\fs}\ws_t - \ws_{t+1} \to \ra[\eta]{\fs}\ws - \ws.
\end{align}
Let $\ws_\infty$ be a limit point of $\{\ws_t\}$ (which exists since $\ws_t$ is bounded). 
Since $\ra[\eta]{\fs}$ is idempotent, we have the range $\ran\ra[\eta]{\fs}$ equal to its fixed point $\mathrm{Fix}\ra[\eta]{\fs}$. 
When the users are homogeneous, we can choose $\ws = \ra[\eta]{\fs}\ws$.
Therefore, $\ws_\infty \in \ran \ra{\fs} = \mathrm{Fix}\ra{\fs}$. 
Since $\wv_\infty \in H$ by definition, $\wv_\infty \in \Fm$. Applying \Cref{thm:GFP} we conclude that the entire Fej\'er sequence $\{\ws_t\}$ converges to $\wv_\infty\in\Fm$.
\end{proof}
As shown in \cite{BauschkeKruk04}, a (closed) convex cone is obtuse iff its reflector is idempotent. To give another example: the univariate, idempotent reflector $\ra{f}(w) = (w)_+ :=  \max\{w, 0\}$ leads to
\begin{align}
\prox{f}(w) = \tfrac{w + \ra{f}w}{2} = \tfrac{w + (w)_+}{2} \implies f(w) = \tfrac12 (-w)_+^2.
\end{align}
Interestingly, the epigraph of the above $f$ is a convex set but not a cone and yet its reflector is idempotent. Therefore, the idempotent condition introduced here is a strict generalization of \cite{BauschkeKruk04}.

\end{document}